%% file: main.tex
\documentclass{article}

\PassOptionsToPackage{authoryear,square}{natbib}

 \usepackage[main, final]{neurips_2026}

\usepackage[utf8]{inputenc}
\usepackage[T1]{fontenc}
\usepackage{wrapfig}
\input{preambles}

\usepackage[colorlinks=true,linkcolor=blue,citecolor=blue,urlcolor=blue]{hyperref}
\usepackage[nameinlink,noabbrev]{cleveref}

\title{Asymptotically Optimal Best Arm Identification with Fixed-Budget under Differential Privacy}
\author{%
  Keqin Chen\\
  Institute of Data Science\\
  National University of Singapore\\
  \texttt{keqin.chen@u.nus.edu} \\
\And
  Jie Bian\thanks{Corresponding authors.} \\
  Department of Electrical and Computer Engineering \\
  National University of Singapore \\
  \texttt{jiebian@u.nus.edu}\\
\And
  Yulian Wu\footnotemark[1] \\
  Department of Mathematics \\
  National University of Singapore \\
 \texttt{yulian.wu@nus.edu.sg} \\
\And
  Vincent Y.~F.~Tan \\
  Department of Mathematics \\
  Department of Electrical and Computer Engineering \\
  National University of Singapore \\
  \texttt{vtan@nus.edu.sg} \\
}

\begin{document}

\maketitle

\begin{abstract}
\input{Sections/abstract}
\end{abstract}

\input{Sections/introduction}
\input{Sections/related_work}
\input{Sections/problem_setting}

\input{Sections/lower_bound}
\input{Sections/methodology}
\input{Sections/upper_bound}
\input{Sections/experiment}
\input{Sections/conclusion}

\newpage
\begin{ack}
Y.~Wu and V.~Y.~F.~Tan  are supported by a Singapore Ministry of Education AcRF Tier~2 grant (A-8004062-00-00). V.~Y.~F.~Tan is also supported by   a Singapore National Research Foundation Investigatorship (A-8004657-00-00). We thank the anonymous reviewers and area chair for their constructive feedback and thoughtful suggestions, which helped improve this paper.
\end{ack}

\nocite{arspy_docs,gilks1992ars,unuran_manual}
\bibliographystyle{plainnat}
\bibliography{reference}

\appendix

\input{Appendix/table_notations}

\input{Appendix/more_related_work}
\input{Appendix/proof_of_lipschitz}

\input{Appendix/lower_bound_proof}
\input{Appendix/laplace_tree}

\input{Appendix/privacy_proof}
\input{Appendix/upper_bound_proof}
\input{Appendix/comparison}

\input{Appendix/good_events_proof}
\input{Appendix/discussion_sampling}
\input{Appendix/additional_experiments}

\input{Appendix/future_work}

\newpage
\input{checklist.tex}

\end{document}

%% file: preambles.tex
\usepackage{amsfonts,amsmath,amsthm,amssymb}
\usepackage{graphicx}
\usepackage{algorithm}
\usepackage{algpseudocode}
\usepackage{enumerate}
\usepackage{xcolor}
\usepackage{booktabs}
\usepackage{nicefrac}
\usepackage{microtype}
\usepackage{url}
\usepackage{enumitem}

\DeclareMathOperator*{\argmax}{arg\,max}

\newtheorem{theorem}{Theorem}[section]

\newtheorem{lemma}[theorem]{Lemma}

\newtheorem{corollary}[theorem]{Corollary}
\theoremstyle{definition}
\newtheorem{definition}[theorem]{Definition}

\theoremstyle{remark}
\newtheorem{remark}[theorem]{Remark}
\theoremstyle{definition}
\newtheorem{sublemma}{Lemma}[theorem]

\newcommand{\TV}{\mathrm{TV}}
\newcommand{\KL}{\mathrm{KL}}
\newcommand{\kl}{\mathrm{kl}}
\newcommand{\Alt}{\mathrm{Alt}}
\newcommand{\rmd}{\mathrm{d}}
\newcommand{\coloneqq}{:=}

%% file: Sections/abstract.tex
Best arm identification under differential privacy is a pure-exploration problem in which both statistical efficiency and privacy protection must be achieved simultaneously. We study fixed-budget best arm identification for bandits under pure $\epsilon$-differential privacy, where the learner must recommend an arm after a prescribed sampling budget while protecting the full transcript. We prove that the optimal exponential decay rate of the error probability is upper bounded by an instance-dependent privacy-aware transportation exponent that differs from the analogous quantity used to characterize the stopping time in fixed-confidence analysis by Jourdan and Azize [2025]. 
Guided by this exponent, we propose \textsc{AO-Pri-BAI}, an adaptive algorithm that maintains private running estimates through Laplace-tree mechanisms and learns a sampling design through a min--max interaction between hard alternatives and arm allocations. We prove that \textsc{AO-Pri-BAI} satisfies pure \(\epsilon\)-differential privacy. We also establish that the exponent of the failure probability of \textsc{AO-Pri-BAI} matches the privacy-aware benchmark. Numerical studies show that even in the non-asymptotic setting, \textsc{AO-Pri-BAI} outperforms benchmark algorithms on various instances, complementing the theoretical analyses. 

%% file: Sections/introduction.tex
\section{Introduction}\label{sec:introduction}
Multi-armed bandits form a standard model for sequential decision-making under uncertainty, with early motivation from adaptive experimentation and clinical trials~\citep{thompson1933likelihood,lattimore2020banditalgorithms,orabona2019modern}.  In the pure-exploration setting, the learner is evaluated not by the cumulative reward collected during the experiment, but by the quality of the final recommendation.  Best-arm identification (BAI) is the canonical example: the learner adaptively samples arms and ultimately recommends the arm with the largest mean reward~\citep{evendar2006actionelimination,audibert2010bestarm}. This problem arises naturally in applications such as clinical trials, online experiments, recommendation systems, and user studies, where the final decision must be statistically reliable.

BAI is commonly studied under two performance criteria.  In the fixed-confidence regime, the learner targets a prescribed risk level and seeks to minimize sample complexity.  In the fixed-budget regime, which is our focus, the sampling horizon is fixed in advance, and performance is measured by the probability of recommending a suboptimal arm.  The two regimes have different information-theoretic behavior~\citep{kaufmann2016complexitybai}; in fixed-budget BAI, the central object is the exponential decay rate of the error probability as the budget grows~\citep{carpentier2016fixedbudgetlower}.

In many applications of adaptive experimentation, the observations collected during the experiment are sensitive.  Differential privacy (DP) provides a formal stability guarantee limiting the influence of any individual observation~\citep{dwork2014foundationsdp}.  Privacy is especially delicate in bandit problems: even if raw rewards are never released, the sequence of chosen arms can reveal information because future actions depend on past observations.  This transcript-level issue has been emphasized in private bandits and private BAI~\citep{shariff2018differentiallyprivatecontextuallinear,azize2022dpbandits,chen2024fixed,jourdan2025optimalbestarmidentification}.  We therefore require pure \(\epsilon\)-DP for the full adaptive transcript, including both the sampling trajectory and the final recommendation.

This paper studies fixed-budget BAI under pure \(\epsilon\)-DP.  Our starting point is that privacy changes the information geometry of the identification problem.  In non-private BAI, instance hardness is often expressed through a max--min comparison between a sampling design and the most confusing alternative instance~\citep{kaufmann2016complexitybai,garivier2016optimalbai,li2023optimalexplorationharderthompson}.  Under pure DP, distinguishability is constrained not only by stochastic uncertainty but also by the stability imposed by privacy.  Recent fixed-confidence work shows that this leads to privacy-aware transportation costs involving both KL divergence and total variation scaled by the privacy parameter~\citep{azize2023dpbaifixedconfidence,azize2024dpbai,jourdan2025optimalbestarmidentification}. Our fixed-budget exponent is related but not identical: the lower bound evaluates the transportation cost from the alternative instance to the true instance, reversing the true-to-alternative order used in the fixed-confidence characteristic time. Since the KL component is asymmetric, the resulting exponent is a distinct instance-dependent quantity.

Algorithmically, we propose \textsc{AO-Pri-BAI}, an adaptive procedure designed around this privacy-aware exponent.  The algorithm maintains private reward estimates using Laplace-tree continual release~\citep{dwork2010panprivate}, generates hard private alternatives from the current private estimate, and updates a sampling design using AdaHedge \citep{rooij2014adahedge}. The interaction can be viewed as a private min-max procedure: a min-learner searches for alternatives that remain hard to distinguish, while a max-learner allocates samples to improve distinguishability against those alternatives.

\paragraph{Contributions.}
Our contributions are as follows.  First, we formulate a fixed-budget BAI problem under pure \(\epsilon\)-DP and introduce a privacy-aware transportation exponent whose argument order differs from the fixed-confidence counterpart.  Second, we prove an information-theoretic lower bound showing that this exponent upper is the best possible asymptotic decay rate achievable by any consistent private policy with a limiting allocation.  Third, we propose \textsc{AO-Pri-BAI}, an adaptive algorithm that combines continual-release private statistics, privacy-aware challenger generation, and an AdaHedge sampling-design learner.  Fourth, we prove that \textsc{AO-Pri-BAI} is pure \(\epsilon\)-DP for the full adaptive transcript.  Finally, we establish an asymptotic error probability upper bound with exponent matching the optimal exponent proved in the lower bound, and empirically compare with a fixed-budget adaptation of DP-TT in \citet{jourdan2025optimalbestarmidentification}, and a uniform private sampling baseline.

%% file: Sections/related_work.tex
\section{Related Work}\label{sec:related_work}

This section briefly reviews the literature that are needed to position the paper; an extended discussion is deferred to Appendix~\ref{apx:more_related_work}.  The main point is that our problem combines three requirements that are usually treated separately: fixed-budget pure exploration, transcript-level differential privacy, and instance-dependent allocation against confusing alternatives.

\paragraph{Fixed-budget error exponents.}
Fixed-confidence and fixed-budget BAI should be viewed as distinct statistical regimes rather than two versions of the same objective.  In fixed confidence, the central quantity is sample complexity at a prescribed risk level; in fixed budget, the central quantity is the decay rate of the final error probability.  The complexity comparison of Kaufmann, Capp\'e, and Garivier~\citep{kaufmann2016complexitybai} motivates treating the fixed-budget exponent as a first-class object, while classical fixed-budget algorithms such as Successive Rejects and Sequential Halving provide benchmark non-private procedures~\citep{audibert2010bestarm,karnin2013almostoptimal}.  Our work asks for the corresponding exponent when the entire adaptive transcript must be private.

\paragraph{Privacy in sequential learning.}
Differential privacy provides a stability guarantee for algorithms on neighbouring datasets \citep{dwork2014foundationsdp}. In bandits, privacy is more subtle because the action sequence itself can leak information about prior rewards. Basu, Dimitrakakis and Tossou \citep{basu2019privacybandits} give an early treatment of privacy notions for multi-armed bandits and relate these notions through regret lower bounds. Subsequent regret-minimization work under global DP shows that privacy changes the effective hardness of stochastic and linear bandits, leading to privacy regimes and adaptive private statistics \citep{azize2022dpbandits, azize2025optimalregretbernoullibandits}. These works are regret-focused, but they motivate two ingredients used here: transcript-level privacy and private running estimates. We implement the latter using Laplace-tree continual release \citep{dwork2010panprivate, shariff2018differentiallyprivatecontextuallinear}. The closest work to ours is \citet{chen2024fixed}, which tackles fixed-budget private BAI of linear bandits. However, we differ from them fundamentally in the analysis techniques and theoretical guarantees (see discussions after Theorem \ref{thm:UpperBound}).

\paragraph{Private transportation geometry.}
Modern BAI analyses often express hardness through a competition between a sampling allocation and the most confusing alternative instance~\citep{garivier2016optimalbai,li2023optimalexplorationharderthompson,bian2025asymptotically}.  In the private setting, the cost of confusing two instances is no longer captured by the classical KL geometry alone.  Fixed-confidence DP-BAI results identify privacy-aware transportation costs that interpolate between KL divergence and total variation scaled by the privacy budget~\citep{azize2023dpbaifixedconfidence,azize2024dpbai,jourdan2025optimalbestarmidentification}.  Our contribution is to build a fixed-budget error analysis around this private transportation geometry and to design an adaptive allocation rule targeted at the resulting exponent.

%% file: Sections/problem_setting.tex
\section{Problem Setting}\label{sec:problem_setting}
We consider a fixed-budget Bernoulli best-arm identification problem with \(K\) arms. The true instance is specified by a mean vector
\[
\theta^{\ast}=(\mu_1,\ldots,\mu_K)\in\Theta,
\quad\mbox{where}\quad
\Theta:=\{\theta\in\mathbb R^K: \theta_i\in(0,1)\text{ for all }i\in[K]\}.
\]
The arms are indexed by \([K]=\{1,\ldots,K\}\). When arm \(a\) is pulled, the learner observes an independent reward drawn from \(\mathrm{Ber}(\mu_a)\). We assume that the best arm is unique and denote it by
\[
 a_{\theta^{\ast}}:=\argmax_{a\in[K]}\mu_a .
\]
The learner is given a fixed sampling budget \(T\). A policy \(\pi\) consists of a sequence of sampling rules $\{\lambda_t\}_{t\in[T]}$, which selects the arm \(A_t\) at each time \(t\in[T]\) based on the previous interaction history, and a recommendation rule $I$, which outputs a final arm \(A_{\mathrm{out}}\in[K]\) after \(T\) rounds.

\paragraph{Consistency.}
The interaction between an instance \(\theta^\ast\) and a policy \(\pi\) induces a probability measure on the transcript
$(A_1,\ldots,A_T,A_{\mathrm{out}})\in[K]^{T+1}$.  
We denote this measure by \(\mathbb M_{\theta^\ast,\pi}\). The error probability of \(\pi\) on \(\theta^\ast\) is
$ 
\mathbb M_{\theta^\ast,\pi}(A_{\mathrm{out}}\neq a_{\theta^\ast}).
$

\begin{definition}[Consistency]
A policy \(\pi\) is consistent if, for every \(\theta^\ast\in\Theta\),
\[
\lim_{T\to\infty}\mathbb M_{\theta^\ast,\pi}(A_{\mathrm{out}}\neq a_{\theta^\ast})=0.
\]
\end{definition}

\paragraph{Differential Privacy.}
Let $R \subseteq \{0,1\}^{K \times T}$ denote the set of reward matrices 
$r = \{r_{i,t}\}_{i \in [K],\, t \in [T] }$, where $r_{i,t}$ represents the   reward obtained the $t$-th time arm $i$ is played. A policy 
$\pi$ can be viewed as a map from $R$ to $[K]^{T+1}$, 
producing a sequence of actions together with a final recommendation.

Concretely, the interaction proceeds recursively as follows~\citep{chen2024fixed}:
\begin{align*}
    &\text{Arm at } t=1:\quad A_1 = \lambda_1 \quad (\text{no history})\\
    &\text{Arm at } t=2:\quad A_2 = \lambda_2\bigl(A_1,\, r_{A_1, N_{A_1,1}}\bigr)\\
    &\;\;\vdots\\
    &\text{Arm at } t=T:\quad A_T = \lambda_T\bigl(A_1, r_{A_1, N_{A_1,T-1}}, \dots, A_{T-1}, r_{A_{T-1}, N_{A_{T-1},T-1}}\bigr)\\
    &\text{Final recommendation:}\quad 
    A_{\mathrm{out}} = I\bigl(A_1, r_{A_1, N_{A_1,T}}, \dots, A_T, r_{A_T, N_{A_T,T}}\bigr)
\end{align*}
where $N_{a,t}$ denotes the number of times arm $a$ has been played up to 
(and including) time $t$, $\lambda_{t}$ is a function of $\{ (A_{s}, X_{s})\}_{s<t}$, where $X_{s}$ is a sample from Bernoulli distribution $\mathrm{Ber}(\mu_{A_{s}})$, and $I$ is a function of $\{(A_{t},X_{t})\}_{t\in[T]}$, which makes the final recommendation using all observed history.

\begin{definition}[Differential Privacy for Best Arm Identification \citep{chen2024fixed}]
    Consider two such reward tables $r,r'$. We say that they are neighboring if there exists at most one pair $(i,t)\in[K]\times[T]$ such that $r_{i,t} \neq r'_{i,t}$, while all other entries are equal. We say that $\pi$ is $\epsilon$-DP if for all neighboring $r,r'$, for all instances $\theta^{*}$, and for all $S\subseteq[K]^{T+1}$, we have
\[\mathbb{M}_{\theta^{*},\pi}(\pi(r)\in S) \leq e^{\epsilon}\mathbb{M}_{\theta^{*},\pi}(\pi(r')\in S).\]
\end{definition}

%% file: Sections/lower_bound.tex
\section{Lower Bound}\label{sec:lower_bound}
We first introduce the privacy-aware divergence that will govern the fixed-budget exponent. Inspired by the transportation-cost formulation of private Bernoulli BAI~\citep{jourdan2025optimalbestarmidentification}, for two instances \(\theta_1=(\mu_{1,1},\ldots,\mu_{1,K})\) and \(\theta_2=(\mu_{2,1},\ldots,\mu_{2,K})\). Let $\Delta_{[K]}$ denote the probability simplex over $[K]$. For an allocation \(\lambda\in\Delta_{[K]}\), define
\[
d_\epsilon(\theta_1,\theta_2,\lambda)
:=
\inf_{\theta'\in\Theta}
\mathbb E_{a\sim\lambda}
\!\big[
\epsilon\TV(\mathrm{Ber}(\mu_{1,a})\|\mathrm{Ber}(\mu'_a))
+
\KL(\mathrm{Ber}(\mu'_a)\|\mathrm{Ber}(\mu_{2,a}))
\big],
\]
where \(\theta'=(\mu'_1,\ldots,
\mu'_K)\) is an intermediate instance.

There are two properties of $d_{\epsilon}$, with formal statements and proofs deferred to Appendix \ref{apx:proof_lipschitz}, which support subsequent theoretical analysis and algorithmic implementation:
\begin{itemize}[leftmargin=*]
    \item $d_{\epsilon}(\theta_{1},\theta_{2},\lambda)$ admits a closed-form solution.
    \item $d_{\epsilon}(\theta_{1},\theta_{2},\lambda)$ is $\epsilon$-Lipschitz with respect to $\lambda$. That is, for any $\theta_1,\theta_2\in\Theta$ and $\lambda_{1},\lambda_{2}\in\Delta_{[K]}$, we have  $|d_{\epsilon}(\theta_{1},\theta_{2},\lambda_{1})-d_{\epsilon}(\theta_{1},\theta_{2},\lambda_{2})| \leq \epsilon \| \lambda_{1}-\lambda_{2}\|_{1}$.
\end{itemize}

We are now ready to state the lower bound. It shows that the best achievable fixed-budget exponent is upper bounded by a privacy-aware max-min value $\tau^{*}(\theta^{*})$. The proof can be found in Appendix~\ref{apx:lowerbound}.
 
\begin{theorem}[Lower Bound]\label{thm:LowerBound}
Suppose a policy \(\pi\) is consistent and \(\epsilon\)-DP. Assume further that there exists an allocation \(\lambda^*=(\lambda^*_1,\ldots,\lambda^*_K)\in\Delta_{[K]}\) such that, for every problem instance \(\theta\in\Theta\), the arm counts generated by \(\pi\) satisfy
$ 
\lim_{T\to\infty}\frac{N_{a,T}}{T}=\lambda_a^*
$  a.s.\ for all $a\in[K]$.
Then
\[
\limsup_{T\to\infty}
-\frac1T\log\mathbb M_{\theta^*,\pi}(A_{\mathrm{out}}\neq a_{\theta^*})
\le \tau^*(\theta^*):=
\sup_{\lambda\in\Delta_{[K]}}
\inf_{\theta'\in\Alt(\theta^*)}
 d_\epsilon(\theta',\theta^*,\lambda),
\]
and
$ 
\Alt(\theta^*)=
\{\theta=(\mu_1,\ldots,\mu_K)\in\Theta:
 a_{\theta^*}\notin\argmax_{a\in[K]}\mu_a\}.
$
\end{theorem}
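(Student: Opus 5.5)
We prove the bound by a change-of-measure argument in the spirit of the fixed-budget lower bounds of \citet{kaufmann2016complexitybai} and \citet{carpentier2016fixedbudgetlower}. The change of measure starts from an alternative instance and moves toward the true instance $\theta^*$. The key tool is a private transportation inequality of the kind established by \citet{azize2023dpbaifixedconfidence,jourdan2025optimalbestarmidentification}. This inequality controls the divergence between transcript laws of an $\epsilon$-DP policy through coupling.

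Fix any $\theta'\in\Alt(\theta^*)$ and any intermediate instance $\theta''\in\Theta$. Let $\mathcal E=\{A_{\mathrm{out}}\neq a_{\theta^*}\}$. Because $\theta'$ has a best arm different from $a_{\theta^*}$, consistency gives $\mathbb M_{\theta',\pi}(\mathcal E)\to 1$. The step we then need is a lower bound on $\mathbb M_{\theta^*,\pi}(\mathcal E)$ in terms of this large probability.

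\emph{Step 1 (the $\epsilon\TV$ part).} We couple the reward tables of $\theta'$ and $\theta''$ arm by arm, using the maximal coupling of $\mathrm{Ber}(\mu'_a)$ and $\mathrm{Ber}(\mu''_a)$. Under this coupling, each sample of arm $a$ differs with probability $\TV(\mathrm{Ber}(\mu'_a)\|\mathrm{Ber}(\mu''_a))$. Group privacy, obtained by iterating the $\epsilon$-DP definition, gives
\[
\mathbb M_{\theta'',\pi}(\mathcal E)\ \ge\ \mathbb E\big[e^{-\epsilon D}\mathbf 1\{\ldots\}\big],
\]
where $D$ is the Hamming distance between the coupled tables restricted to entries actually read. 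We would prove this as
\[
\mathbb M_{\theta'',\pi}(S)\ \ge\ e^{-\epsilon(1+\eta)\sum_a T\lambda_a^*\TV_a - o(T)}\,\big(\mathbb M_{\theta',\pi}(S)-\delta_T\big).
\]
To get it, we restrict to the first $T(\lambda_a^*+\eta)$ entries of row $a$. By the law of large numbers these contain at most $(1+\eta)T\lambda_a^*\TV_a$ differences with high probability. The a.s.\ convergence $N_{a,T}/T\to\lambda_a^*$, assumed uniformly over instances, ensures that only these entries are read.

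\emph{Step 2 (the KL part).} We move from $\theta''$ to $\theta^*$ with the standard log-likelihood-ratio argument. On the event that the empirical log-likelihood ratio $L_T=\log\frac{d\mathbb M_{\theta'',\pi}}{d\mathbb M_{\theta^*,\pi}}$ is at most
\[
T\Big(\sum_a\lambda_a^*\KL(\mathrm{Ber}(\mu''_a)\|\mathrm{Ber}(\mu^*_a))+\eta\Big),
\]
we have $\mathbb M_{\theta^*,\pi}(\mathcal E)\ge e^{-T(\cdots)}\mathbb M_{\theta'',\pi}(\mathcal E\cap\{L_T\le\cdots\})$. The law of large numbers for the per-arm sums of log-ratios, combined with $N_{a,T}/T\to\lambda_a^*$ under $\theta''$, makes the event $\{L_T\le\cdots\}$ have probability tending to $1$. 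Putting the two steps together, taking $-\frac1T\log$, and letting $\eta\to0$ gives
\[
\limsup_T -\tfrac1T\log\mathbb M_{\theta^*,\pi}(\mathcal E)\le \mathbb E_{a\sim\lambda^*}\!\big[\epsilon\TV(\mathrm{Ber}(\mu'_a)\|\mathrm{Ber}(\mu''_a))+\KL(\mathrm{Ber}(\mu''_a)\|\mathrm{Ber}(\mu^*_a))\big].
\]

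\emph{Step 3 (conclusion).} We take the infimum over $\theta''$, which gives $d_\epsilon(\theta',\theta^*,\lambda^*)$. We then take the infimum over $\theta'\in\Alt(\theta^*)$ and bound the result above by the supremum over $\lambda$, which gives $\tau^*(\theta^*)$.

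\textbf{Main obstacle.} The delicate step is Step 1. Group privacy must be applied to adaptively read entries, and we must control the probability that the coupled tables differ in more entries than the typical number. The plan is to work on the intersection of the events $\{N_{a,T}\le T(\lambda_a^*+\eta)\}$ and $\{\text{differences in the first }T(\lambda_a^*+\eta)\text{ entries of row }a\le T(\lambda_a^*+2\eta)\TV_a\}$. Both events have probability tending to $1$, so they cost only a vanishing additive term. This term is harmless because $\mathbb M_{\theta',\pi}(\mathcal E)\to1$. Entries never read do not affect the output, so they can be changed freely without invoking privacy. The only cost is a $2\eta$ slack that vanishes once we let $\eta\to0$.
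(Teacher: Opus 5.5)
Your Step~1 is sound: group privacy is applied pointwise per transcript, the count event is evaluated under $\theta'$, and unread entries are harmless. The problem is that Steps~1 and~2 do not compose as written.

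Applied with $S=\mathcal E$, Step~1 yields only $\mathbb M_{\theta'',\pi}(\mathcal E)\ge e^{-\epsilon T\sum_a(\lambda_a^*+2\eta)\TV_a}(1-o(1))$, which is exponentially small. And $\mathcal E$ genuinely is rare under $\theta''$ in general. Whenever $a_{\theta''}=a_{\theta^*}$, consistency forces $\mathbb M_{\theta'',\pi}(\mathcal E)\to0$. This is the typical case for the optimal $\theta''$ when $\epsilon$ is small, since that $\theta''$ is the projection of $\theta'$ onto a small box around $\theta^*$.

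Step~2 needs a lower bound on $\mathbb M_{\theta'',\pi}(\mathcal E\cap\{L_T\le c_T\})$. Knowing that $\{L_T\le c_T\}$ has $\theta''$-probability $1-o(1)$ only gives $\mathbb M_{\theta'',\pi}(\mathcal E)-o(1)$, which is vacuous:
\begin{itemize}
\item the hypothesis gives no rate for $N_{a,T}/T\to\lambda_a^*$ under $\theta''$;
\item even the Hoeffding bound $e^{-c\eta^2T}$ on the reward sums exceeds $e^{-\epsilon T\sum_a\lambda_a^*\TV_a}$ once $\eta$ is small;
\item nothing prevents paths with an atypically large likelihood ratio from carrying most of the mass of the rare event $\mathcal E$.
\end{itemize}
Unlike in Step~1, this additive loss is not absorbed by $\mathbb M_{\theta',\pi}(\mathcal E)\to1$.

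You also cannot push $\{L_T\le c_T\}$ through Step~1 instead. For your per-arm law of large numbers to apply, $L_T$ must be the log-likelihood ratio of the full history including rewards, not of the transcript law $\mathbb M$ as you wrote. So $\mathcal E\cap\{L_T\le c_T\}$ is not a transcript event, and the DP guarantee says nothing about it.

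The missing idea is that the likelihood ratio must be typical \emph{on} $\mathcal E$, i.e.\ under the $\theta'$-driven coupled law. This requires doing both changes of measure at once on the coupled tables $(r',r'')$. For a transcript $x$ with counts $N_a(x)$, let $L_x(r'')$ be the $\theta''$-versus-$\theta^*$ log-likelihood ratio of the entries of $r''$ read along $x$. Let $d_x(r',r'')$ be the number of those entries where $r'$ and $r''$ differ. Then
\[
\mathbb M_{\theta^*,\pi}(x)=\mathbb E\big[\mathbb P(\pi(r'')=x)\,e^{-L_x(r'')}\big]\ \ge\ \mathbb E\big[e^{-\epsilon d_x(r',r'')-L_x(r'')}\,\mathbb P(\pi(r')=x)\big].
\]
Sum over $x\in\mathcal E$ with $N_a(x)\le T(\lambda_a^*+\eta)$ for all $a$. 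Then restrict to two events that depend on the tables only:
\begin{itemize}
\item few differences among the first $T(\lambda_a^*+\eta)$ entries of each row;
\item per-row partial sums of the log-ratios of $r''$ lying within $\eta T$ of their means, uniformly over prefix lengths $n\le T$.
\end{itemize}
All these events have probability $1-o(1)$ under the law in which the policy reads $r'\sim\theta'$. So the additive losses are again absorbed by $\mathbb M_{\theta',\pi}(\mathcal E)\to1$, and the allocation limit is used only under $\theta'$.

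The paper packages the same idea into one auxiliary law $q_{\gamma,\pi}$. It combines $\lambda(\cdot\mid\mathbf r'')$ with the ratio $p_{\mathbf A}(\mathbf r'')/p''_{\mathbf A}(\mathbf r'')$ and has transcript marginal $\mathbb M_{\theta^*,\pi}$. Data processing then gives $\KL(\mathbb M_{\theta',\pi}\|\mathbb M_{\theta^*,\pi})\le T\,d_\epsilon(\theta',\theta^*,\lambda')$ with $\lambda'_a=\mathbb E_{\theta',\pi}[N_{a,T}]/T$. The bound $\kl(q,p)\ge q\log\frac1p-\log2$, with $q=\mathbb M_{\theta',\pi}(A_{\mathrm{out}}=a_{\theta'})\to1$, finishes the argument. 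Because every expectation there is taken under the $\theta'$-driven law $p_{\gamma,\pi}$, the rare-event issue never arises and no high-probability bookkeeping is needed.
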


\begin{remark}
The lower bound in \citet{li2023optimalexplorationharderthompson} (for the no privacy scenario) imposes a more restrictive assumption. To wit, in their Theorem 2.1, they assume that algorithms are non-adaptive, in the sense that an algorithm always keeps a fixed sampling design $\lambda$ independent of problem instance $\theta^{*}$ it is trying to solve. Our Theorem \ref{thm:LowerBound}  relaxes this condition by allowing the algorithm to be adaptive in sampling with an instance-independent limiting design.
\end{remark}

\paragraph{Discussions of and Novelties in Theorem \ref{thm:LowerBound}}
This divergence measures the shortest two-step path from \(\theta_1\) to \(\theta_2\): the first step is charged in total variation, scaled by the privacy budget, while the second step is charged in KL divergence. If we relax the privacy constraint by letting $\epsilon\rightarrow\infty$, then $d_{\epsilon}(\theta_{1},\theta_{2},\lambda)$ reduces to $\mathbb{E}_{a\sim\lambda}[\KL(\mathrm{Ber}(\mu_{1,a})\|\mathrm{Ber(\mu_{2,a})})]$. We have then recovered the non-private transportation cost that is purely $\KL$-based geometry; cf.\  Theorem 2.1 of \citet{li2023optimalexplorationharderthompson}.

We  emphasize that although the definition of \(\tau^*(\theta^*)\) is structurally inspired by the characteristic time in Theorem~2 of \citet{jourdan2025optimalbestarmidentification}---both involve a supremum over designs \(\lambda\in\Delta_{[K]}\) and an infimum over alternatives \(\theta'\in\Alt(\theta^*)\)---our quantity is mathematically distinct. In particular, the characteristic time in \citet{jourdan2025optimalbestarmidentification} is based on $d_\epsilon(\theta^*,\theta',\lambda),$ whereas our exponent is governed by $d_\epsilon(\theta',\theta^*,\lambda),$ i.e., the order of the first two arguments is reversed. This distinction is fundamental rather than cosmetic: since KL divergence (for Bernoullis) is asymmetric, reversing the arguments changes the optimization geometry and leads to a genuinely different characterization. Consequently, our result is not a straightforward adaptation of the fixed-confidence setting, but instead identifies a new privacy-aware transportation quantity tailored to fixed-budget BAI.

At the technical level, the proof of Theorem~\ref{thm:LowerBound} develops a new coupled-environment argument specialized to the fixed-budget regime. While the construction is partly motivated by the fixed-confidence analysis of \citet{jourdan2025optimalbestarmidentification}, several key ingredients are fundamentally different. In fixed-confidence BAI, the transportation cost quantifies the difficulty of moving from the true instance toward a confusing alternative. In contrast, our fixed-budget lower bound controls the error probability under the true instance by relating it to a wrong-best-arm alternative, which necessarily reverses the direction of the change-of-measure argument. As a consequence, the maximal coupling is constructed between the alternative instance and an intermediate instance, whereas the likelihood-ratio comparison is performed between the intermediate and true instances. This asymmetric construction naturally generates the total-variation and KL terms in the alternative-to-true order appearing in the definition of \(\tau^*(\theta^*)\). To the best of our knowledge, this alternative-to-true transportation geometry and its associated coupling argument are new in the literature on private best-arm identification.

%% file: Sections/methodology.tex
\section{\textsc{AO-Pri-BAI}: A Methodology Matching the Lower Bound}\label{sec:methodology}

In Section \ref{sec:lower_bound}, we established an instance-dependent lower bound on the optimal error exponent for fixed-budget private best-arm identification. We now turn to the algorithmic question: can this exponent be achieved by a computationally tractable private policy? To answer this question, we now introduce \textsc{AO-Pri-BAI} (\underline{A}symptotically \underline{O}ptimal \underline{Pri}vate \underline{B}est \underline{A}rm \underline{I}dentification), a fixed-budget algorithm for Bernoulli best-arm identification under an $\epsilon$-differential privacy constraint. The algorithm pseudocode is stated as Algorithm \ref{algo:FB-DP-BAI}. It is designed around the instance-dependent exponent $\tau^\ast(\theta^{*})$ introduced in Theorem~\ref{thm:LowerBound}. This exponent has a natural two-player game-theoretic interpretation, consisting of a max-player and a min-player. The max-player selects a sampling design over the arms, while the min-player selects an alternative instance with a different optimal arm that is hardest to distinguish from the true instance under that design. Thus, the value $\tau^\ast(\theta^{*})$ corresponds to the best worst-case distinguishability that can be achieved by an allocation rule. Since we do not know the true instance $\theta^{*}$, it rules out the possibility of directly solving the saddle-point optimization problem as in the definition of $\tau^{*}(\theta^{*})$. Our algorithm \textsc{AO-Pri-BAI} overcomes this difficulty by learning in a sampling-based online approach. 

The goal of \textsc{AO-Pri-BAI} is to learn  the optimal allocation achieving $\tau^{*}(\theta^{*})$ adaptively. At a high level, the algorithm maintains two interacting online components, the min-learner and the max-learner, which simulate the corresponding players. The min-learner generates a plausible alternative instance that challenges the current private estimate of the best arm. The max-learner then uses this alternative to generate a stochastic loss, which advises the update of sampling design, increasing the sampling mass on arms that are informative for separating the current instance from hard alternatives. This online interaction is intended to drive the empirical allocation toward the saddle-point design associated with $\tau^\ast(\theta^{*})$. We emphasize the simplicity of the online interaction process, as we do not need to solve any optimization problems. The process is realized only through sampling, plus an argmax oracle used for the final recommendation. 

\paragraph{Initialization.}
At the beginning of the experiment, the algorithm has no evidence favoring one arm over another. We therefore initialize the private mean vector at the uninformative value
$m_{1,0}=\cdots=m_{K,0}=\frac12$
and initialize the sampling design uniformly over the arms. The AdaHedge learner is also initialized in its standard form as in \citet{rooij2014adahedge}, with zero cumulative losses, zero cumulative mixability gap, and infinite initial learning rate.

\paragraph{Private running statistics.}
Since the sampling rule is adaptive, the sequence of selected arms may itself reveal information about the observed rewards. It is therefore not sufficient to privatize only the final recommendation; the entire transcript generated by the policy must be protected. To achieve this, \textsc{AO-Pri-BAI} accesses rewards only through private running estimates. For each arm, we maintain a Laplace tree over its local pull counts. Whenever a reward is observed, it is inserted into the next available leaf of the corresponding arm-wise tree. Internal nodes store subtree sums, and an independent Laplace perturbation is added to each completed node. We refer readers who are less familiar with tree-based algorithms to Appendix \ref{apx:trees} for pseudocode of how to initialize, maintain and update the Laplace trees, as well as how to retrieve data from them. This tree structure allows the algorithm to recover a private cumulative reward for any arm by summing only $O(\log T)$ noisy node values, rather than directly reusing all previously observed rewards. Because a single reward affects at most one node at each tree level, calibrated Laplace noise at the tree nodes yields an $\epsilon$-DP release of the private reward summaries. The private empirical mean $m_{a,t}$ is then computed from the noisy prefix sum of arm $a$ and is used as the only reward-dependent statistic in subsequent decisions. The privacy proof in Theorem~\ref{thm:DP} formalizes this post-processing argument.

\paragraph{Min-learner: sampling hard alternatives.}
At round $t$, the algorithm first identifies the current private empirical leader
$\hat a_t \in \arg\max_{a\in[K]} m_{a,t-1}$.
The min-learner then samples an alternative parameter vector from a Gibbs distribution $q_{t}$ supported on instances in which $\hat a_t$ is no longer optimal, defined by
\[q_{t}(\theta) \propto \mathbf{1}_{ \{ \theta \in \mathrm{Alt}(\hat{\theta}_{t}) \}}\exp\bigg(-\beta_{t}\sum_{a=1}^{K}\lambda_{t,a}d_{\epsilon}^{\mathrm{Ber}}(\theta_{a},m_{a,t-1})\bigg).\]
The density of this distribution is tilted by the current sampling design and by the arm-wise privacy-aware divergence $d_\epsilon^{\mathrm{Ber}}$. Consequently, alternatives that are difficult to distinguish from the current private estimate receive an (exponentially) larger probability mass. There is a hyperparameter $\beta_{t}$, for which we only require  $\beta_{t}>0$ and  $\beta_{t} \rightarrow \infty$ when $t\rightarrow\infty$. This step plays the role of generating a stochastic best response for the min-player.  Intuitively, alternatives with smaller transportation costs are sampled more frequently, so by our design, the min-learner naturally concentrates on the most informative challengers without explicitly solving an optimization problem at each iteration. The alternative instance is then used to produce a stochastic loss for the max-learner. For implementation, the restriction to the alternative set can be enforced by rejection sampling from the corresponding product-form distribution, due to its empirical simplicity. 

\paragraph{Max-learner: updating the sampling design.}
Given the sampled alternative instance, the max-learner updates the sampling design using AdaHedge. The loss assigned to arm $a$ is
\[\ell_{t,a}=1-\frac{1}{\epsilon}d_\epsilon^{\mathrm{Ber}}(\mu'_{a,t-1},m_{a,t-1}),\]
where $\mu'_{a,t-1}$ is the sampled alternative mean of arm $a$. Since $d_\epsilon^{\mathrm{Ber}}\in[0,\epsilon]$, the resulting loss lies in $[0,1]$, which is the natural scale for Hedge-type algorithms.

The AdaHedge update uses the cumulative losses to compute the next sampling design, while the learning rate is chosen adaptively from the cumulative mixability gap. This obviates specifying a fixed learning-rate schedule in advance. Intuitively, arms that repeatedly provide stronger evidence against hard alternatives accumulate smaller losses and receive larger sampling mass in future rounds.
\paragraph{Forced exploration.}
To ensure that every arm continues to receive samples, the design produced by AdaHedge is mixed with a fixed exploration distribution. Specifically, the played distribution is\[
\hat\lambda_t=(1-\gamma_t)\lambda_t+\gamma_t\lambda',\]
where $\lambda'$ is the uniform distribution and $\gamma_t=t^{-\alpha}$ for $\alpha\in(0,\frac{1}{2})$. The exploration component guarantees that all arm pull counts diverge, which is essential for private mean consistency and for eventual identification of the true leader. At the same time, because $\gamma_t\to0$, the exploration component vanishes asymptotically and does not affect the limiting exponent.

\paragraph{Final recommendation.}
After the budget is exhausted, the algorithm constructs a final sampling rule using the private mean estimates and the realized arm counts. A final parameter vector is sampled from another Gibbs distribution $q_{\mathrm{Final}}$ defined as
\[q_{\mathrm{Final}}(\theta)\propto\exp\!\left(-\sum_{a=1}^KN_{a,T}d_\epsilon^{\mathrm{Ber}}(\theta_a,m_{a,T})\right),\]
and the recommended arm is the arm with the largest sampled coordinate. This final randomized recommendation is consistent with the error analysis in Theorem~\ref{thm:UpperBound}: the probability of recommending a suboptimal arm is exactly the probability mass assigned to the alternative set.

\begin{algorithm}
\caption{\textsc{AO-Pri-BAI}$(\epsilon,T)$}\label{algo:FB-DP-BAI}
\begin{algorithmic}
    \Require Finite set of arms $\mathcal{X}=[K]$; Budget $T$; Distribution parameter $\beta_{t}$; Shrinking rate of exploration $\alpha \in (0,\frac{1}{2})$; Initial estimator $\hat{\theta}_{1}=(m_{1,0},\dots,m_{K,0})=(\frac{1}{2},\dots,\frac{1}{2})$ AdaHedge initial learning rate $\eta_{1}=\infty$; Initial cumulative loss $L^{\mathrm{Cumu}}_{0}=(L_{0,1},\dots,L_{0,K})=(0,0,\dots,0)$; Mixability gap $\Delta_{0}=0$; Privacy constraint parameter $\epsilon$

    \State Initialize: $\lambda'=\frac{1}{K}\mathbf{1}$; $\lambda_{1}=\frac{1}{K}\mathbf{1}$; Call \textsc{Init}($K,T,\epsilon$).

    \For{$t=1,2,\dots,T$}
        \State $\gamma_{t}=t^{-\alpha}$
        \State Update $\lambda_{t,a} \propto e^{-\eta_{t}L_{t-1,a}^{\mathrm{Cumu}}}$ for all $a\in[K]$
        \State Compute $\hat{a}_{t}=\arg\max_{a\in[K]} m_{a,t-1}$, break ties arbitrarily
        \State Sample $\theta'_{t}=(\mu'_{1,t-1},\dots,\mu'_{K,t-1}) \sim q_{t}$
        \State $\hat{\lambda}_{t}=(1-\gamma_{t})\lambda_{t}+\gamma_{t}\lambda'$
        \State Sample $A_{t}\sim\hat{\lambda}_{t}$ and observe reward $r_{t}\sim\mathrm{Ber}(\mu_{i})$, where $i$ is the index of $A_{t}$.
        
        \textsc{Update}($\{\mathcal{T}_{a}\}_{a=1}^{K}, \{N_{a,t-1}\}_{a=1}^{K}, A_{t}, r_{t}, t$)
        \State $\{\tilde{S}_{a,t}\}_{a=1}^{K}, \{\tilde{S}'_{a,t}\}_{a=1}^{K}, \{m_{a,t}\}_{a=1}^{K} \gets \textsc{ComputeMeans}(\{\mathcal{T}_{a}\}_{a=1}^{K}, t, \{N_{a,t}\}_{a=1}^{K})$
    
        \State Set $\hat{\theta}_{t+1}=(m_{1,t},\dots,m_{K,t})$
        \State Receive loss vector $l_{t}$, where $l_{t,a}=1-\frac{1}{\epsilon}d_{\epsilon}^{\mathrm{Ber}}(\mu'_{a,t-1}, m_{a,t-1})$
        \State Compute Hedge loss $h_{t}=\sum_{a=1}^{K}\lambda_{t,a}l_{t,a}$
        \State Compute mixed loss $m_{t}=-\frac{1}{\eta_{t}}\log\sum_{a=1}^{K}\lambda_{t,a}e^{-\eta_{t}l_{t,a}}$
        \State Compute mixability gap $\delta_{t}=h_{t}-m_{t}; \Delta_{t}=\Delta_{t-1}+\delta_{t}$
        \State Update cumulative loss $L^{\mathrm{Cumu}}_{t}=L^{\mathrm{Cumu}}_{t-1}+l_{t}$
        \State Update AdaHedge learning rate $\eta_{t+1}=\frac{\log K}{\Delta_{t}}$
    \EndFor
    \State Sample $\hat{\theta}_{T+1}\sim q_{\mathrm{Final}}$
    \State Return $A_{\mathrm{out}}=\arg\max_{a\in[K]}\hat{\theta}_{T+1,a}$
    
\end{algorithmic}
    
\end{algorithm}

We state and prove (in Appendix \ref{apx:DP}) that \textsc{AO-Pri-BAI} is indeed $\epsilon$-DP.

\begin{theorem}[Differential Privacy]\label{thm:DP}
    For every $T$, \textsc{AO-Pri-BAI}$(\epsilon,T)$  satisfies the $\epsilon$-DP constraint.
\end{theorem}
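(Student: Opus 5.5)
The proof is a standard "private statistics plus post-processing" argument. I will organize it in three steps. First, show that the collection of all noisy tree-node values released by the arm-wise Laplace trees is $\epsilon$-DP with respect to the reward table $r$. Second, show that every other quantity in Algorithm~\ref{algo:FB-DP-BAI} is a randomized function of these noisy values alone, using randomness independent of $r$. Third, conclude that the transcript $(A_1,\ldots,A_T,A_{\mathrm{out}})$ is $\epsilon$-DP by the post-processing property of differential privacy.

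\emph{Step 1 (Laplace trees).} Fix neighboring tables $r,r'$ that differ only in entry $(i,s)$, where $r_{i,s}$ is the reward of the $s$-th pull of arm $i$. Under the convention of the problem setting, the $s$-th reward of arm $i$ is always stored in leaf $s$ of tree $\mathcal T_i$, whenever that leaf is ever filled. It is therefore a fixed input coordinate, and it enters at most one node per level of $\mathcal T_i$, i.e., at most $L=\lceil\log_2 T\rceil+1$ nodes; the trees of the other arms are unaffected. Since rewards lie in $\{0,1\}$, each node sum changes by at most $1$. With Laplace noise of scale $L/\epsilon$ on each node (the calibration in Appendix~\ref{apx:trees}), the vector of all node sums has $\ell_1$-sensitivity at most $L$. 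The Laplace mechanism then gives an $\epsilon$-DP release of the full vector of noisy node values $Z=\{\tilde Z_{a,v}\}$, over all arms $a$ and all potential nodes $v$.

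The subtlety is adaptivity: which leaves get filled depends on past actions, which depend on past noisy outputs. I would handle this by treating the mechanism as releasing \emph{all} potential nodes of all $K$ trees, each of size $T$, computed from the full table $r$. This release is non-adaptive, so the Laplace mechanism applies directly. The algorithm then reads only the nodes that are completed along the realized trajectory. Because reading a node is just selecting a coordinate of $Z$, this is post-processing.

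\emph{Step 2 (post-processing).} I would argue by induction on $t$ that $(\hat\theta_t,\lambda_t,\eta_t,L^{\mathrm{Cumu}}_{t-1},\theta'_t,A_t)$ is a measurable function of $Z$ together with independent internal randomness $U$. Here $U$ consists of the Gibbs samples from $q_t$, implemented via rejection sampling with $r$-independent uniforms, the draws $A_t\sim\hat\lambda_t$, and the final sample from $q_{\mathrm{Final}}$.

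The crucial check is that $m_{a,t}$ from \textsc{ComputeMeans} uses only the noisy prefix sums assembled from $Z$ and the counts $N_{a,t}$. The counts are themselves functions of past actions, hence of $(Z,U)$ by induction. The losses $\ell_{t,a}$ depend on $\theta'_t$ and $m_{t-1}$, so the AdaHedge quantities are functions of $(Z,U)$ as well. At the end, $q_{\mathrm{Final}}$ depends only on $(N_{a,T},m_{a,T})$, so $A_{\mathrm{out}}$ is also a function of $(Z,U)$. Hence the transcript equals $F(Z,U)$ for a fixed map $F$, and for any $S\subseteq[K]^{T+1}$,
\[
\mathbb P(F(Z(r),U)\in S)=\mathbb E_U\big[\mathbb P(Z(r)\in F_U^{-1}(S))\big]\le e^\epsilon\,\mathbb E_U\big[\mathbb P(Z(r')\in F_U^{-1}(S))\big].
\]

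\emph{Main obstacle.} The delicate point is Step 1's reduction of the adaptive tree construction to a non-adaptive release. This requires the reward-table indexing, where $r_{i,s}$ is the $s$-th pull of arm $i$ rather than the reward at time $t$. It also requires that each node's noise is drawn once and reused, so that repeated queries of the same completed node do not consume additional privacy budget. I would verify both properties against the \textsc{Init}/\textsc{Update}/\textsc{ComputeMeans} pseudocode. If the quantities $\tilde S'_{a,t}$ there involve a second noisy statistic, I would split the budget as $\epsilon/2+\epsilon/2$ and apply basic composition.
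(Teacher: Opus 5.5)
Your proposal is correct and follows essentially the same route as the paper. The paper also treats the entire noisy forest $\{S_{a,u}(r)+Z_{a,u}\}_{a,u}$ as one non-adaptive release and bounds the log density ratio by $\frac{1}{b}\sum_{a,u}|S_{a,u}(r)-S_{a,u}(r')|\le m_T/b=\epsilon$, since only the $m_T$ nodes on a single root-to-leaf path of $\mathcal{T}_{a^*}$ can change; it then obtains the transcript guarantee by post-processing, and your budget-splitting contingency is unnecessary because $\tilde S'_{a,t}=[\tilde S_{a,t}]_0^{N_{a,t}}$ is just a clipping of the same noisy prefix sum.
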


\paragraph{Novelty of our algorithm.} Compared to non-private BAI algorithms proposed by \citet{li2023optimalexplorationharderthompson}, \textsc{AO-Pri-BAI} consists of the following innovations:
\begin{itemize}[leftmargin=*]
    \item In the non-private case, one can compute running statistics of empirical estimations by ordinary least square estimators (OLS), while doing so in private case requires privatizing each reward observation. To preserve privatized running statistics using minimal noise, we incorporate Laplace trees into the algorithm structure. The tree mechanism allows adding noise of $O(\log T)$ scale into node values, and to maintain privatized statistics we only need $O(\log T)$ such values. Together with the concentration property of private estimators proved in Lemma~\ref{lem:GE1} and forced exploration in Lemma~\ref{lem:GE3}, we demonstrate that the addition of such Laplacian noises does not degrade our theoretical guarantee, in the sense that \textsc{AO-Pri-BAI} still asymptotically achieves the lower bound (Theorem \ref{thm:LowerBound}).
    \item In private BAI, the algorithm only observes private estimators perturbed by (Laplacian) noise and clipping. Therefore, unlike non-private algorithms like PEPS in \citet{li2023optimalexplorationharderthompson} which works with tractable posterior distributions, it is challenging to do so with intractable posteriors, which is the case when privacy is enforced. We overcome this difficulty by directly defining Gibbs distributions $\{q_{t}\}_{t=1}^{T}$ and $q_{\mathrm{Final}}$ for sampling instances (for either min-learner or final recommendation). The Gibbs energy is chosen to be the same transportation cost that appears in the lower bound, so the sampling rule is designed to simulate the min-max game defined by the fixed-budget DP exponent rather than around the raw-data likelihood. Despite not being exact Bayesian posteriors, they are effective in learning the optimal design to achieve $\tau^{*}(\theta^{*})$ asymptotically, as demonstrated in Lemma \ref{lem:GE6}.

\end{itemize}


%% file: Sections/upper_bound.tex
\section{Upper Bound}\label{sec:upper_bound}
We now state the asymptotic upper bound on our error probability as Theorem \ref{thm:UpperBound}. It implies that our algorithm \textsc{AO-Pri-BAI} has error probability decaying exponentially fast with a rate at least as $\tau^{*}(\theta^{*})$, the same exponent as defined in the lower bound (Theorem \ref{thm:LowerBound}). Since the exponents are matching, we establish asymptotic optimality of \textsc{AO-Pri-BAI}. The proof can be found in  Appendix~\ref{apx:upperbound}.

\begin{theorem}[Upper Bound]\label{thm:UpperBound}
    With probability 1,
    \[\liminf_{T\rightarrow\infty}-\frac{1}{T}\log\mathbb{P}_{\theta\sim q_{\mathrm{Final}}}(A_{\mathrm{out}}\neq a_{\theta^{*}})\geq\tau^{*}(\theta^{*}).\]
\end{theorem}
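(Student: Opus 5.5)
The plan is to treat the final error probability as a Laplace-type integral under the Gibbs measure $q_{\mathrm{Final}}$. We then reduce everything to two almost-sure facts about the trajectory: private mean consistency, and convergence of the empirical allocation to an optimal design.

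\textbf{Step 1: rewrite the error as a ratio of integrals.} Since $A_{\mathrm{out}}\neq a_{\theta^*}$ exactly when the sampled $\hat\theta_{T+1}$ lies in $\Alt(\theta^*)$ (up to ties of measure zero), we write
\[
\mathbb P_{q_{\mathrm{Final}}}(A_{\mathrm{out}}\neq a_{\theta^*})=\frac{\int_{\Alt(\theta^*)}\exp\bigl(-T\sum_a \tfrac{N_{a,T}}{T}d_\epsilon^{\mathrm{Ber}}(\theta_a,m_{a,T})\bigr)\rmd\theta}{\int_{\Theta}\exp\bigl(-T\sum_a \tfrac{N_{a,T}}{T}d_\epsilon^{\mathrm{Ber}}(\theta_a,m_{a,T})\bigr)\rmd\theta}.
\]
\emph{Numerator.} It is bounded by $\mathrm{vol}(\Theta)\exp\bigl(-T\inf_{\theta\in\Alt(\theta^*)}d_\epsilon(\theta,\hat\theta_{T+1},N_T/T)\bigr)$. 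Here we use that the closed form of Appendix~\ref{apx:proof_lipschitz} identifies $\sum_a\lambda_a d_\epsilon^{\mathrm{Ber}}(\theta_a,m_a)$ with $d_\epsilon(\theta,m,\lambda)$, since the infimum over the intermediate instance decouples across arms.

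\emph{Denominator.} We lower bound it by integrating over a small ball of radius $\delta$ around $\hat\theta_{T+1}=(m_{a,T})_a$. This uses continuity of $d_\epsilon^{\mathrm{Ber}}(\cdot,m)$ with $d_\epsilon^{\mathrm{Ber}}(m,m)=0$, uniformly for $m$ in a compact subset of $(0,1)$. The ball costs only $e^{-T\omega(\delta)}$ times a polynomial volume factor, with $\omega(\delta)\to0$. Letting $\delta\to0$ after $T\to\infty$, the denominator contributes nothing to the exponent. We therefore get
\[
\liminf_T -\tfrac1T\log\mathbb P \;\ge\; \liminf_T \inf_{\theta\in\Alt(\theta^*)}d_\epsilon\bigl(\theta,\hat\theta_{T+1},N_T/T\bigr).
\]

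\textbf{Step 2: replace the random arguments by their limits.} By Lemma~\ref{lem:GE1} (concentration of the Laplace-tree private means) and Lemma~\ref{lem:GE3} (forced exploration makes every $N_{a,T}\to\infty$), almost surely $m_{a,T}\to\mu_a$. By Lemma~\ref{lem:GE6}, almost surely the empirical allocation $N_T/T$ converges to, or becomes asymptotically optimal for, the saddle-point design $\lambda^\star$. This means $\liminf_T\inf_{\Alt(\theta^*)}d_\epsilon(\theta,\theta^*,N_T/T)\ge\tau^*(\theta^*)$.

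The $\epsilon$-Lipschitz property in $\lambda$ from Section~\ref{sec:lower_bound} transfers the allocation error at cost $\epsilon\|N_T/T-\lambda^\star\|_1\to0$. For the mean argument we need that $\inf_{\theta\in\Alt(\theta^*)}d_\epsilon(\theta,m,\lambda)$ is continuous in $m$ near $\theta^*$, uniformly in $\lambda$. This should follow from the explicit closed form (a clipped KL between Bernoullis bounded away from $0,1$) and compactness of $\Delta_{[K]}$. Eventually the private leader equals $a_{\theta^*}$, so $\Alt(\hat\theta)=\Alt(\theta^*)$ and no set mismatch arises.

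\textbf{Main obstacle.} The hard part is Step 2's use of Lemma~\ref{lem:GE6}: showing that the AdaHedge max-learner driven by the Gibbs min-learner $q_t$ actually achieves the max-min value. My plan is the standard argument. The AdaHedge regret bound $O(\sqrt{T\log K})$, applied to losses in $[0,1]$, gives
\[
\textstyle\sum_t\langle \lambda_t,\ell_t\rangle\le\min_a\sum_t\ell_{t,a}+o(T).
\]
Then, as $\beta_t\to\infty$, $q_t$ concentrates on near-best responses, so $\mathbb E_{q_t}[\sum_a\lambda_{t,a}d^{\mathrm{Ber}}_\epsilon]\approx\inf_{\Alt}d_\epsilon(\cdot,\hat\theta_t,\lambda_t)$. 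This is again a Laplace argument, with error $O(\log\beta_t/\beta_t)$.

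Converting the stochastic losses to expectations requires a martingale (Azuma) step. Forced exploration with $\gamma_t=t^{-\alpha}$ perturbs the allocation by only $o(1)$ in $\ell_1$, which Lipschitzness absorbs. Combining these yields that the average design is $o(1)$-optimal, so $N_T/T$ is too, after a further martingale bound between $N_T/T$ and $\frac1T\sum_t\hat\lambda_t$. This closes the chain and gives the stated $\liminf\ge\tau^*(\theta^*)$ with probability one.
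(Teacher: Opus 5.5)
Your proposal is correct and follows essentially the same route as the paper. The error $q_{\mathrm{Final}}(\Alt(\theta^*))$ is written as a ratio of integrals; the numerator is bounded through the infimum over $\Alt(\theta^*)$ and the denominator from below on a small box. Almost-sure consistency of $m_{a,T}$ then comes from Good Events 1 and 3, via Borel--Cantelli with a summable schedule such as $\delta_T=T^{-3}$. Finally, $\tau^*(\theta^*)-G^*(\bar e_T)\to 0$ comes from Good Event 6, whose proof is exactly your chain: AdaHedge regret, $O(\log\beta_t/\beta_t)$ Gibbs concentration, Azuma, forced exploration, and the $N_T/T$-versus-$\frac1T\sum_t\hat\lambda_t$ martingale bound, plus concavity of $G^*$.

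The only difference is cosmetic. You center the box at $(m_{a,T})_a$ and replace $m$ by $\mu$ after taking the infimum, whereas the paper centers it at $\mu$ and makes that replacement uniformly in $\theta$ via Good Event 5. Both rely on the same Lipschitz-in-$q$ bound for $d_\epsilon^{\mathrm{Ber}}$.

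One small correction: Good Event 6 gives only value-optimality of $\bar e_T$, not convergence of $N_T/T$ to a (possibly non-unique) saddle-point design $\lambda^\star$. So you should drop the $\epsilon\|N_T/T-\lambda^\star\|_1$ transfer. It is not needed, since $\liminf_T G^*(\bar e_T)\ge\tau^*(\theta^*)$ is all your argument uses.
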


\paragraph{Comparison with existing private BAI guarantees.}
Our setting is closely related to the private fixed-budget linear BAI
problem studied by \citet{chen2024fixed}, but our guarantees differ
in both form and interpretation.
Their phase-based algorithm, \textsc{DP-BAI}, uses a MAX-DET principle.
Their Theorem~4.2 gives a non-asymptotic upper bound on the
unconditional error probability of the form
$\exp\bigl(-\Omega(T/(H(\theta^{*})\log d))\bigr)$,
where $d$ is the arm-vector dimension, $T$ is the sampling budget, and
\[
H(\theta^{*})
=
\max_{2\leq i\leq(d^{2}\wedge K)}
\frac{i}{\Delta_{(i)}^{2}}
+
\frac{1}{\epsilon}
\max_{2\leq i\leq(d^{2}\wedge K)}
\frac{i}{\Delta_{(i)}}.
\]
Here, arms are ordered by decreasing mean, and
$\Delta_{(i)}=\mu_{1}-\mu_{i}$ is the $(i-1)$-th smallest optimality gap.
In contrast, Theorem~\ref{thm:UpperBound} gives an asymptotic guarantee
for the conditional error probability of \textsc{AO-Pri-BAI},
with an explicit exponent $\tau^{*}(\theta^{*})$ and no unspecified
multiplicative constant in the exponent.

\paragraph{Different measures of instance difficulty.}
Both $H(\theta^{*})$ and $\tau^{*}(\theta^{*})$ are instance-dependent,
but they capture different aspects of the instance.
The former depends on the optimality gaps, whereas the latter is
the value of a privacy-aware transportation game between a sampling
design and alternative instances in $\Alt(\theta^{*})$.
Appendix~\ref{apx:difference} gives numerical examples in which
two instances have the same value of $H(\theta^{*})$ but different
values of $\tau^{*}(\theta^{*})$, and vice versa.
Thus, neither quantity determines the other.
Our analysis provides a distinct perspective on instance difficulty
through the transportation-game exponent.

\paragraph{Conditional and unconditional error guarantees.}
Our lower and upper bounds concern different notions of error.
Theorem~\ref{thm:LowerBound} bounds the unconditional frequentist
error probability
$\mathbb{M}_{\theta^{*},\pi}
\bigl(A_{\mathrm{out}}\neq a_{\theta^{*}}\bigr)$,
which averages over the observations and the internal randomness
of the policy $\pi$.
In contrast, Theorem~\ref{thm:UpperBound} controls
$\mathbb{P}_{\theta\sim q_{\mathrm{Final}}}
\bigl(A_{\mathrm{out}}\neq a_{\theta^{*}}\bigr)
=q_{\mathrm{Final}}\bigl(\mathrm{Alt}(\theta^{*})\bigr)$,
conditional on the history used to construct the final Gibbs
distribution.
Its guarantee holds along almost every sample path and does not,
by itself, imply the same exponential rate for the unconditional
error probability.
Consequently, the two bounds do not establish a matching
characterization of the unconditional error exponent.
More broadly, \citet{chao2022open} asked whether fixed-budget BAI
admits an instance-dependent complexity characterization:
a benchmark for the optimal error exponent that a single policy
can attain simultaneously across all instances.
\citet{goldberger2026fundamentallimitationsfixedbudgetbestarm}
recently answered this question negatively in general.

\paragraph{Extensions beyond Bernoulli bandits.}
A natural question is whether our analysis extends beyond Bernoulli
rewards.
Appendix~\ref{apx:future_work} outlines a possible extension based on
the quantization technique of \citet{pmlr-v117-riou20a}.

\paragraph{Proof sketch.} We now provide a proof sketch of Theorem \ref{thm:UpperBound}. The proof relies on arguing that six particular good events will happen with high probability. We elaborate on these good events one by one. Their formal statements and proofs are deferred to Appendix \ref{apx:GE}.

Good Event 1, stated in Lemma \ref{lem:GE1}, is used to prove a concentration property of the private estimators $m_{a,t}$ for all arms $a\in[K]$ and across all $t\in[T]$. Together with a lower bound on the number of arm pulls $N_{a,t}$ asserted by Good Event 3 (Lemma \ref{lem:GE3}), it ensures that the private mean estimators converge to the corresponding true mean, which plays a crucial role in the error probability analysis.

Good Event 2, stated in Lemma \ref{lem:GE2}, demonstrates the boundedness property of the private empirical estimator, the alternative instance sampled each round, as well as the final recommendation. This is to ensure that the algorithm operates and terminates without producing any private estimator or final recommendation having an arm mean that is smaller than $0$ or greater than $1$.

Good Event 3, stated in  Lemma \ref{lem:GE3}, shows that the forced exploration, implemented by a mixture of uniform design each round, ensures that each arm can get $\Omega(t^{1-\alpha})$ arm pulls and gather enough exploration by spending $\mathrm{polylog}(T)$ arm pulls in total.

Good Event 4, stated in Lemma \ref{lem:GE4}, proves that with the support of Good Event 1 (Lemma \ref{lem:GE1}) and Good Event 3 (Lemma \ref{lem:GE3}), after a burn-in period of order $\mathrm{polylog}(T)$, the private empirical best arm agrees with the true best arm. This result carries importance into the discussion of the subsequent good events, where we show that the max-learner and the min-learner in \textsc{AO-Pri-BAI} eventually solve this game.

Good Event 5, stated in Lemma \ref{lem:GE5} and with the associated Corollary \ref{col:GE5}, is to establish that after taking rounds of order $\mathrm{polylog}(T)$, the logarithm of the ratio between $q_{\mathrm{Final}}(\theta)$ and $q_{\mathrm{Final}}(\theta^{*})$ is asymptotically equal to the an empirical approximation of the hardness term $\tau^{*}(\theta^{*})$. This result assists in the proof of the upper bound because to prove the upper bound is exactly to control the probability of sampling an instance where the best arm differs from the true one, which is exactly $q_{\mathrm{Final}}(\mathrm{Alt}(\theta^{*}))$. Since $q_{\mathrm{Final}}(\mathrm{Alt}(\theta^{*}))$ is the integral of density $q_{\mathrm{Final}}(\theta)$ over $\mathrm{Alt}(\theta^{*})$, having control over the density allows us to control the integral as well as to later relate it to the optimal error exponent $\tau^{*}(\theta^{*})$.

The final Good Event 6 stated in Lemma \ref{lem:GE6} shows that the final empirical design $\bar{e}_{T}=\{\bar{e}_{a,T}\}_{a=1}^{T}$, is asymptotically achieving the hardness term $\tau^{*}(\theta^{*})$ by bounding their difference with an error term $\varepsilon_{6,T}(\delta)$, which we define by a telescoping sum and then Lemma \ref{lem:GE6control}, \ref{sublem:GE6control1}, \ref{sublem:GE6control2}, \ref{sublem:GE6control3}, \ref{sublem:GE6control4}, \ref{sublem:GE6control5}, \ref{sublem:GE6control6} argue that each of the summand, and thus $\varepsilon_{6,T}(\delta)$, is of order $o(1)$.

The final argument on the upper bound, stated in Theorem \ref{thm:UpperBound}, is then established by assembling the good events proved. With the choice of $\delta_T=T^{-3}$, since the failure probabilities of the primitive good events are summable, the
Borel--Cantelli lemma implies that, with probability one, all six good events
hold for all sufficiently large \(T\).  Hence, the rest of the argument may be
carried out pathwise on a combination of them.

By construction of the final recommendation rule,
\[
\mathbb P_{\theta\sim q_{\mathrm{Final}}}
\bigl(a_{\mathrm{out}}\neq a_{\theta^\ast}\bigr)
=
q_{\mathrm{Final}}\bigl(\Alt(\theta^\ast)\bigr).
\]
Thus, it remains to upper bound the probability mass of the alternative set.

Good Event~5 gives the large-deviation approximation
\[
q_{\mathrm{Final}}\bigl(\Alt(\theta^\ast)\bigr)
\doteq
\exp\big(
-T\ G^\ast(\bar e_T)
\big)\quad\mbox{where} \quad G^\ast(\bar e_T):=\inf_{\theta\in\Alt(\theta^\ast)}
\sum_{a=1}^K
\bar e_{a,T}
d_\epsilon^{\mathrm{Ber}}(\theta_a,\mu_a),
\]
and where $\bar e_{a,T}:=\frac{N_{a,T}}{T}$ and \(a_T\doteq b_T\) denotes equality up to sub-exponential factors.  

Good Event~6 then identifies this empirical-design exponent with the
saddle-point value:
\[
0\le \tau^\ast(\theta^{*})-G^\ast(\bar e_T)\le o(1),
\]
and substituting this into the large-deviation approximation gives
\[
q_{\mathrm{Final}}\bigl(\Alt(\theta^\ast)\bigr)
\le
\exp\bigl(-T(\tau^\ast(\theta^{*})-o(1))\bigr).
\]

Finally, since the error probability of the final recommendation is
exactly \(q_{\mathrm{Final}}(\Alt(\theta^\ast))\), we obtain the bound in Theorem \ref{thm:UpperBound}.

%% file: Sections/experiment.tex
\section{Experiments}\label{sec:experiments}
The empirical performance of \textsc{AO-Pri-BAI} is firstly compared to two baselines: The first   is \textsc{Uniform}, which follows the algorithmic structure of \textsc{AO-Pri-BAI} but with AdaHedge removed, i.e., \textsc{Uniform} does not update arm design and always pulls arms uniformly at random. We treat it as an ablation of \textsc{AO-Pri-BAI} to test the empirical effectiveness of the two online learners. The second is \textsc{DP-TT} from \citet{jourdan2025optimalbestarmidentification}. It is a fixed-confidence algorithm; we modify it to make it stop after pulling $T$ arms and recommend the latest empirical best arm. We call this fixed-budget adaptation \textsc{DP-TT-FB}. Note that both baselines are $\epsilon$-DP, since \textsc{Uniform} has access to observations through private Laplace forests, same as \textsc{AO-Pri-BAI}, and \textsc{DP-TT-FB} has access to observations through Geometric Private Estimator (GPE), whose privacy is guaranteed by Lemma 5 of \citet{jourdan2025optimalbestarmidentification}. Throughout all empirical experiments in this section, we choose $\epsilon=0.1$, $\alpha=\frac{1}{4}$ and $\beta_{t}=2\sqrt{t+1}$. For each algorithm, we conduct $500$ independent trials to plot empirical accuracy (with $\pm1$ standard error) against budget $T$. More experiments are included in Appendix \ref{apx:more_experiments}.

\begin{figure}[!htbp]
    \centering
    \begin{minipage}[t]{0.48\textwidth}
        \vspace{0pt}
        \centering
        \includegraphics[width=1.0\linewidth]{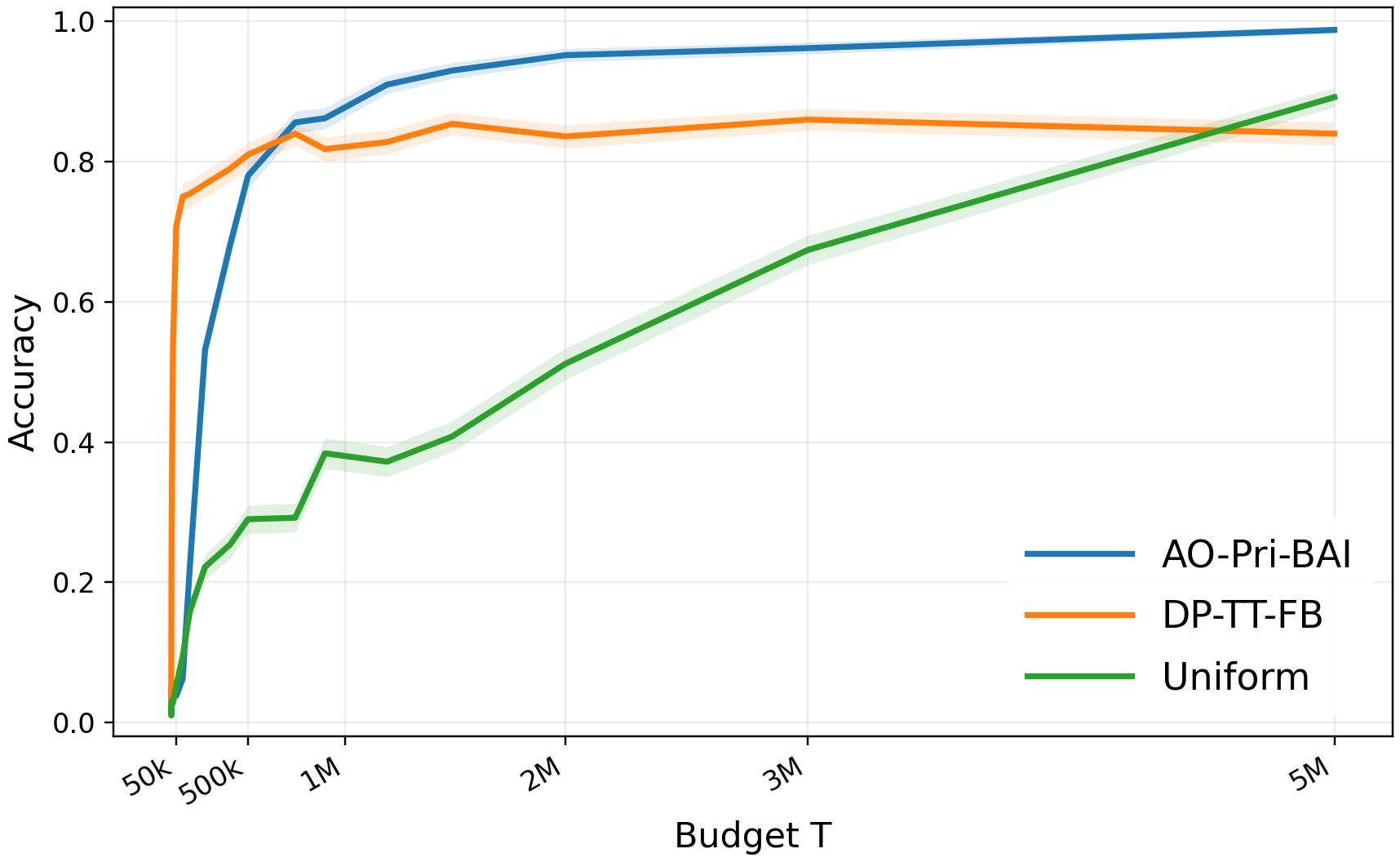}
        \caption{Comparison of \textsc{AO-Pri-BAI} to \textsc{Uniform} and \textsc{DP-TT-FB} on the 50-arm instance.}
        \label{fig:50arms}
    \end{minipage}\hfill
    \begin{minipage}[t]{0.48\textwidth}
        \vspace{0pt}
        \centering
        \includegraphics[width=1.0\linewidth]{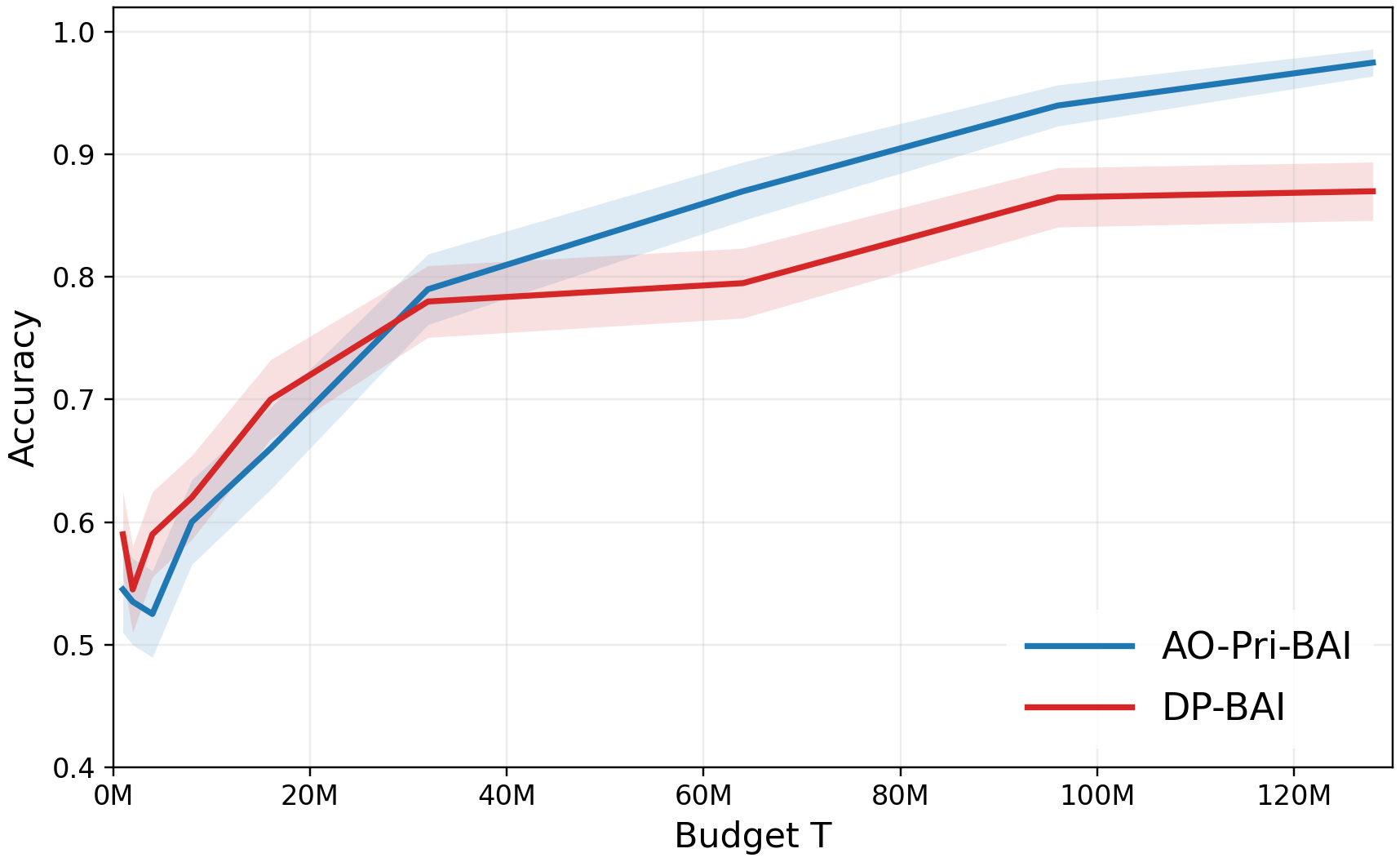}
        \caption{Comparison of \textsc{AO-Pri-BAI} to \textsc{DP-BAI} on the 17-arm instance.}
        \label{fig:vschen}
    \end{minipage}
\end{figure}
We consider a synthetic instance with $50$ arms. The best arm has a mean reward of $0.95$. There are $4$ arms with a mean reward of $0.925$, and another $45$ arms with a mean reward of $0.50$.
Figure~\ref{fig:50arms} reports the experimental results: \textsc{DP-TT-FB} rises quickly but plateaus around the mid-\(0.8\) range as GPE updates become slow in the long run, since it requires doubling arm pulls for any arm to update its private empirical mean.  \textsc{AO-Pri-BAI} catches up with  \textsc{DP-TT-FB} and then clearly dominates in the moderate and large-budget regimes.    \textsc{Uniform} improves slowly because it continues wasting samples on easy distractors. In contrast, \textsc{AO-Pri-BAI} continues improving and approaches perfect accuracy, showing that the max-min learner interaction converts private challenger samples into a more efficient sampling design.  Additional 20-arm and 100-arm experiments in Appendix~\ref{apx:more_experiments} show the same qualitative pattern: our method has a burn-in phase but achieves the strongest large-budget accuracy, consistent with the {\em asymptotic optimality}  guarantee of  \textsc{AO-Pri-BAI}.

We also compare with \textsc{DP-BAI} of \citet{chen2024fixed}, a private elimination-based algorithm, on a challenging $K=17$-arm instance with means $0.6$, $0.59975$, and $0.01$ for the remaining 15 arms. Figure~\ref{fig:vschen} shows that AO-Pri-BAI consistently achieves higher accuracy once the sampling budget is moderately large. Whereas DP-BAI progressively eliminates arms, AO-Pri-BAI maintains a distribution over alternative instances and refines its sampling allocation through max-min optimization. Elimination can be effective when gaps are large or privacy constraints are weak. However, with a small gap and stringent privacy, Laplace noise can cause premature elimination of the best arm. Avoiding such irreversible decisions may explain AO-Pri-BAI's greater robustness on this instance.

%% file: Sections/conclusion.tex
\section{Conclusion}\label{sec:conclusion}
We studied fixed-budget Bernoulli best-arm identification under pure \(\epsilon\)-differential privacy. The main conceptual finding is that privacy fundamentally alters the information geometry of the problem: the classical KL-based transportation cost must be replaced by a privacy-aware divergence that accounts for both stochastic distinguishability and transcript stability. This leads to an instance-dependent exponent \(\tau^\ast(\theta^{*})\) for private fixed-budget BAI. On the algorithmic side, we proposed \textsc{AO-Pri-BAI}, which combines private continual reward estimation, privacy-aware challenger sampling, and an AdaHedge sampling-design learner. We proved that the algorithm is pure \(\epsilon\)-DP for the full adaptive transcript and established an asymptotic error probability upper bound with exponent \(\tau^\ast(\theta^{*})\). Together with the lower bound, this provides an asymptotically optimal fixed-budget theory for private Bernoulli BAI. We conducted some empirical experiments demonstrating the effectiveness of our approach. Several interesting directions remain for future work, which we discuss in Appendix~\ref{apx:future_work}.

%% file: Appendix/table_notations.tex
\newpage
\begin{center}
\rule{\textwidth}{0.8pt}\\[0.8em]
{\Large\bfseries
Supplementary Material for the NeurIPS 2026 Submission:}\\[0.5em]
{\Large\bfseries
Asymptotically Optimal Best Arm Identification with \\Fixed-Budget Under Differential Privacy}\\[0.8em]
\rule{\textwidth}{0.8pt}
\end{center}

\section{Notations}
We list some notations frequently used in the pseudocode of Algorithm $1$, as well as subsequent proofs.

\begin{enumerate}
    \item $S_{a,t}$: Non-private cumulative reward of arm $a$ by round $t$. Defined by $S_{a,t}=\sum_{i=1}^{t}r_{i}\mathbf{1}_{A_{t}=a}$.
    \item $\tilde{S}_{a,t}$: Private cumulative reward of arm $a$ by round $t$. Computed by \textsc{ComputeMeans}$(\{\mathcal{T}_{a}\}_{a=1}^{K}, t, \{N_{a,t}\}_{a=1}^{K})$.
    \item $\tilde{\mu}_{a,t}$: Private empirical mean of arm $a$ at round $t$. Defined by $\tilde{\mu}_{a,t}=\frac{\tilde{S}_{a,t}}{N_{a,t}+1}$.
    \item $\tilde{S}'_{a,t}$: Clipped private cumulative reward of arm $a$ by round $t$. Computed by 
    $\tilde{S}'_{a,t}=[\tilde{S}_{a,t}]_{0}^{N_{a,t}}$.
    \item $\tilde{\mu}'_{a,t}$: Clipped private empirical mean of arm $a$ at round $t$. Defined by $\tilde{\mu}'_{a,t}=\frac{\tilde{S}'_{a,t}}{N_{a,t}+1}$.
    \item $m_{a,t}$: Estimator of true mean $\mu_{a}$ at round $t$. Computed by $m_{a,t}=\frac{1+\tilde{S}'_{a,t}}{2+N_{a,t}}$.
\end{enumerate}

%% file: Appendix/more_related_work.tex
\section{Extended Discussion of Related Work}\label{apx:more_related_work}

This appendix provides a more detailed discussion of the literature summarized in Section~\ref{sec:related_work}.  The goal is to situate the paper within four overlapping lines of work: classical best-arm identification, game-theoretic hardness terms, differential privacy in bandits, and differentially private best-arm identification.

\paragraph{Best-arm identification: fixed-confidence and fixed-budget regimes.}
Best-arm identification is a central topic in pure exploration bandits.  In the fixed-confidence setting, the learner is given a target error probability and aims to minimize the number of samples required to identify the best arm.  Early elimination-style approaches were developed by \citet{evendar2006actionelimination}, and later works provided sharper complexity characterizations and stopping rules.  In particular,   \citet{garivier2016optimalbai} proposed Track-and-Stop and characterized the asymptotic fixed-confidence sample complexity through an instance-dependent characteristic time.

The fixed-budget regime studies a different objective: the sampling budget is fixed in advance, and the goal is to minimize the probability of returning a suboptimal arm.  Classical methods in this regime include Successive Rejects~\citep{audibert2010bestarm} and Sequential Halving~\citep{karnin2013almostoptimal}.   \citet{gabillon2012bestarmunified} proposed a unified view of fixed-budget and fixed-confidence BAI.  \citet{kaufmann2016complexitybai} are especially relevant to our work because they explicitly analyze complexity notions in both regimes and show that fixed-budget and fixed-confidence objectives are not interchangeable.   \citet{carpentier2016fixedbudgetlower} further clarified the fixed-budget regime by deriving tight lower bounds for the error probability.

Posterior-sampling and design-based methods form another important line of work.   \citet{russo2016bayesianbai} introduced Top-Two sampling rules for BAI, adapting posterior sampling to pure exploration by forcing competition between a leader and a challenger.  In structured settings, \citet{soare2014linearbai} connected linear BAI to optimal experimental design, while \citet{yang2022fixedbudgetlinearbai} developed a minimax-optimal fixed-budget method for linear BAI.  These works motivate the view that fixed-budget BAI is fundamentally about learning an allocation that separates the true instance from its most confusing alternatives.

\paragraph{Game-theoretic hardness terms and optimal allocations.}
A recurring theme in modern BAI is that the hardness of an instance can be expressed through a game between two players.  The learner chooses a sampling allocation, while an adversary chooses an alternative instance whose best arm differs from the true one and that is hardest to distinguish under the chosen allocation.  This perspective is implicit in the complexity analysis of \citet{kaufmann2016complexitybai} and in the fixed-confidence characteristic time of \citet{garivier2016optimalbai}.

Several recent works make this game-theoretic structure explicit.  \citet{degenne2020gamification} formulate pure exploration for linear bandits through a game-based lens.  \citet{li2023optimalexplorationharderthompson} use posterior-style sampling and a sampling-design learner to approach the optimal fixed-allocation exponent.  \citet{bian2025asymptotically} develop a related min-learner/max-learner approach for fixed-budget best feasible arm identification, using AdaHedge to update the sampling distribution.  AdaHedge was introduced by  \citet{rooij2014adahedge} as a data-adaptive version of Hedge.

Our work follows this broad max--min tradition, but with a privacy-aware notion of distinguishability.  In non-private BAI, hardness is usually measured by KL divergence, Gaussian quadratic forms, or Chernoff-type quantities.  Under differential privacy, the learner's behavior must also be stable under changes in individual observations.  This motivates replacing the classical transportation geometry by a private one that accounts for both statistical and privacy constraints.

\paragraph{Differential privacy in bandits.}
Differential privacy provides a formal stability guarantee for algorithms operating on sensitive data; see~\citet{dwork2014foundationsdp} for a standard reference.  In bandits, privacy is challenging because the adaptive action sequence itself may reveal information about previous observations.  Continual-release mechanisms, including binary-tree mechanisms for private running sums~\citep{dwork2010panprivate}, provide a standard way to maintain private sequential statistics.

Most early work on private bandits focused on regret minimization.   \citet{jain2012dponlinelearning} studied differentially private online learning.    \citet{mishra2015privatemab} studied private stochastic multi-armed bandits, and \citet{tossou2016privatemab} proposed differentially private UCB-type algorithms.  Privacy has also been studied in contextual and linear bandits, including the work of \citet{shariff2018differentiallyprivatecontextuallinear} and  \citet{agarwal2017priceofdp}.  More recently, \citet{azize2022dpbandits} gave a refined analysis of stochastic and linear bandits under global DP, identifying privacy-dependent regimes in regret minimization.

Although these works mainly address regret minimization, they provide key ideas for private pure exploration: private running statistics, post-processing arguments for adaptive policies, and the recognition that privacy can change the effective information available to the learner.

\paragraph{Differential privacy in best-arm identification.}
Differentially private BAI has received increasing attention only recently.   \citet{nikolakakis2021quantileprivate} studied quantile BAI and proposed a private successive-elimination scheme.   \citet{rio2023privatecommunications} studied multi-agent BAI with private communications.

In the fixed-confidence regime,  \citet{azize2023dpbaifixedconfidence} derived privacy-dependent sample-complexity lower bounds for DP-BAI.  A later work by the same authors~\citep{azize2024dpbai} further developed private BAI under local and global privacy models.  \citet{jourdan2025optimalbestarmidentification} sharpened the Bernoulli global-DP theory by introducing a refined privacy-aware divergence, a private geometric batching estimator, and a Top-Two sampling rule based on private transportation costs.  Their work shows that the classical KL transportation cost is no longer sufficient in private Bernoulli BAI.

The closest work to ours in the fixed-budget private regime is \citet{chen2024fixed}.  They study fixed-budget DP-BAI in linear bandits with bounded rewards, propose the DP-BAI algorithm based on a MAX-DET principle, prove pure DP, and derive exponentially decaying upper and lower bounds.  Their work fills an important gap between non-private fixed-budget BAI and private fixed-confidence BAI.  Our work is closest to~\citet{chen2024fixed} in the setting, because we also study fixed-budget private BAI.  At the same time, it is closer to~\citet{azize2023dpbaifixedconfidence,azize2024dpbai,jourdan2025optimalbestarmidentification} in the sense that both use concepts from information geometry. Indeed, our analysis is built around a privacy-aware transportation cost for Bernoulli arms.

%% file: Appendix/proof_of_lipschitz.tex
\section{Properties of the Privacy-Aware Divergence}\label{apx:proof_lipschitz}
There are some properties of $d_{\epsilon}$, which we have defined in Section \ref{sec:lower_bound}, that are of importance to subsequent theoretical analysis as well as algorithm implementation. In this section, we state those properties explicitly and provide justifications whenever necessary.

\subsection{Closed-form expression and solution}
Recall the definition
\[
d_\epsilon(\theta_1,\theta_2,\lambda)
:=
\inf_{\theta'\in\Theta}
\mathbb E_{a\sim\lambda}
\!\big[
\epsilon\TV(\mathrm{Ber}(\mu_{1,a})\|\mathrm{Ber}(\mu'_a))
+
\KL(\mathrm{Ber}(\mu'_a)\|\mathrm{Ber}(\mu_{2,a}))
\big],
\]

For Bernoulli rewards, the expression separates arm by arm. Since
\[
\TV(\mathrm{Ber}(p)\|\mathrm{Ber}(q))=|p-q|,
\]
we have
\begin{align*}
 d_\epsilon(\theta_1,\theta_2,\lambda)
&=\inf_{\theta'\in\Theta}\sum_{a=1}^K\lambda_a
\Big[
\epsilon|\mu_{1,a}-\mu'_a|
+\mu'_a\log\frac{\mu'_a}{\mu_{2,a}}
+(1-\mu'_a)\log\frac{1-\mu'_a}{1-\mu_{2,a}}
\Big]\\
&=\sum_{a=1}^K\lambda_a
\inf_{u\in(0,1)}
\Big[
\epsilon|\mu_{1,a}-u|
+u\log\frac{u}{\mu_{2,a}}
+(1-u)\log\frac{1-u}{1-\mu_{2,a}}
\Big].
\end{align*}
It is therefore convenient to define the one-dimensional Bernoulli divergence
\[
d_\epsilon^{\mathrm{Ber}}(p,q)
:=
\inf_{u\in(0,1)}
\Big\{
\epsilon\TV(\mathrm{Ber}(p)\|\mathrm{Ber}(u))
+
\KL(\mathrm{Ber}(u)\|\mathrm{Ber}(q))
\Big\}.
\]
With this notation,
\[
d_\epsilon(\theta_1,\theta_2,
\lambda)
=
\sum_{a=1}^K\lambda_a d_\epsilon^{\mathrm{Ber}}(\theta_{1,a},\theta_{2,a}).
\]

The Bernoulli divergence admits a closed-form expression. Let
\[
g_\epsilon^{-}(x)=\frac{x e^\epsilon}{1+x(e^\epsilon-1)},
\qquad
 g_\epsilon^{+}(x)=1-g_\epsilon^{-}(1-x)=\frac{x}{e^\epsilon(1-x)+x}.
\]
Then, for \(p,q\in(0,1)\), \citep{jourdan2025optimalbestarmidentification}
\[
d_\epsilon^{\mathrm{Ber}}(p,q)=
\begin{cases}
    -\log\!\bigl(1-q(1-e^{-\epsilon})\bigr)-\epsilon p, & p\in(0,g_\epsilon^{+}(q)),\\[1mm]
    \kl(p,q), & p\in[g_\epsilon^{+}(q),g_\epsilon^{-}(q)],\\[1mm]
    -\log\!\bigl(1+q(e^\epsilon-1)\bigr)+\epsilon p, & p\in(g_\epsilon^{-}(q),1).
\end{cases}
\]
This formula will be used both in the algorithmic implementation and in the concentration analysis.

\subsection{Lipschitz continuity}
\begin{theorem}\label{thm:Lipschitz-dber}
The map \(q\mapsto d_\epsilon^{\mathrm{Ber}}(p,q)\) is \(C_\mu\)-Lipschitz on \([3\xi_\mu,1-3\xi_\mu]\), uniformly over \(p\in[0,1]\), where
\[
C_\mu=\frac{1}{3\xi_\mu(1-3\xi_\mu)},
\qquad
\xi_\mu=\frac14\min_{a\in[K]}\{\mu_a,1-\mu_a\}.
\]
\end{theorem}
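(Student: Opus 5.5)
The idea is to use the variational definition of $d_\epsilon^{\mathrm{Ber}}$ rather than the three-piece closed form. Writing
\[
d_\epsilon^{\mathrm{Ber}}(p,q)=\inf_{u\in(0,1)} h_u(q),\qquad h_u(q):=\epsilon|p-u|+\kl(u,q),
\]
the first term of $h_u$ does not depend on $q$. So it suffices to show two things: each $h_u$ is $C_\mu$-Lipschitz on $[3\xi_\mu,1-3\xi_\mu]$ with a constant uniform in $u\in(0,1)$ and $p\in[0,1]$; and a pointwise infimum of uniformly $C$-Lipschitz functions is again $C$-Lipschitz, provided it is finite. This avoids any case analysis at the switching points $g_\epsilon^{\pm}(q)$, which themselves move with $q$.

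\textbf{Step 1 (derivative bound).} For fixed $u\in(0,1)$, the map $q\mapsto\kl(u,q)=u\log\frac{u}{q}+(1-u)\log\frac{1-u}{1-q}$ is smooth on $(0,1)$ with
\[
\partial_q\kl(u,q)=-\frac{u}{q}+\frac{1-u}{1-q}=\frac{q-u}{q(1-q)}.
\]
Since $u,q\in(0,1)$ we have $|q-u|\le 1$, hence $|\partial_q h_u(q)|\le \frac{1}{q(1-q)}$.

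\textbf{Step 2 (uniform constant on the interval).} Because $\xi_\mu\le \frac18$, we get $3\xi_\mu\le\frac38<\frac12$, so the interval $[3\xi_\mu,1-3\xi_\mu]$ is nonempty and symmetric about $\frac12$. The concave function $q(1-q)$ attains its minimum over this interval at the endpoints, where it equals $3\xi_\mu(1-3\xi_\mu)$. Hence $\sup_{q}|\partial_q h_u(q)|\le C_\mu$ on the interval. By the mean value theorem, $|h_u(q_1)-h_u(q_2)|\le C_\mu|q_1-q_2|$ for all $q_1,q_2$ in the interval, uniformly in $u$ and $p$.

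\textbf{Step 3 (passing to the infimum).} The infimum is finite, since choosing $u=q$ gives $d_\epsilon^{\mathrm{Ber}}(p,q)\le\epsilon|p-q|\le\epsilon$, and it is nonnegative. For any $q_1,q_2$ in the interval and any $u$,
\[
d_\epsilon^{\mathrm{Ber}}(p,q_1)\le h_u(q_1)\le h_u(q_2)+C_\mu|q_1-q_2|.
\]
Taking the infimum over $u$ on the right gives $d_\epsilon^{\mathrm{Ber}}(p,q_1)\le d_\epsilon^{\mathrm{Ber}}(p,q_2)+C_\mu|q_1-q_2|$. Exchanging $q_1$ and $q_2$ gives the reverse inequality, and the claim follows.

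The only point needing care is uniformity in $u$. Near $u\to 0$ or $u\to1$ the function $\kl(u,\cdot)$ blows up only as $q$ approaches the boundary of $(0,1)$, not on the compact interval, and the bound $|q-u|\le1$ handles all $u$ at once. The endpoint cases $p\in\{0,1\}$ cause no difficulty, since $p$ enters only through the $q$-independent term $\epsilon|p-u|$.
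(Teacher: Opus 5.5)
Your proposal is correct and follows essentially the same route as the paper: you bound $|\partial_q \kl(u,q)| = |q-u|/(q(1-q))$ by $C_\mu$ uniformly in $u$, apply the mean value theorem, and then pass the Lipschitz bound through the infimum over $u$ in both directions. Your extra remarks fill in two small justifications the paper leaves implicit: that $3\xi_\mu \le 3/8 < 1/2$, so $q(1-q)$ is minimized at the endpoints of the interval, and that the infimum is finite.
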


\begin{proof}[Proof of Theorem \ref{thm:Lipschitz-dber}]
    
Fix $p\in[0,1]$ and $q,q'\in[3\xi_{\mu}, 1-3\xi_{\mu}]$, for $u\in(0,1)$,
    \[|(\epsilon|p-u|+\kl(u,q))-(\epsilon|p-u|+\kl(u,q'))|=|\kl(u,q)-\kl(u,q')|.\]

    Notice that
    \[\frac{\partial}{\partial q}\kl(u,q)=\frac{q-u}{q(1-q)}.\]

    Since $q(1-q) \geq 3\xi_{\mu}(1-3\xi_{\mu})$ and $|q-u| \leq 1$, we have
    \[ \bigg|\frac{\partial}{\partial q}\kl(u,q)\bigg| \leq \frac{1}{3\xi_{\mu}(1-3\xi_{\mu})}=C_{\mu}.\]

    By the mean-value theorem, 
    \[|\kl(u,q)-\kl(u,q')| \leq C_{\mu}|q-q'|.\]

    If we define $f(p,q,u)=\epsilon|p-u|+\kl(u,q)$, then it shows that
    \[|f_(p,q,u)-f(p,q',u)| \leq C_{\mu}|q-q'|,\]
    then for every fixed $u\in(0,1)$, it implies $f$ is $C_{\mu}$-Lipschitz with respect to second argument.

    For any $u\in(0,1)$, we have
    \[f(p,q,u) \leq f(p,q',u) + C_{\mu}|q-q'|.\]

    Take infimum over $u$ on both sides, we have
    \[d_{\epsilon}^{\mathrm{Ber}}(p,q)=\inf_{u\in(0,1)}f(p,q,u) \leq \inf_{u\in(0,1)}f(p,q',u)+C_{\mu}|q-q'|=d_{\epsilon}^{\mathrm{Ber}}(p,q')+C_{\mu}|q-q'|.\]

    We swap $q$ and $q'$ to obtain
    \[d_{\epsilon}^{\mathrm{Ber}}(p,q')\inf_{u\in(0,1)}f(p,q',u) \leq \inf_{u\in(0,1)}f(p,q,u)+C_{\mu}|q'-q|=d_{\epsilon}^{\mathrm{Ber}}(p,q)+C_{\mu}|q'-q|.\]

    and combining the two inequalities leads to
    \[|d_{\epsilon}^{\mathrm{Ber}}(p,q)-d_{\epsilon}^{\mathrm{Ber}}(p,q')| \leq C_{\mu}|q-q'|.\]

\end{proof}

\begin{theorem}\label{thm:Lipschitz-dep}
For fixed \(\theta_1,\theta_2\), the map \(\lambda\mapsto d_\epsilon(\theta_1,\theta_2,\lambda)\) is \(\epsilon\)-Lipschitz in $\lambda$ with respect to the \(\ell_1\)-norm. That is, for $\lambda_{1},\lambda_{2}\in\Delta_{[K]}, |d_{\epsilon}(\theta_{1},\theta_{2},\lambda_{1})-d_{\epsilon}(\theta_{1},\theta_{2},\lambda_{2})| \leq \epsilon \| \lambda_{1}-\lambda_{2}\|_{1}$.
\end{theorem}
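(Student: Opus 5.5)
The plan is to use the arm-wise decomposition already established above,
\[
d_\epsilon(\theta_1,\theta_2,\lambda)=\sum_{a=1}^K\lambda_a\, d_\epsilon^{\mathrm{Ber}}(\mu_{1,a},\mu_{2,a}),
\]
which shows that, for fixed \(\theta_1,\theta_2\), the map \(\lambda\mapsto d_\epsilon(\theta_1,\theta_2,\lambda)\) is \emph{linear} in \(\lambda\). Writing \(c_a:=d_\epsilon^{\mathrm{Ber}}(\mu_{1,a},\mu_{2,a})\), the difference becomes
\[
d_\epsilon(\theta_1,\theta_2,\lambda_1)-d_\epsilon(\theta_1,\theta_2,\lambda_2)=\sum_{a=1}^K(\lambda_{1,a}-\lambda_{2,a})c_a .
\]
Hölder's inequality then gives the bound \(\max_a|c_a|\cdot\|\lambda_1-\lambda_2\|_1\). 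So the whole proof reduces to the uniform estimate \(0\le d_\epsilon^{\mathrm{Ber}}(p,q)\le\epsilon\) for all \(p,q\in(0,1)\). This is also the bound the paper invokes when it says the AdaHedge losses lie in \([0,1]\).

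\textbf{Nonnegativity.} Every term inside the infimum, \(\epsilon|p-u|+\kl(u,q)\), is nonnegative, so the infimum is nonnegative.

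\textbf{Upper bound.} Here I would plug suitable choices of \(u\) into the infimum. The natural first choice is \(u=q\), which kills the KL term and gives \(d_\epsilon^{\mathrm{Ber}}(p,q)\le\epsilon|p-q|\le\epsilon\). The case \(u=q\) is admissible, since \(q\in(0,1)\) and the infimum is over \(u\in(0,1)\).

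As a sanity check, the same bound should follow from the closed form. In the left branch, \(-\log(1-q(1-e^{-\epsilon}))\le-\log e^{-\epsilon}=\epsilon\). In the right branch, \(\epsilon p-\log(1+q(e^\epsilon-1))\le\epsilon\). Since \(u=q\) already gives the bound directly, I would not rely on these computations.

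\textbf{Refinement and conclusion.} With \(0\le c_a\le\epsilon\), I can do slightly better than Hölder. Because \(\sum_a(\lambda_{1,a}-\lambda_{2,a})=0\), one may subtract \(\epsilon/2\) from each \(c_a\) without changing the sum, which yields the sharper constant \(\tfrac{\epsilon}{2}\|\lambda_1-\lambda_2\|_1\). This is not needed, though; the crude bound already gives the stated \(\epsilon\)-Lipschitz property.

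I do not expect a genuine obstacle. The only points needing care are that the decomposition into \(d_\epsilon^{\mathrm{Ber}}\) holds because the infimum over \(\theta'\) separates across coordinates, as already shown above, and that the choice \(u=q\) lies in the open interval \((0,1)\), which it does for \(q\in(0,1)\).
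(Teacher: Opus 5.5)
Your proof is correct and matches the paper's argument: for fixed $\theta_1,\theta_2$ the map $\lambda\mapsto d_\epsilon(\theta_1,\theta_2,\lambda)$ is linear in $\lambda$ via the arm-wise decomposition, with coefficients $d_\epsilon^{\mathrm{Ber}}(\mu_{1,a},\mu_{2,a})\in[0,\epsilon]$. Beyond the paper's one-line proof, you justify the upper bound explicitly by choosing $u=q$, and your use of $\sum_a(\lambda_{1,a}-\lambda_{2,a})=0$ to sharpen the constant to $\epsilon/2$ is also valid.
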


\begin{proof}[Proof of Theorem~\ref{thm:Lipschitz-dep}]
The claim follows immediately from the facts that \(d_\epsilon^{\mathrm{Ber}}(p,q)\in[0,\epsilon]\) for all \(p\in[0,1]\), \(q\in(0,1)\), and that \(d_\epsilon(\theta_1,\theta_2,\lambda)\) is linear in \(\lambda\).
\end{proof}

%% file: Appendix/lower_bound_proof.tex
\section{Proof of Lower Bound} \label{apx:lowerbound}

We now formally state the proof of Theorem~\ref{thm:LowerBound}.
\begin{proof}[Proof of Theorem \ref{thm:LowerBound}]
We fix a policy $\pi$ consisting of a sampling rule $\{\lambda_t\}_{t=1}^T$ and a recommendation rule $I$. The goal of the proof is to upper bound
\[
\KL \bigl(\mathbb{M}_{\theta',\pi} \,\|\, \mathbb{M}_{\theta^{*},\pi}\bigr).
\]
Following the approach of Theorem~2 in \citet{jourdan2025optimalbestarmidentification}, let $\theta' \in \Alt(\theta^{*})$ be arbitrary, and let $\theta'' \in \Theta$ be an auxiliary (``intermediate'') bandit instance.

For each arm $a \in [K]$, define
\[
c_a \in\argmax_{\gamma\in\Gamma(P_{\theta'}^a,P_{\theta''}^a)} ,\mathbb{P}_{r,r''\sim\gamma}(r'=r''),\]
where  $\Gamma(P_{\theta'}^a,P_{\theta''}^a)$ is the set of all couplings between $P_{\theta'}^a$ and $P_{\theta''}^a$,
as the maximal coupling between $P_{\theta'}^a$ and $P_{\theta''}^a$, such that:
\begin{enumerate}
    \item The marginals are $P_{\theta'}^a$ and $P_{\theta''}^a$ respectively, i.e.,
    \[\sum_{r'}c_{a}(r', r'')=P_{\theta''}^{a}(r''),\]
    \[\sum_{r''}c_{a}(r',r'')=P_{\theta'}^{a}(r').\]
    \item Among all couplings between $P_{\theta'}^a$ and $P_{\theta''}^a$, it maximizes the probability $\mathbb{P}(r'=r'')$. 
    As a consequence, 
    \[\mathbb{P}_{r,r''\sim c_{a}}(r'\neq r'')=TV(P_{\theta'}^a || P_{\theta''}^a).\]
\end{enumerate}

We construct a coupled environment $\gamma$ associated with $(\theta', \theta'')$ over a time horizon $T$, which generates an augmented history
\[
\bigl\{(A_t, r'_t, r''_t)\bigr\}_{t=1}^T \;\text{and}\; A_{\mathrm{out}}.
\]
The interaction proceeds iteratively as follows:
\begin{enumerate}
    \item At each time $t$, the arm $A_t$ is drawn according to the sampling rule $\lambda_t$, based only on the history
    \[
    (A_1, r'_1, A_2, r'_2, \dots, A_{t-1}, r'_{t-1}).
    \]
    In particular, the policy does not depend on $\{r''_i\}_{i=1}^{t-1}$ when selecting arms.
    
    \item Conditionally on $A_t$, the reward pair $(r'_t, r''_t)$ is sampled from the maximal coupling $c_{A_t}$, and is conditionally independent of the past history
    \[
    \{(A_i, r'_i, r''_i)\}_{i=1}^{t-1}.
    \]

    \item The recommended arm $A_{\mathrm{out}}$ is drawn according to the recommendation rule $I$, based only on the same history up to time $T$
    \[
    (A_1, r'_1, A_2, r'_2, \dots, A_{T}, r'_{T}),
    \]
    and is independent of $\{r''_i\}_{i=1}^{T-1}$.
\end{enumerate}

    This interaction between $\pi$ and $\gamma$ mirrors the standard interaction between the policy $\pi$ and the instance $\theta'$, with the addition that, at each round $t$, an auxiliary reward $r''_t$ is sampled via the maximal coupling of $P_{\theta'}^{A_t}$ and $P_{\theta''}^{A_t}$.

The distribution of the augmented history induced by $(\gamma,\pi)$ is given by
\[
p_{\gamma,\pi}(A_{1:T}, r'_{1:T}, r''_{1:T}, A_{\mathrm{out}})
= \Biggl[ \prod_{t=1}^T 
\lambda_t\bigl(A_t \mid A_{1:t-1}, r'_{1:t-1}\bigr)
\, c_{A_t}(r'_t, r''_t) \Biggr]
I\bigl(A_{\mathrm{out}} \mid A_{1:T}, r'_{1:T}\bigr).
\]

For notational convenience, define
\[
\mathbf{A} = (A_1,\dots,A_T), \quad
\mathbf{r}' = (r'_1,\dots,r'_T), \quad
\mathbf{r}'' = (r''_1,\dots,r''_T),
\]
\[
c_{\mathbf{A}}(\mathbf{r}', \mathbf{r}'')
:= \prod_{t=1}^T c_{A_t}(r'_t, r''_t),
\]
and
\[
\lambda(\mathbf{A}, A_{\mathrm{out}} \mid \mathbf{r}')
:= \Biggl[ \prod_{t=1}^T 
\lambda_t\bigl(A_t \mid A_{1:t-1}, r'_{1:t-1}\bigr) \Biggr]
I\bigl(A_{\mathrm{out}} \mid A_{1:T}, r'_{1:T}\bigr).
\]
Then
\[
p_{\gamma,\pi}(\mathbf{A}, \mathbf{r}', \mathbf{r}'', A_{\mathrm{out}})
= \lambda(\mathbf{A}, A_{\mathrm{out}} \mid \mathbf{r}')
\, c_{\mathbf{A}}(\mathbf{r}', \mathbf{r}'').
\]

By construction of the coupling, the distribution $\mathbb{M}_{\theta',\pi}$ is the marginal of $p_{\gamma,\pi}$ after integrating out $(\mathbf{r}', \mathbf{r}'')$, i.e.,
\[
\mathbb{M}_{\theta',\pi}(\mathbf{A}, A_{\mathrm{out}})
= \int p_{\gamma,\pi}(\mathbf{A}, \mathbf{r}', \mathbf{r}'', A_{\mathrm{out}})
\, \rmd\mathbf{r}' \, \rmd\mathbf{r}''.
\]

Next, define the auxiliary joint distribution
\[
q_{\gamma,\pi}(\mathbf{A}, \mathbf{r}', \mathbf{r}'', A_{\mathrm{out}})
= \lambda(\mathbf{A}, A_{\mathrm{out}} \mid \mathbf{r}'')
\frac{p_{\mathbf{A}}(\mathbf{r}'')}{p''_{\mathbf{A}}(\mathbf{r}'')}
\, c_{\mathbf{A}}(\mathbf{r}', \mathbf{r}''),
\]
where
\[
p_{\mathbf{A}}(\mathbf{r}'')
:= \prod_{t=1}^T P_{\theta^*}^{A_t}(r''_t),
\qquad
p''_{\mathbf{A}}(\mathbf{r}'')
:= \prod_{t=1}^T P_{\theta''}^{A_t}(r''_t).
\]

\paragraph{Validity of $q_{\gamma,\pi}$.}
We verify that $q_{\gamma,\pi}$ is a probability distribution:
\begin{align*}
&\sum_{\mathbf{A}, A_{\mathrm{out}}}
\int q_{\gamma,\pi}(\mathbf{A}, \mathbf{r}', \mathbf{r}'', A_{\mathrm{out}})
\, d\mathbf{r}' \, d\mathbf{r}'' \\
&= \sum_{\mathbf{A}, A_{\mathrm{out}}}
\int \lambda(\mathbf{A}, A_{\mathrm{out}} \mid \mathbf{r}'')
\frac{p_{\mathbf{A}}(\mathbf{r}'')}{p''_{\mathbf{A}}(\mathbf{r}'')}
\, c_{\mathbf{A}}(\mathbf{r}', \mathbf{r}'')
\, \rmd\mathbf{r}' \, \rmd\mathbf{r}'' \\
&=\sum_{\mathbf{A}, A_{\mathrm{out}}}
\int\left[\int\lambda(\mathbf{A}, A_{\mathrm{out}} \mid \mathbf{r}'')
\frac{p_{\mathbf{A}}(\mathbf{r}'')}{p''_{\mathbf{A}}(\mathbf{r}'')}
\, c_{\mathbf{A}}(\mathbf{r}', \mathbf{r}'') 
\, \rmd\mathbf{r}' \,\right] \rmd\mathbf{r}''\\
&=\sum_{\mathbf{A}, A_{\mathrm{out}}}
\int \lambda(\mathbf{A}, A_{\mathrm{out}} \mid \mathbf{r}'')
\frac{p_{\mathbf{A}}(\mathbf{r}'')}{p''_{\mathbf{A}}(\mathbf{r}'')}
\left(\int c_{\mathbf{A}}(\mathbf{r}', \mathbf{r}'') \, \rmd\mathbf{r'} \right)\, \rmd \mathbf{r}''\\
&=\sum_{\mathbf{A}, A_{\mathrm{out}}} 
\int \lambda(\mathbf{A}, A_{\mathrm{out}} \mid \mathbf{r}'')
\frac{p_{\mathbf{A}}(\mathbf{r}'')}{p''_{\mathbf{A}}(\mathbf{r}'')}\,
p''_{\mathbf{A}}(\mathbf{r}'')\, \rmd \mathbf{r}''\\
&= \sum_{\mathbf{A}, A_{\mathrm{out}}}
\int \lambda(\mathbf{A}, A_{\mathrm{out}} \mid \mathbf{r}'')
\, p_{\mathbf{A}}(\mathbf{r}'')
\, \rmd\mathbf{r}'' \\
&= \int p_{\mathbf{A}}(\mathbf{r}'') \, \rmd\mathbf{r}'' = 1.
\end{align*}

Moreover, integrating out $(\mathbf{r}', \mathbf{r}'')$ yields
\[
\int q_{\gamma,\pi}(\mathbf{A}, \mathbf{r}', \mathbf{r}'', A_{\mathrm{out}})
\, \rmd\mathbf{r}' \, \rmd\mathbf{r}''
= \mathbb{M}_{\theta^*,\pi}(\mathbf{A}, A_{\mathrm{out}}).
\]

\paragraph{KL contraction.}
By the data-processing inequality,
\[
\KL\bigl(\mathbb{M}_{\theta',\pi} \,\|\, \mathbb{M}_{\theta^*,\pi}\bigr)
\leq \KL\bigl(p_{\gamma,\pi} \,\|\, q_{\gamma,\pi}\bigr).
\]

\paragraph{KL upper bound.}
We now bound the right-hand side:
\begin{align*}
\KL(p_{\gamma,\pi} \| q_{\gamma,\pi})
&= \mathbb{E}_{p_{\gamma,\pi}}
\left[
\log \frac{
\lambda(\mathbf{A}, A_{\mathrm{out}} \mid \mathbf{r}')
}{
\lambda(\mathbf{A}, A_{\mathrm{out}} \mid \mathbf{r}'')
}
+ \log \frac{p''_{\mathbf{A}}(\mathbf{r}'')}{p_{\mathbf{A}}(\mathbf{r}'')}
\right] \tag{a} \\
&\leq \mathbb{E}_{p_{\gamma,\pi}}
\left[
\epsilon \, d_{\mathrm{Ham}}(\mathbf{r}', \mathbf{r}'')
+ \log \frac{p''_{\mathbf{A}}(\mathbf{r}'')}{p_{\mathbf{A}}(\mathbf{r}'')}
\right] \tag{b} \\
&= \sum_{t=1}^T \mathbb{E}_{p_{\gamma,\pi}}
\left[
\epsilon \, \mathbf{1}\{r'_t \neq r''_t\}
+ \log \frac{P_{\theta''}^{A_t}(r''_t)}{P_{\theta^*}^{A_t}(r''_t)}
\right] \tag{c} \\
&= \sum_{t=1}^T \mathbb{E}_{p_{\gamma,\pi}}
\left[
\mathbb{E}\bigg[
\epsilon \, \mathbf{1}\{r'_t \neq r''_t\}
+ \log \frac{P_{\theta''}^{A_t}(r''_t)}{P_{\theta^*}^{A_t}(r''_t)}
\,\Big|\, A_t
\bigg]
\right] \tag{d} \\
&= \sum_{t=1}^T \mathbb{E}_{p_{\gamma,\pi}}
\Bigl[
\epsilon \, \mathrm{TV}(P_{\theta'}^{A_t}, P_{\theta''}^{A_t})
+ \KL(P_{\theta''}^{A_t} \,\|\, P_{\theta^*}^{A_t})
\Bigr] \tag{e} \\
&= \sum_{a=1}^K \mathbb{E}_{\theta',\pi}[N_{a,T}]
\Bigl[
\epsilon \, \mathrm{TV}(P_{\theta'}^{a}, P_{\theta''}^{a})
+ \KL(P_{\theta''}^{a} \,\|\, P_{\theta^*}^{a})
\Bigr]. \tag{f}
\end{align*}

\paragraph{Justifications.}
\begin{enumerate}[label=(\alph*)]
    \item Definition of KL divergence: 
    \begin{align*}
\KL(p_{\gamma,\pi}||q_{\gamma,\pi})&=\mathbb{E}_{p_{\gamma,\pi}}\left[\log\frac{\lambda(\mathbf{A}, A_{\mathrm{out}} \mid \mathbf{r}')
\, c_{\mathbf{A}}(\mathbf{r}', \mathbf{r}'')}{\lambda(\mathbf{A}, A_{\mathrm{out}} \mid \mathbf{r}'')
\frac{p_{\mathbf{A}}(\mathbf{r}'')}{p''_{\mathbf{A}}(\mathbf{r}'')}
\, c_{\mathbf{A}}(\mathbf{r}', \mathbf{r}'')}
    \right]\\
    &=\mathbb{E}_{p_{\gamma,\pi}}
\left[
\log \frac{
\lambda(\mathbf{A}, A_{\mathrm{out}} \mid \mathbf{r}')
}{
\lambda(\mathbf{A}, A_{\mathrm{out}} \mid \mathbf{r}'')
}
+ \log \frac{p''_{\mathbf{A}}(\mathbf{r}'')}{p_{\mathbf{A}}(\mathbf{r}'')}
\right].
    \end{align*}
    \item $\epsilon$-differential privacy (group privacy): 
    \[
    \lambda(\cdot \mid \mathbf{r}')
    \leq \exp\bigl(\epsilon \, d_{\mathrm{Ham}}(\mathbf{r}', \mathbf{r}'')\bigr)
    \lambda(\cdot \mid \mathbf{r}'').
    \]
    \item Decomposition over time.
    \item Law of total expectation.
    \item Under $A_t=a$, $(r'_t, r''_t) \sim c_a$ implies
    \[
    \mathbb{P}(r'_t \neq r''_t) = \mathrm{TV}(P_{\theta'}^a, P_{\theta''}^a).
    \]
    \item Regrouping by arm pulls, where $N_{a,T}$ denotes the number of pulls of arm $a$.
\end{enumerate}

Define the empirical allocation
\[
\lambda' \coloneqq \left\{ \frac{\mathbb{E}_{\theta',\pi}[N_{a,T}]}{T} \right\}_{a=1}^K \in \Delta_{[K]}.
\]
Since the previous bound holds for an arbitrary intermediate instance $\theta'' \in \Theta$, we optimize over $\theta''$ to obtain
\begin{align*}
\KL\bigl(\mathbb{M}_{\theta',\pi} \,\|\, \mathbb{M}_{\theta^*,\pi}\bigr)
&\leq \KL(p_{\gamma,\pi} \,\|\, q_{\gamma,\pi}) \\
&\leq \inf_{\theta'' \in \Theta}
T \sum_{a=1}^K \lambda'_a
\Bigl[
\epsilon \, \mathrm{TV}\bigl(P_{\theta'}^a, P_{\theta''}^a\bigr)
+ \KL\bigl(P_{\theta''}^a \,\|\, P_{\theta^*}^a\bigr)
\Bigr] \\
&= T \inf_{\theta'' \in \Theta}
\mathbb{E}_{a \sim \lambda'}
\Bigl[
\epsilon \, \mathrm{TV}\bigl(P_{\theta'}^a, P_{\theta''}^a\bigr)
+ \KL\bigl(P_{\theta''}^a \,\|\, P_{\theta^*}^a\bigr)
\Bigr] \\
&= T \, d_{\epsilon}(\theta', \theta^*, \lambda').
\end{align*}

Let $a_{\theta'}$ denote the optimal arm under $\theta'$, and define the event
\[
E \coloneqq \{ A_{\mathrm{out}} = a_{\theta'} \}.
\]
By data processing,
\begin{align*}
\kl\bigl(\mathbb{M}_{\theta',\pi}(E), \mathbb{M}_{\theta^*,\pi}(E)\bigr)
&\leq \KL\bigl(\mathbb{M}_{\theta',\pi} \,\|\, \mathbb{M}_{\theta^*,\pi}\bigr) \\
&\leq T \, d_{\epsilon}(\theta', \theta^*, \lambda').
\end{align*}

Let
\[
q \coloneqq \mathbb{M}_{\theta',\pi}(E),
\qquad
p \coloneqq \mathbb{M}_{\theta^*,\pi}(E).
\]
Then
\[
\kl(q,p)
= q \log \frac{q}{p} + (1-q) \log \frac{1-q}{1-p}.
\]
Expanding and lower bounding,
\begin{align*}
\kl(q,p)
&= q \log \frac{1}{p} + q \log q + (1-q)\log(1-q) + (1-q)\log \frac{1}{1-p} \\
&\geq q \log \frac{1}{p} + \bigl[q \log q + (1-q)\log(1-q)\bigr].
\end{align*}
Hence,
\[
\log \frac{1}{p}
\leq \frac{T d_{\epsilon}(\theta', \theta^*, \lambda')
- \bigl[q \log q + (1-q)\log(1-q)\bigr]}{q}.
\]

Since $\theta' \in \mathrm{Alt}(\theta^{*})$, we have $a_{\theta'} \neq a_{\theta^{*}}$ and
\[
\mathbb{M}_{\theta^*,\pi}(A_{\mathrm{out}} \neq a_{\theta^*})
\geq \mathbb{M}_{\theta^*,\pi}(E) = p,
\]
and thus
\[
-\log \mathbb{M}_{\theta^*,\pi}(A_{\mathrm{out}} \neq a_{\theta^*})
\leq -\log p.
\]
Combining the bounds yields
\[
-\log \mathbb{M}_{\theta^*,\pi}(A_{\mathrm{out}} \neq a_{\theta^*})
\leq
\frac{T d_{\epsilon}(\theta', \theta^*, \lambda')
- \bigl[q \log q + (1-q)\log(1-q)\bigr]}{q}.
\]

\paragraph{Asymptotic analysis.}
By consistency of $\pi$, we have $q \to 1$ as $T \to \infty$, and hence
\[
q \log q + (1-q)\log(1-q) \to 0.
\]
Moreover, under the assumed regularity conditions (e.g., dominated convergence),
\[
\lambda' \to \lambda^*,
\]
and by continuity of $d_{\epsilon}$ in $\lambda$,
\[
\lim_{\lambda'\to\lambda^*} d_{\epsilon}(\theta', \theta^*, \lambda') =d_{\epsilon}(\theta', \theta^*, \lambda^*).
\]
 Dividing the log-probability by $T$ and taking $\limsup$ yields
\[
\limsup_{T \to \infty}
-\frac{1}{T}
\log \mathbb{M}_{\theta^*,\pi}(A_{\mathrm{out}} \neq a_{\theta^*})
\leq d_{\epsilon}(\theta', \theta^*, \lambda^*).
\]
Finally, since the bound works for arbitrary $\theta'\in\mathrm{Alt}(\theta^{*})$, we can take the infimum over $\theta'\in\mathrm{Alt}(\theta^{*})$ to obtain
\[
\limsup_{T \to \infty}
-\frac{1}{T}
\log \mathbb{M}_{\theta^*,\pi}(A_{\mathrm{out}} \neq a_{\theta^*})
\leq \inf_{\theta' \in \Alt(\theta^*)} d_{\epsilon}(\theta', \theta^*, \lambda^*).
\]
The right-hand side still depends on $\lambda^{*}$, a particular member of $\Delta_{[K]}$. To remove its dependence, we can take the supremum over all allocations to obtain the final desired bound: 
\[
\limsup_{T \to \infty}
-\frac{1}{T}
\log \mathbb{M}_{\theta^*,\pi}(A_{\mathrm{out}} \neq a_{\theta^*})
\leq
\sup_{\lambda \in \Delta_{[K]}}
\inf_{\theta' \in \Alt(\theta^*)}
d_{\epsilon}(\theta', \theta^*, \lambda)
= \tau^*(\theta^{*}).
\]

\end{proof}

%% file: Appendix/laplace_tree.tex
\section{Storing and Privatizing Reward Data via Laplace Tree}\label{apx:trees}
We now discuss in detail the subroutine of using a binary Laplace tree~\citep{shariff2018differentiallyprivatecontextuallinear} to store privatized cumulative statistics. There are two cumulative statistics that are of particular interest:
\begin{enumerate}
    \item $N_{a,t}$: the number of pulls allocated to arm $a$ by time $t$.
    \item $S_{a,t}$: the cumulative reward of arm $a$ by time $t$.
\end{enumerate}

Using $N_{a,t}$ and $S_{a,t}$, we could compute an estimation for the true mean $\mu_{a}$ for each arm $a$, such as the empirical mean.
To ensure $\epsilon$-DP, we want the reward used to compute the estimators to be privatized, and yet not to query too many privatized statistics to over-split our privacy budget. We leverage binary tree as the data structure.  For each arm $a\in[K]$, we define binary tree $\mathcal{T}_{a}$ as follows:
\begin{enumerate}
    \item Initialize an empty binary tree.
    \item Whenever a reward is observed, record it in the least unoccupied leaf.
    \item Whenever both children of an internal node are filled with a value, record their sum in the internal node.
    \item Add i.i.d Laplacian noise $\omega\sim\text{Lap}(\frac{m_{T}}{\epsilon})$ to each node value, where $m_{T}=\lceil \log_{2}T\rceil+1$.
\end{enumerate}

Whenever we need to compute empirical means for all arms at any round $t$, we do the following:
\begin{enumerate}
    \item Access counts $N_{i,t}$ for all $i\in[K]$.
    \item Compute prefix sum $[1,N_{i,t}]$ using $\mathcal{T}_{i}$ for all $i\in[K]$.
\end{enumerate}

We formalize the process to manage the Laplace trees into several algorithms, which will be relevant when we present our main algorithm.

\subsection{Initialization of Laplace trees}
We first state how to initialize those trees in Algorithm \ref{algo:init_trees} and \ref{algo:build_trees}.
For each arm $a \in [K]$, we maintain a binary tree $\mathcal T_a$ indexed by
local pull counts $1,2,\dots,T$. Let
\[
L_{T} = 2^{\lceil \log_2 T \rceil},
\qquad
m_{T} = \lceil \log_2 T \rceil + 1,
\qquad
b = \frac{m_{T}}{\epsilon} .
\]
Each node $u$ stores a tuple:
\[
(\ell_u, r_u, \mathrm{raw}_u, Z_u, \widetilde{\mathrm{raw}}_u, \mathrm{complete}_u),
\]
where $[\ell_u,r_u]$ is its interval, $\mathrm{raw}_u$ is the raw subtree sum,
$Z_u \sim \mathrm{Lap}(b)$ is sampled once at initialization,
$\widetilde{\mathrm{raw}}_u = \mathrm{raw}_u + Z_u$ is the noisy node value
once the interval is complete, and $\mathrm{complete}_u$ indicates whether all
leaves in the interval have been observed. Since rewards for each arm are
inserted in local pull-count order $1,2,3,\dots$, an interval $[\ell_u,r_u]$
becomes complete exactly when the local pull count reaches $r_u$.

\begin{algorithm}[H]
\caption{Initialize arm-wise Laplace trees (\textsc{Init})}\label{algo:init_trees}
\begin{algorithmic}[1]
\Require Number of arms $K$, horizon $T$, privacy budget $\epsilon$
\State $L_{T} \gets 2^{\lceil \log_2 T \rceil}$
\State $m_{T} \gets \lceil \log_2 T \rceil + 1$
\State $b \gets \frac{m_{T}}{\epsilon}$
\For{$a=1,2,\dots,K$}
    \State $\mathcal T_a \gets \Call{BuildTree}{1, L_{T}, b}$
    \State $N_{a,0} \gets 0$ $\forall a \in [K]$ 
\EndFor
\State \Return $\{\mathcal T_a\}_{a=1}^K$, $\{N_{a,0}\}_{a=1}^{K}$, $(L_{T},m_{T},b)$
\end{algorithmic}
\end{algorithm}

\begin{algorithm}[H]
\caption{\textsc{BuildTree}$(\ell,r,b)$}\label{algo:build_trees}
\begin{algorithmic}[1]
\Require Interval $[\ell,r]$, Laplace scale $b$
\State Create a node $u$
\State $u.\ell \gets \ell$
\State $u.r \gets r$
\State $u.\mathrm{raw} \gets 0$
\State Sample $u.Z \sim \mathrm{Lap}(b)$ 
\State $u.\widetilde{\mathrm{raw}} \gets 0$
\State $u.\mathrm{complete} \gets \mathrm{false}$
\If{$\ell = r$}
    \State $u.\mathrm{left} \gets \mathrm{null}$
    \State $u.\mathrm{right} \gets \mathrm{null}$
\Else
    \State $\mathrm{mid} \gets \lfloor (\ell+r)/2 \rfloor$
    \State $u.\mathrm{left} \gets \Call{BuildTree}{\ell,\mathrm{mid},b}$
    \State $u.\mathrm{right} \gets \Call{BuildTree}{\mathrm{mid}+1,r,b}$
\EndIf
\State \Return $u$
\end{algorithmic}
\end{algorithm}

\subsection{Update of Laplace trees}
At round $t$, an arm $a$ will be pulled, the corresponding Laplace tree will be updated via Algorithms~\ref{algo:update_trees} and \ref{algo:update_node}. One leaf will be filled with the reward value. The leaf to update is the $N_{a,t}$-th leaf, which is the first leaf not updated by round $t$.

\begin{algorithm}[H]
\caption{Update the Laplace tree of the played arm (\textsc{Update})}\label{algo:update_trees}
\begin{algorithmic}[1]
\Require Trees $\{\mathcal T_a\}_{a=1}^K$, previous pull counts $\{N_{a,t-1}\}_{a=1}^{K}$, played arm $a$, reward $r \in \{0,1\}$, round $t$

\State $N_{a,t}=N_{a,t-1}+1$ \Comment{Increase pull count by 1}
\State \Call{UpdateNode}{$\mathcal T_a, N_{a,t}, r$}
\State \Return $\mathcal T_a$, $N_{a,t}$
\end{algorithmic}
\end{algorithm}

\begin{algorithm}[H]
\caption{\textsc{UpdateNode}$(u,s,y)$}\label{algo:update_node}
\begin{algorithmic}[1]
\Require Node $u$, local pull index $s$, reward $y \in \{0,1\}$
\If{$s < u.\ell$ \textbf{or} $s > u.r$}
    \State \Return
\EndIf

\State $u.\mathrm{raw} \gets u.\mathrm{raw} + y$ 

\If{$u.\ell = u.r$}
    \State $u.\widetilde{\mathrm{raw}} \gets u.\mathrm{raw} + u.Z$
    \State $u.\mathrm{complete} \gets \mathrm{true}$
    \State \Return
\EndIf

\If{$s \le u.\mathrm{left}.r$}
    \State \Call{UpdateNode}{$u.\mathrm{left}, s, y$}
\Else
    \State \Call{UpdateNode}{$u.\mathrm{right}, s, y$}
\EndIf

\If{$s = u.r$ \textbf{and} $u.\mathrm{complete} = \mathrm{false}$}
    \State $u.\widetilde{\mathrm{raw}} \gets u.\mathrm{raw} + u.Z$
    \State $u.\mathrm{complete} \gets \mathrm{true}$
\EndIf
\end{algorithmic}
\end{algorithm}

\subsection{Compute privatized empirical means}

To update the running private statistics, we compute the private empirical mean each round. The crucial step is to retrieve the private cumulative reward for each arm. This is implemented via Algorithm \ref{algo:prefixsum} and \ref{algo:computemeans}.

\begin{algorithm}[H]
\caption{\textsc{PrivatePrefixSum}$(u,n)$}\label{algo:prefixsum}
\begin{algorithmic}[1]
\Require Node $u$, prefix length $n$

\If{$n < u.\ell$}
    \State \Return $0$
\EndIf
\If{$u.r \le n$}
    \State \Return $u.\widetilde{\mathrm{raw}}$
\EndIf
\State \Return $\Call{PrivatePrefixSum}{u.\mathrm{left}, n} + \Call{PrivatePrefixSum}{u.\mathrm{right}, n}$
\end{algorithmic}
\end{algorithm}

\begin{algorithm}[H]
\caption{Compute private empirical means from the arm-wise trees (\textsc{ComputeMeans})}\label{algo:computemeans}
\begin{algorithmic}[1]
\Require Trees $\{\mathcal T_a\}_{a=1}^K$, round $t$, pull counts $\{N_{a,t}\}_{a=1}^{K}$
\For{$a=1,2,\dots,K$}
    \If{$N_{a,t} = 0$}
        \State $\tilde{S}_{a,t} \gets 0$
        \State $\tilde{S}'_{a,t} \gets 0$
        \State $m_{a,t}=\frac{1}{2}$
    \Else
        \State $\tilde{S}_{a,t} \gets \Call{PrivatePrefixSum}{\mathcal T_a, N_{a,t}}$
        \State $\tilde{S}'_{a,t} \gets [\tilde{S}_{a,t}]_{0}^{N_{a,t}}$
        \State $m_{a,t}=\frac{1+\tilde{S}'_{a,t}}{2+N_{a,t}}$
    \EndIf
\EndFor
\State \Return $\{\tilde{S}_{a,t}\}_{a=1}^{K}, \{\tilde{S}'_{a,t}\}_{a=1}^{K}, \{m_{a,t}\}_{a=1}^{K}$
\end{algorithmic}
\end{algorithm}

%% file: Appendix/privacy_proof.tex
\section{Differential Privacy Analysis of \textsc{AO-Pri-BAI}}\label{apx:DP}
We now state the proof of Theorem \ref{thm:DP}.
\begin{proof}[Proof of Theorem \ref{thm:DP}]
Fix two neighbouring reward matrices
\[
r=\bigl\{r_{a,t}\bigr\}_{a\in[K], t\in[T]}, \,
r'=\bigl\{r'_{a,t}\bigr\}_{a\in[K], t\in[T]}.
\]
so that there exists a unique pair $(a^{*},t^{*})\in [K]\times[T]$ such that
\[
r_{a,t}=r'_{a,t} \, \forall \, (a,t) \neq (a^{*},t^{*}).
\]

For each arm $a\in[K]$, let $\mathcal{T}_a$ denote its Laplace tree. For every node $u \in \mathcal{T}_a$, write $I_{a,u}\subseteq[T]$ for the local pull-count interval associated with u, and define the corresponding raw subtree sum by
\[
S_{a,u}(r):=\sum_{s\in I_{a,u}} r_{a,s}.
\]

\textsc{AO-Pri-BAI} releases the noisy node value
\[
\tilde{S}_{a,u}(r):=S_{a,u}(r)+Z_{a,u},
\]

where the random variables $\{Z_{a,u}\}$ are mutually independent and satisfy
\[
Z_{a,u}\sim \mathrm{Lap}(b),
\qquad
b=\frac{m_T}{\epsilon},
\qquad
m_T:=\bigl\lceil \log_2 T\bigr\rceil+1.
\]

Let $\mathcal{M}(r):=\bigl\{\widetilde S_{a,u}(r)\bigr\}_{a,u}$ denote the full noisy Laplace forest generated from r.
We first show that the mechanism $r\mapsto \mathcal{M}(r)$ is $\epsilon$-DP.

For any realization $y={y_{a,u}}$ of the forest, the joint density of $\mathcal{M}(r)$ is

\[
\mathcal{M}_{r}(y)
=\prod_{a,u}\frac{1}{2b}
\exp\Bigl(-\frac{\bigl|y_{a,u}-S_{a,u}(r)\bigr|}{b}\Bigr),
\]

and similarly

\[
\mathcal{M}_{r'}(y)
=\prod_{a,u}\frac{1}{2b}
\exp\Bigl(-\frac{\bigl|y_{a,u}-S_{a,u}(r')\bigr|}{b}\Bigr).
\]

Therefore, the joint density ratio is given by

\[
\frac{\mathcal{M}_{r}(y)}{\mathcal{M}_{r'}(y)}
=\prod_{a,u}
\frac{
\exp\Bigl(-\frac{\bigl|y_{a,u}-S_{a,u}(r)\bigr|}{b}\Bigr)
}{
\exp\Bigl(-\frac{\bigl|y_{a,u}-S_{a,u}(r')\bigr|}{b}\Bigr)
}
=\prod_{(a,u) \in P^{*}}
\frac{
\exp\Bigl(-\frac{\bigl|y_{a,u}-S_{a,u}(r)\bigr|}{b}\Bigr)
}{
\exp\Bigl(-\frac{\bigl|y_{a,u}-S_{a,u}(r')\bigr|}{b}\Bigr)
}.
\]

Hence,
\[
\log\frac{\mathcal{M}_{r}(y)}{\mathcal{M}_{r'}(y)}
=\frac{1}{b}\sum_{a,u}
\Bigl(
\bigl|y_{a,u}-S_{a,u}(r')\bigr|
-\bigl|y_{a,u}-S_{a,u}(r)\bigr|
\Bigr).
\]

By triangle inequality, we obtain
\[
\log\frac{\mathcal{M}_{r}(y)}{\mathcal{M}_{r'}(y)}
\leq \frac{1}{b}\sum_{a,u}
\bigl|S_{a,u}(r)-S_{a,u}(r')\bigr|.
\]

Now observe that the two reward tables differ at only one entry, namely $(a^{*},t^{*})$. Hence:

\begin{enumerate}
    \item for every arm $a\neq a^{*}$, all node sums in $\mathcal{T}_{a}$ are unchanged;
    \item within the tree $\mathcal{T}_{a^\star}$, only the nodes lying on the unique root-to-leaf path from the leaf corresponding to local pull count $t^{*}$ to the root can change.
\end{enumerate}

The number of such nodes are at most equal to the height of the tree, which is $m_{T}$. Moreover, since rewards are Bernoulli,
\[
\bigl|r_{a^{*},t^{*}}-r'_{a^{*},t^{*}}\bigr|\leq 1.
\]

Consequently
\[
\sum_{a,u}\bigl|S_{a,u}(r)-S_{a,u}(r')\bigr| \leq m_{T}.
\]

It follows that
\[
\log\frac{\mathcal{M}_{r}(y)}{\mathcal{M}_{r'}(y)}
\leq \frac{m_{T}}{b} = \epsilon.
\]

That is
\[
\mathcal{M}_{r}(y) \leq e^{\epsilon}\mathcal{M}_{r'}(y).
\]

Hence the Laplace forest mechanism $r\mapsto \mathcal{M}(r)$ is $\epsilon$-DP.

We now pass from the noisy forest to the full transcript of \textsc{AO-Pri-BAI}. By construction, all subsequent quantities used by the algorithm—private means, the empirical leader, the challenger distribution, the mixed sampling distribution, the sampled actions, and the final recommendation—are computed solely from $\mathcal{M}(r)$ together with internal randomness that is independent of the reward table. Therefore the full output transcript $\pi(r)=\bigl(A_{1},\dots,A_{T},A_{\mathrm{out}}\bigr) \in [K]^{T+1}$ is a randomized post-processing of $\mathcal{M}(r)$.

By the post-processing property of $\epsilon$-DP, once $\mathcal{M}(r)$ is $\epsilon$-DP, we have the transcript machanism $r \mapsto \pi(r)$ is also $\epsilon$-DP.
\end{proof}

%% file: Appendix/upper_bound_proof.tex
\section{Proof of Upper Bound}\label{apx:upperbound}
We now state the formal proof of Theorem \ref{thm:UpperBound}.
\begin{proof}[Proof of Theorem \ref{thm:UpperBound}]
    Let
    \[\delta_{T}:=\frac{1}{T^{3}},\]
    and define
    \[\varepsilon_{5,T}:=2C_{\mu}r_{T}\bigg(\frac{T^{1-\alpha}-1}{2K(1-\alpha)},\delta_{T}\bigg).\]

    By construction of the final recommendation step in Algorithm \ref{algo:FB-DP-BAI},
    \[A_{\mathrm{out}}=\arg\max_{a\in[K]}\theta_{a}\quad\text{when }\theta\sim q_{\mathrm{Final}},\]
    and therefore the concerned error probability is exactly
    \[\mathbb{P}_{\theta\sim q_{\mathrm{Final}}}(A_{\mathrm{out}}\neq a_{\theta^{*}})=q_{\mathrm{Final}}(\mathrm{Alt}(\theta^{*})).\]

    Define
    \[I_{T}(\theta):=\sum_{a=1}^{K}\bar{e}_{a,T}d_{\epsilon}^{\mathrm{Ber}}(\theta_{a},\mu_{a}),\qquad \bar{e}_{a,T}=\frac{N_{a,T}}{T}.\]
    Then, by definition of $G^{*}$,
    \[G^{*}(\bar{e}_{T})=\inf_{\theta\in\mathrm{Alt}(\theta^{*})}I_{T}(\theta).\]

    We first show that Good Events 5 and 6 hold eventually almost surely under the summable schedule $\delta_{T}=T^{-3}$. By Lemma~\ref{lem:GE5},
    \[E_{1,T}(\delta_{T})\cap E_{3,T}(\delta_{T})\subseteq E_{5,T}(\delta_{T}).\]
    By Lemma~\ref{lem:GE4},
    \[E_{1,T}(\delta_{T})\cap E_{3,T}(\delta_{T})\subseteq E_{4,T}(\delta_{T}),\]
    and since $E_{2,T}$ is deterministic, Lemma~\ref{lem:GE6} implies
    \[E_{1,T}(\delta_{T})\cap E_{3,T}(\delta_{T})\subseteq E_{6,T}(\delta_{T}).\]
    Hence, if we define
    \[\mathcal{E}_{T}:=E_{5,T}(\delta_{T})\cap E_{6,T}(\delta_{T}),\]
    then
    \[E_{1,T}(\delta_{T})\cap E_{3,T}(\delta_{T})\subseteq \mathcal{E}_{T},\]
    and therefore, by Lemmas~\ref{lem:GE1} and \ref{lem:GE3},
    \[\mathbb{P}(\mathcal{E}_{T}^{c})\leq \mathbb{P}(E_{1,T}(\delta_{T})^{c})+\mathbb{P}(E_{3,T}(\delta_{T})^{c})\leq 2\delta_{T}=\frac{2}{T^{3}}.\]
    Since \(\sum_{T=1}^{\infty}\frac{2}{T^{3}}<\infty\), Borel-Cantelli lemma shows that there exists an event $\Omega_{0}$ with $\mathbb{P}(\Omega_{0})=1$ such that for every $\omega\in\Omega_{0}$, there exists $T_{0}(\omega)$ with
    \[\mathcal{E}_{T}(\omega)\text{ holding for all }T\geq T_{0}(\omega).\]

    By Lemma~\ref{lem:GE6control}, there exists an event $\Omega_{1}$ with $\mathbb{P}(\Omega_{1})=1$ such that
    \[\varepsilon_{6,T}(\delta_{T})\to 0\qquad \text{on }\Omega_{1}.\]
    Let
    \[\Omega:=\Omega_{0}\cap\Omega_{1}.\]
    Then $\mathbb{P}(\Omega)=1$.

    Also, \(\varepsilon_{5,T}\to 0\) deterministically. Indeed,
    \[r_{T}(x,\delta)=\rho_{T}(x,\delta)+\frac{1}{x+2},\]
    and with \(x=\frac{T^{1-\alpha}-1}{2K(1-\alpha)}\), \(\delta_{T}=T^{-3}\), \(m_{T}\in O(\log T)\), and \(L_{T}\in O(T)\), we have
    \[r_{T}\bigg(\frac{T^{1-\alpha}-1}{2K(1-\alpha)},\delta_{T}\bigg)\in o(1),\]
    hence
    \[\varepsilon_{5,T}\in o(1).\]

    Fix an arbitrary sample path $\omega\in\Omega$. Then for all sufficiently large $T$, both $E_{5,T}(\delta_{T})$ and $E_{6,T}(\delta_{T})$ hold, and also \(\varepsilon_{6,T}(\delta_{T})\to 0\).

    Now fix an arbitrary $\zeta>0$. Since for each arm $a\in[K]$,
    \[d_{\epsilon}^{\mathrm{Ber}}(\mu_{a},\mu_{a})=0,\]
    and \(p\mapsto d_{\epsilon}^{\mathrm{Ber}}(p,\mu_{a})\) is continuous on $(0,1)$, there exists $\rho_{\zeta}>0$ such that
    \[0<\rho_{\zeta}<\min_{a\in[K]}\{\mu_{a},1-\mu_{a}\},\]
    and whenever \(|p-\mu_{a}|\leq \rho_{\zeta}\),
    \[d_{\epsilon}^{\mathrm{Ber}}(p,\mu_{a})\leq \zeta\qquad \forall a\in[K].\]

    Define the box
    \[B_{\zeta}:=\prod_{a=1}^{K}[\mu_{a}-\rho_{\zeta},\mu_{a}+\rho_{\zeta}]\subset\Theta.\]
    Its Lebesgue volume is
    \[|B_{\zeta}|=(2\rho_{\zeta})^{K}>0.\]
    For every \(\theta\in B_{\zeta}\), we have
    \[d_{\epsilon}^{\mathrm{Ber}}(\theta_{a},\mu_{a})\leq \zeta\qquad \forall a\in[K],\]
    and therefore
    \[I_{T}(\theta)=\sum_{a=1}^{K}\bar{e}_{a,T}d_{\epsilon}^{\mathrm{Ber}}(\theta_{a},\mu_{a})\leq \zeta\sum_{a=1}^{K}\bar{e}_{a,T}=\zeta.\]

    Define the density ratio
    \[R_{T}(\theta):=\frac{q_{\mathrm{Final}}(\theta)}{q_{\mathrm{Final}}(\mu)},\qquad \theta\in\Theta.\]
    Since $q_{\mathrm{Final}}(\mu)>0$ and $\int_{\Theta}q_{\mathrm{Final}}(\theta)d\theta=1$, we have
    \[q_{\mathrm{Final}}(\mathrm{Alt}(\theta^{*}))=\frac{\int_{\mathrm{Alt}(\theta^{*})}R_{T}(\theta)d\theta}{\int_{\Theta}R_{T}(\theta)d\theta}.\]

    On $E_{5,T}(\delta_{T})$, for every \(\theta\in\Theta\),
    \[-I_{T}(\theta)-\varepsilon_{5,T}\leq \frac{1}{T}\log R_{T}(\theta)\leq -I_{T}(\theta)+\varepsilon_{5,T}.\tag{8.1}\]

    On $E_{6,T}(\delta_{T})$, we have
    \[0\leq \tau^{*}(\theta^{*})-G^{*}(\bar{e}_{T})\leq \varepsilon_{6,T}(\delta_{T}),\]
    hence
    \[G^{*}(\bar{e}_{T})\geq \tau^{*}(\theta^{*})-\varepsilon_{6,T}(\delta_{T}).\tag{8.2}\]
    Since \(G^{*}(\bar{e}_{T})=\inf_{\theta\in\mathrm{Alt}(\theta^{*})}I_{T}(\theta)\), (8.2) implies that for every \(\theta\in\mathrm{Alt}(\theta^{*})\),
    \[I_{T}(\theta)\geq \tau^{*}(\theta^{*})-\varepsilon_{6,T}(\delta_{T}).\tag{8.3}\]

    We now bound the numerator. For every \(\theta\in\mathrm{Alt}(\theta^{*})\), combining (8.1) and (8.3),
    \[\frac{1}{T}\log R_{T}(\theta)\leq -I_{T}(\theta)+\varepsilon_{5,T}\leq -\tau^{*}(\theta^{*})+\varepsilon_{6,T}(\delta_{T})+\varepsilon_{5,T}.\]
    Therefore
    \[R_{T}(\theta)\leq \exp\Big(-T(\tau^{*}(\theta^{*})-\varepsilon_{6,T}(\delta_{T})-\varepsilon_{5,T})\Big)\qquad \forall \theta\in\mathrm{Alt}(\theta^{*}).\]
    Since \(\mathrm{Alt}(\theta^{*})\subset \Theta=(0,1)^{K}\) has Lebesgue volume at most \(1\),
    \[\int_{\mathrm{Alt}(\theta^{*})}R_{T}(\theta)d\theta\leq \exp\Big(-T(\tau^{*}(\theta^{*})-\varepsilon_{6,T}(\delta_{T})-\varepsilon_{5,T})\Big).\tag{8.4}\]

    We next bound the denominator from below. For every \(\theta\in B_{\zeta}\), combining (8.1) and the bound \(I_{T}(\theta)\leq \zeta\),
    \[\frac{1}{T}\log R_{T}(\theta)\geq -I_{T}(\theta)-\varepsilon_{5,T}\geq -\zeta-\varepsilon_{5,T}.\]
    Hence
    \[R_{T}(\theta)\geq \exp\Big(-T(\zeta+\varepsilon_{5,T})\Big)\qquad \forall \theta\in B_{\zeta},\]
    and therefore
    \[\int_{\Theta}R_{T}(\theta)d\theta\geq \int_{B_{\zeta}}R_{T}(\theta)d\theta\geq |B_{\zeta}|\exp\Big(-T(\zeta+\varepsilon_{5,T})\Big).\tag{8.5}\]

    Combining (8.4) and (8.5), we obtain
    \[q_{\mathrm{Final}}(\mathrm{Alt}(\theta^{*}))\leq |B_{\zeta}|^{-1}\exp\Big(-T(\tau^{*}(\theta^{*})-\varepsilon_{6,T}(\delta_{T})-2\varepsilon_{5,T}-\zeta)\Big).\]
    Taking \(-\frac{1}{T}\log\) on both sides gives
    \[-\frac{1}{T}\log q_{\mathrm{Final}}(\mathrm{Alt}(\theta^{*}))\geq \tau^{*}(\theta^{*})-\varepsilon_{6,T}(\delta_{T})-2\varepsilon_{5,T}-\zeta-\frac{1}{T}\log |B_{\zeta}|^{-1}.\]

    Letting \(T\to\infty\) along the fixed sample path \(\omega\in\Omega\), and using \(\varepsilon_{6,T}(\delta_{T})\to 0\), \(\varepsilon_{5,T}\to 0\), and \(\frac{1}{T}\log |B_{\zeta}|^{-1}\to 0\), we get
    \[\liminf_{T\to\infty}-\frac{1}{T}\log q_{\mathrm{Final}}(\mathrm{Alt}(\theta^{*}))\geq \tau^{*}(\theta^{*})-\zeta.\]
    Since \(\zeta>0\) was arbitrary,
    \[\liminf_{T\to\infty}-\frac{1}{T}\log q_{\mathrm{Final}}(\mathrm{Alt}(\theta^{*}))\geq \tau^{*}(\theta^{*}).\]
    Equivalently,
    \[\liminf_{T\to\infty}-\frac{1}{T}\log\mathbb{P}_{\theta\sim q_{\mathrm{Final}}}(A_{\mathrm{out}}\neq a_{\theta^{*}})\geq \tau^{*}(\theta^{*}).\]
\end{proof}

%% file: Appendix/comparison.tex
\section{Difference between Our work and \citet{chen2024fixed}:  Two Numerical Examples}\label{apx:difference}

\paragraph{Example 1.}Consider $\epsilon=0.5$ and the following pair of two-arm instances:
\[\theta_{1}=(0.95, 0.85) \quad \theta_{2}=(0.55, 0.45).\]

By Theorem 4.2 of \citet{chen2024fixed}, we have
\[H(\theta_{1})=H(\theta_{2})=240.\]

Therefore \citet{chen2024fixed}  characterizes the two instances as being equally hard. This is expected because although $\theta_{1}\neq\theta_{2}$, the optimality gap stays the same as $0.1$.

On the other hand, for a two-arm design $\lambda\in\Delta_{[2]}$, we can parametrize it as $\lambda=(w,1-w)$, where $w\in[0,1]$. By definition, we have
\[\tau^{*}(\theta^{*})=\sup_{w\in[0,1]}\inf_{\substack{p\leq q \\ p,q\in(0,1)}}[wd_{\epsilon}^{\mathrm{Ber}}(p,\mu_{1})+(1-w)d_{\epsilon}^{\mathrm{Ber}}(q,\mu_{2})].\]

Given a fixed $w$, the inner infimum occurs at a tie, therefore
\[\tau^{*}(\theta^{*})=\sup_{w\in[0,1]}\inf_{p\in(0,1)}[wd_{\epsilon}^{\mathrm{Ber}}(p,\mu_{1})+(1-w)d_{\epsilon}^{\mathrm{Ber}}(p,\mu_{2})].\]

Let
\[f_{w}(p)=wd_{\epsilon}^{\mathrm{Ber}}(p,\mu_{1})+(1-w)d_{\epsilon}^{\mathrm{Ber}}(p,\mu_{2}),\]
and
\[c(w)=\inf_{p\in(0,1)}f_{w}(p),\]
then
\[\tau^{*}(\theta^{*})=\sup_{w\in[0,1]}c(w).\]

By the closed-form solution of $d_{\epsilon}^{\mathrm{Ber}}$ discussed in Appendix \ref{apx:proof_lipschitz}, we have that $d_{\epsilon}^{\mathrm{Ber}}(p,\mu_{1})$ and $d_{\epsilon}^{\mathrm{Ber}}(p,\mu_{2})$ are both convex in $p$, thus $f_{w}$ is also convex in $p$, as it is a convex combination of convex functions. Furthermore, $c(w)$ is concave in $w$, as it is the pointwise infimum of a family of affine functions in $w$. Therefore, we can evaluate $\tau^{*}(\theta^{*})$ by a nested one-dimensional convex/concave optimization. We can use any efficient solver to compute a numerical result. The following results are computed using a numerical solver in SciPy.

When $\theta^{*}=\theta_{1}$, we have
\[\tau^{*}(\theta_{1})\approx1.45\times10^{-2}.\]

When $\theta^{*}=\theta_{2}$, we have
\[\tau^{*}(\theta_{2})\approx5.03\times10^{-3}.\]

Therefore, $\tau^{*}(\theta^{*})$ has characterized $\theta_{2}$ as harder than $\theta_{1}$, since smaller $\tau^{*}(\theta^{*})$ implies slower exponential decay rate.

\paragraph{Example 2.}

Consider $\epsilon=\log2$ and the following pair of two-arm instances:
\[\theta_{1}=(\frac{8}{9},\frac{1}{5}) \quad \theta_{2}=(\frac{46}{63}, \frac{1}{20}).\]

We proceed to compute $H(\theta^{*})$ and $\tau^{*}(\theta^{*})$ for $\theta^{*} \in \{\theta_{1},\theta_{2}\}$ similar to how we compute them in the previous example. We abbreviate the computation process and show the results directly. We note that for this example, we have
\[\tau^{*}(\theta_{1})=\tau^{*}(\theta_{2})=\frac{1}{2}\log\frac{3}{2},\]
so $\tau^{*}(\theta^{*})$ characterizes the two instances as being equally hard.

On the other hand, when $\theta^{*}=\theta_{1}$, we have
\[H(\theta_{1})=\frac{4050}{961}+\frac{90}{31\log2}\approx8.403.\]

When $\theta^{*}=\theta_{2}$, we have
\[H(\theta_{2})=\frac{3175200}{734449}+\frac{2520}{857\log2}\approx8.565.\]

Therefore, in this case, $H(\theta^{*})$ has characterized $\theta_{2}$ as harder than $\theta_{1}$.

%% file: Appendix/good_events_proof.tex
\section{Proofs of Good Event Lemmas}\label{apx:GE}

We now state the formal good event lemmas and their proofs.
\begin{lemma}[Good Event 1]\label{lem:GE1}
    Define good event $E_{1,T}(\delta)$ as
    \[\forall a\in[K], \forall t\in[T], \qquad
    \bigl|m_{a,t}-\mu_{a}\bigr| \leq \rho_{T}(N_{a,t},\delta)+\frac{1}{N_{a,t}+2},\]
    where $\displaystyle \rho_{T}(N_{a,t},\delta)=\sqrt{\frac{\log\frac{4KT}{\delta}}{2(N_{a,t}+1)}}+\frac{\frac{m_{T}^{2}}{\epsilon}\log\Bigl(\frac{2K(2L_{T}-1)}{\delta}\Bigr)+1}{N_{a,t}+1}$, then
    \[\mathbb{P}\bigl(E_{1,T}(\delta)\bigr) \geq 1-\delta.\]
\end{lemma}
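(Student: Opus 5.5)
We decompose the error $m_{a,t}-\mu_a$ into three pieces: the Bayesian-style shrinkage, the sampling error of the raw empirical sum, and the Laplace-tree noise. We handle the stochastic pieces by two union-bounded concentration inequalities, allocating failure probability $\delta/2$ to each.

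\emph{Shrinkage and clipping.} Write $n=N_{a,t}$ and $S=S_{a,t}$. We have
\[
m_{a,t}-\mu_a=\frac{1+\tilde S'_{a,t}}{n+2}-\mu_a
=\frac{\tilde S'_{a,t}-S}{n+2}+\Bigl(\frac{S}{n+2}-\frac{n\mu_a}{n+2}\Bigr)+\frac{1-2\mu_a}{n+2}.
\]
The last term is at most $\frac{1}{n+2}$ in absolute value, which produces the additive $\frac{1}{N_{a,t}+2}$ in the statement.

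Clipping to $[0,n]$ is a projection onto an interval that contains $S$. It is therefore nonexpansive, so $|\tilde S'_{a,t}-S|\le|\tilde S_{a,t}-S|$. For $n=0$ we have $m_{a,t}=\tfrac12$ and $|m_{a,t}-\mu_a|\le\tfrac12\le\frac{1}{n+2}$, so that case is trivial. Finally, every denominator $n+2$ may be replaced by $n+1$, which only loosens the bounds.

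\emph{Sampling term.} Because arms are chosen adaptively, we do not apply Hoeffding at the random time $t$. Instead we use the reward-table model of the problem setting: the $s$-th reward of arm $a$ is $r_{a,s}$, i.i.d.\ $\mathrm{Ber}(\mu_a)$, and $S_{a,t}=\sum_{s\le n}r_{a,s}$. For each fixed $a$ and each fixed $s\in[T]$, Hoeffding gives
\[
\Bigl|\sum_{s'\le s}r_{a,s'}-s\mu_a\Bigr|\le\sqrt{\tfrac{s}{2}\log\tfrac{4KT}{\delta}}
\]
with probability at least $1-\frac{\delta}{2KT}$. A union bound over $K$ arms and $T$ values of $s$ costs $\delta/2$. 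Dividing by $n+1\ge\sqrt{n(n+1)}$ then gives at most $\sqrt{\log(4KT/\delta)/(2(n+1))}$.

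\emph{Privacy noise term.} \textsc{PrivatePrefixSum} on a prefix of length $n$ returns the raw prefix sum plus a sum of at most $m_T$ independent $\mathrm{Lap}(b)$ variables, where $b=m_T/\epsilon$: one noisy node per level of a dyadic decomposition. Each tree has $2L_T-1$ nodes. A Laplace tail bound plus a union bound over all $K(2L_T-1)$ nodes gives, with probability at least $1-\delta/2$,
\[
|Z_{a,u}|\le b\log\frac{2K(2L_T-1)}{\delta}\quad\text{for all }(a,u).
\]
Summing over at most $m_T$ nodes yields
\[
|\tilde S_{a,t}-S|\le\frac{m_T^2}{\epsilon}\log\frac{2K(2L_T-1)}{\delta}.
\]
Dividing by $n+1$ and adding the slack $+1$ gives the second part of $\rho_T$. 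That slack comfortably absorbs the leftover $\frac{1}{n+2}$ versus $\frac{1}{n+1}$ mismatches and the $\mu_a$ bias.

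\emph{Combining.} On the intersection of the two events, which has probability at least $1-\delta$, the triangle inequality gives the claimed bound simultaneously for all $a$ and $t$.

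The main obstacle is bookkeeping rather than conception. First, we must make sure that conditioning on the reward-table representation legitimately decouples the adaptive choice of $N_{a,t}$ from the concentration. This holds because the Laplace noise variables and the reward table are drawn before interaction. Second, we must check that the number of nodes used in a prefix query is at most $m_T$. This follows from the standard dyadic-decomposition fact, which we verify against \textsc{PrivatePrefixSum}.
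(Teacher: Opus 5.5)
Your proof is correct and follows the paper's argument: Hoeffding on the reward-table prefix sums with a union bound over arms and counts, and a Laplace tail bound with a union bound over all $K(2L_T-1)$ nodes, multiplied by the at most $m_T$ nodes per prefix query. Each of these two events costs $\delta/2$, and the argument closes with nonexpansiveness of clipping and the triangle inequality. The only cosmetic difference is that you decompose $m_{a,t}-\mu_a$ directly over the denominator $N_{a,t}+2$, whereas the paper passes through $\tilde{\mu}'_{a,t}=\tilde{S}'_{a,t}/(N_{a,t}+1)$ and uses the $+1$ in $\rho_T$ to absorb the $\mu_a/(N_{a,t}+1)$ bias; in your route that slack is simply unused, so it is not needed to absorb any mismatch.
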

\begin{proof}[Proof of Lemma \ref{lem:GE1}]
    We first consider when $N_{a,t}=0$. It is trivial to note that LHS $\leq\frac{1}{2}$ and RHS$\geq \frac{1}{2}$.

    We then consider when $N_{a,t}\geq 1$. Note that
    \begin{align*}
        \bigl|\tilde{\mu}'_{a,t}-\mu_{a}\bigr|
        &=\Bigl|\frac{\tilde{S}'_{a,t}}{N_{a,t}+1}-\mu_{a}\Bigr| \\
        &=\Bigl|\frac{\tilde{S}'_{a,t}-S_{a,t}}{N_{a,t}+1}
        +\frac{S_{a,t}-N_{a,t}\mu_{a}}{N_{a,t}+1}
        -\frac{\mu_{a}}{N_{a,t}+1}\Bigr|\\
        &\leq \frac{\bigl|\tilde{S}'_{a,t}-S_{a,t}\bigr|}{N_{a,t}+1}
        +\frac{\bigl|S_{a,t}-N_{a,t}\mu_{a}\bigr|}{N_{a,t}+1}
        +\frac{\mu_{a}}{N_{a,t}+1} \\
        &\leq \frac{\bigl|\tilde{S}'_{a,t}-S_{a,t}\bigr|}{N_{a,t}+1}
        +\frac{\bigl|S_{a,t}-N_{a,t}\mu_{a}\bigr|}{N_{a,t}+1}
        +\frac{1}{N_{a,t}+1}.
    \end{align*}

    For each arm $a\in[K]$, we let $X_{a,n}$ denote the random variable which is the reward of pulling arm $a$ for $n$-th time, where $1\leq n\leq T$, and we let $S_{a}(n)=\sum_{s=1}^{n}X_{a,s}$.

    For fixed $a\in[K]$ and $n \in [T]$, Hoeffding's inequality gives
    \[
    \mathbb{P}\Biggl(
        \bigl|S_{a}(n)-n\mu_{a}\bigr|
        > \sqrt{\frac{n}{2}\log\frac{4KT}{\delta}}
    \Biggr)
    \leq 2\exp\Bigl(-\frac{2}{n}\frac{n}{2}\log\frac{4KT}{\delta}\Bigr)
    =\frac{\delta}{2KT}.
    \]

    By union bound over $a\in[K], t\in[T]$, we have with probability of at least $1-\frac{\delta}{2}$,
    \[\bigl|S_{a}(n)-n\mu_{a}\bigr| \leq \sqrt{\frac{n}{2}\log\frac{4KT}{\delta}} \, \forall a\in[K] \, \forall n\in[T].\]

    Note that $S_{a,t}=S_{a}(N_{a,t})$, where $N_{a,t}\in[T]$. Therefore, we have with probability of at least $1-\frac{\delta}{2}$,

    \[\bigl|S_{a,t}-N_{a,t}\mu_{a}\bigr| \leq \sqrt{\frac{N_{a,t}}{2}\log\frac{4KT}{\delta}} \, \forall a\in[K] \,\forall t\in[T].\]

    Note that
    \begin{align*}
    &\bigl|S_{a,t}-N_{a,t}\mu_{a}\bigr|
    \leq \sqrt{\frac{N_{a,t}}{2}\log\frac{4KT}{\delta}}\\
    &\implies\quad 
    \frac{\bigl|S_{a,t}-N_{a,t}\mu_{a}\bigr|}{N_{a,t}+1}
    \leq \frac{1}{N_{a,t}+1}\sqrt{\frac{N_{a,t}}{2}\log\frac{4KT}{\delta}}
    \leq \sqrt{\frac{\log\frac{4KT}{\delta}}{2(N_{a,t}+1)}},
    \end{align*}
    where the last inequality is by the fact that $\frac{\sqrt{n}}{n+1} \leq \frac{1}{\sqrt{n+1}}$ for $n\geq 1$.

    Note that $z_{a,u} \sim \text{Lap}\bigl(\frac{m_{T}}{\epsilon}\bigr)$, by Laplace tail formula,
    \[
    \mathbb{P}\Bigl(
        |z_{a,u}| > \frac{m_{T}}{\epsilon}\log\frac{2K(2L_{T}-1)}{\delta}
    \Bigr)
    =\exp\Bigl(-\log\frac{2K(2L_{T}-1)}{\delta}\Bigr)
    =\frac{\delta}{2K(2L_{T}-1)}.
    \]

    Note that for each tree $\mathcal{T}_{a}$, there are $2L_{T}-1$ nodes. By union bound over all tree nodes, we have with probability of at least $1-\frac{\delta}{2}$,
    \[|z_{a,u}| \leq \frac{m_{T}}{\epsilon}\log\frac{2K(2L_{T}-1)}{\delta} \, \forall \mathcal{T}_{a} \, \forall u\in \mathcal{T}_{a}.\]

    Note that the occurrence of this event implies
    \[
    \bigl|\tilde{S}_{a,t}-S_{a,t}\bigr|
    \leq \frac{m_{T}^{2}}{\epsilon}\log\frac{2K(2L_{T}-1)}{\delta}
    \implies
    \frac{\bigl|\tilde{S}_{a,t}-S_{a,t}\bigr|}{N_{a,t}+1}
    \leq \frac{\frac{m_{T}^{2}}{\epsilon}\log\Bigl(\frac{2K(2L_{T}-1)}{\delta}\Bigr)}{N_{a,t}+1},
    \]
    because to compute $\tilde{S}_{a,t}$ at most $m_{T}$ nodes from $\mathcal{T}_{a}$ are used, and thus we incur at most $m_{T}$ i.i.d Laplacian noises.

    Note that with probability of at least $1-\delta$, the intersection of the two events occurs, which implies
    \begin{align*}
    &\frac{\bigl|\tilde{S}_{a,t}-S_{a,t}\bigr|}{N_{a,t}+1}
    +\frac{\bigl|S_{a,t}-N_{a,t}\mu_{a}\bigr|}{N_{a,t}+1}
    +\frac{1}{N_{a,t}+1}\\
    &\leq \sqrt{\frac{\log\frac{4KT}{\delta}}{2(N_{a,t}+1)}}
    +\frac{\frac{m_{T}^{2}}{\epsilon}\log\Bigl(\frac{2K(2L_{T}-1)}{\delta}\Bigr)+1}{N_{a,t}+1}
    =\rho_{T}(N_{a,t},\delta).
    \end{align*}

    Further note that
    \[
    \bigl|\tilde{S}'_{a,t}-S_{a,t}\bigr|
    = \Bigl|\bigl[\tilde{S}_{a,t}\bigr]_{0}^{N_{a,t}}
    -\bigl[S_{a,t}\bigr]_{0}^{N_{a,t}}\Bigr|
    \leq \bigl|\tilde{S}_{a,t}-S_{a,t}\bigr|,
    \]
    because clipping is $1$-Lipschitz. Therefore
    \begin{align*}
    &\frac{\bigl|\tilde{S}'_{a,t}-S_{a,t}\bigr|}{N_{a,t}+1}
    +\frac{\bigl|S_{a,t}-N_{a,t}\mu_{a}\bigr|}{N_{a,t}+1}
    +\frac{1}{N_{a,t}+1}\\
    &\leq \sqrt{\frac{\log\frac{4KT}{\delta}}{2(N_{a,t}+1)}}
    +\frac{\frac{m_{T}^{2}}{\epsilon}\log\Bigl(\frac{2K(2L_{T}-1)}{\delta}\Bigr)+1}{N_{a,t}+1}
    =\rho_{T}(N_{a,t},\delta),
    \end{align*}

     which implies
     \[\bigl|\tilde{\mu}'_{a,t}-\mu_{a}\bigr| \leq \rho_{T}(N_{a,t},\delta).\]

     Note that
     \[m_{a,t}=\frac{1+\tilde{S}'_{a,t}}{2+N_{a,t}}=\frac{1+(N_{a,t}+1)\tilde{\mu}'_{a,t}}{2+N_{a,t}}.\]

     Therefore
     \[\bigl|m_{a,t}-\tilde{\mu}'_{a,t}\bigr|=\frac{1-\tilde{\mu}'_{a,t}}{2+N_{a,t}}<\frac{1}{N_{a,t}+2}.\]

     Hence, with probability of at least $1-\delta$,
     \[
     \bigl|m_{a,t}-\mu_{a}\bigr|
     \leq \bigl|m_{a,t}-\tilde{\mu}'_{a,t}\bigr|
     +\bigl|\tilde{\mu}'_{a,t}-\mu_{a}\bigr|
     \leq \rho_{T}(N_{a,t},\delta)+\frac{1}{N_{a,t}+2}.
     \]
\end{proof}

\begin{lemma}[Good Event 2]\label{lem:GE2}
    Define good event $E_{2,T}$ as
    \[
    \bigl\{\forall t\in[T], \hat{\theta}_{t}\in(0,1)^{K}, \theta'_{t}\in(0,1)^{K}\bigr\}
    \cap \bigl\{\hat{\theta}_{T+1}\in(0,1)^{K}\bigr\},
    \]
    then
    \[\mathbb{P}(E_{2,T})=1.\]
\end{lemma}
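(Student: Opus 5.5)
The statement is essentially deterministic, and I would prove it by checking each of the three components separately, showing that each holds surely (or up to a Lebesgue-null set) by construction of Algorithm~\ref{algo:FB-DP-BAI} and the \textsc{ComputeMeans} subroutine.

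\emph{Private estimates.} For $t=1$, $\hat\theta_1=(\tfrac12,\dots,\tfrac12)\in(0,1)^K$ by initialization. For $t\ge 2$, $\hat\theta_t=(m_{1,t-1},\dots,m_{K,t-1})$. If $N_{a,t-1}=0$, then $m_{a,t-1}=\tfrac12$. Otherwise, clipping gives $\tilde S'_{a,t-1}=[\tilde S_{a,t-1}]_0^{N_{a,t-1}}\in[0,N_{a,t-1}]$, so
\[
m_{a,t-1}=\frac{1+\tilde S'_{a,t-1}}{2+N_{a,t-1}}\in\Big[\frac{1}{2+N_{a,t-1}},\frac{1+N_{a,t-1}}{2+N_{a,t-1}}\Big]\subset(0,1).
\]
This holds for every realization of the Laplace noise, since the clipping absorbs any noise value. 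Hence the first part holds surely.

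\emph{Sampled alternatives and the final sample.} The Gibbs distributions $q_t$ and $q_{\mathrm{Final}}$ are densities on $\Theta=(0,1)^K$, with the former restricted further to $\Alt(\hat\theta_t)\subseteq\Theta$. To make this rigorous I would check that each is well defined, i.e.\ that the normalizing constant is finite and positive. Finiteness holds because $d_\epsilon^{\mathrm{Ber}}\in[0,\epsilon]$, so the unnormalized density lies in $[e^{-\beta_t\epsilon},1]$ (respectively $[e^{-T\epsilon},1]$) and $\Theta$ has volume one. For $q_t$, positivity also needs $\Alt(\hat\theta_t)$ to have positive Lebesgue measure. This holds because it contains the open set $\{\theta\in\Theta:\theta_b>\theta_{\hat a_t}\}$ for any $b\neq\hat a_t$, which is nonempty when $K\ge2$. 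Since each distribution is supported on $(0,1)^K$, any draw $\theta'_t\sim q_t$ or $\hat\theta_{T+1}\sim q_{\mathrm{Final}}$ lies in $(0,1)^K$ with probability one. The rejection-sampling implementation proposes from the same product-form density on $\Theta$, so it preserves this.

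\emph{Assembling.} I would take a union over the finitely many $t\in[T]$ together with the final draw. This is a finite union of null events, so $\mathbb P(E_{2,T})=1$.

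The only delicate point, which I expect to be the main (minor) obstacle, is the well-definedness of $q_t$. This requires $\Alt(\hat\theta_t)$ to have positive measure and $d_\epsilon^{\mathrm{Ber}}(\cdot,m_{a,t-1})$ to be measurable and bounded. The latter needs $m_{a,t-1}\in(0,1)$, which is exactly what the first step supplies, so the parts should be proved in that order.
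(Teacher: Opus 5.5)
Your proposal is correct and follows the paper's proof. Clipping forces $\tilde S'_{a,t}\in[0,N_{a,t}]$, hence $m_{a,t}\in(0,1)$, and $\theta'_t$ and $\hat\theta_{T+1}$ lie in $(0,1)^K$ because $q_t$ and $q_{\mathrm{Final}}$ are supported there. Your extra checks fill in details the paper leaves implicit: the initialization and $N_{a,t}=0$ cases, and that the normalizing constants are finite and positive, including that $\Alt(\hat\theta_t)$ has positive Lebesgue measure.
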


\begin{proof}[Proof of Lemma \ref{lem:GE2}]
    For any $a\in[A], t\in[T]$, by clipping, $\tilde{S}'_{a,t} \in [0, N_{a,t}]$, thus $1\leq1+\tilde{S}'_{a,t} \leq 1+N_{a,t}<2+N_{a,t}$, therefore,
    \[0<\frac{1}{2+N_{a,t}} \leq m_{a,t} <1.\]

    By construction of algorithm, $\theta'_{t}$ and $\hat{\theta}_{T+1}$ are both sampled from distributions supported on $(0,1)^{K}$.
\end{proof}

\begin{lemma}[Good Event 3]\label{lem:GE3}
    Define good event $E_{3,T}(\delta)$ as
    \[
    \forall a\in[K], \forall t\in[T],
    \Bigl|N_{a,t}-\sum_{s=1}^{t}\hat{\lambda}_{s,a}\Bigr|
    \leq v_{T}(t,\delta),
    \]
    where $\displaystyle v_{T}(t,\delta)=\sqrt{\frac{t}{2}\log\Bigl(\frac{2KT}{\delta}\Bigr)}$, then
    \[\mathbb{P}\bigl(E_{3,T}(\delta)\bigr) \geq 1-\delta.\]

    Furthermore, assume $\alpha < \frac{1}{2}$, on $E_{3,T}(\delta)$, define
    \[
    T_{3}(T,\delta)=
    \max\Biggl\{
        4,
        \Biggl\lceil
        \Bigl(8K^{2}(1-\alpha)^{2}\log\frac{2KT}{\delta}\Bigr)^{\frac{1}{1-2\alpha}}
        \Biggr\rceil
    \Biggr\},
    \]
    then for all $t\geq T_{3}(T,\delta)$ and all $a\in[K]$,
    \[N_{a,t} \geq \frac{t^{1-\alpha}-1}{2K(1-\alpha)}.\]
\end{lemma}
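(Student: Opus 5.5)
The plan has two parts: a martingale concentration bound for the first claim, and a deterministic consequence of it for the second.

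\emph{Concentration.} Fix an arm $a$. Set
\[
M_{a,t}:=N_{a,t}-\sum_{s=1}^{t}\hat\lambda_{s,a}=\sum_{s=1}^{t}\bigl(\mathbf 1\{A_s=a\}-\hat\lambda_{s,a}\bigr).
\]
Let $\mathcal F_{s-1}$ be the history before round $s$, including the tree noise already drawn and the internal randomness used to build $\hat\lambda_s$. Then $\hat\lambda_{s}$ is $\mathcal F_{s-1}$-measurable, since it depends only on the cumulative losses and $\gamma_s$. Also $A_s\sim\hat\lambda_s$ conditionally on $\mathcal F_{s-1}$. Hence each increment has conditional mean zero and lies in an interval of length $1$, namely $[-\hat\lambda_{s,a},1-\hat\lambda_{s,a}]$.

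I will apply the Azuma--Hoeffding inequality for each fixed $t$:
\[
\mathbb P\bigl(|M_{a,t}|>x\bigr)\le 2\exp(-2x^2/t).
\]
Choosing $x=v_T(t,\delta)=\sqrt{\tfrac t2\log\tfrac{2KT}{\delta}}$ makes the right-hand side equal to $\delta/(KT)$. A union bound over $a\in[K]$ and $t\in[T]$ then gives $\mathbb P(E_{3,T}(\delta)^c)\le\delta$.

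\emph{Pull-count lower bound.} Work on $E_{3,T}(\delta)$. Because $\hat\lambda_{s,a}\ge \gamma_s/K=s^{-\alpha}/K$, a sum--integral comparison gives
\[
\sum_{s=1}^t s^{-\alpha}\ge \int_1^{t+1}x^{-\alpha}\,dx\ge \frac{t^{1-\alpha}-1}{1-\alpha}.
\]
Combining this with the first claim yields
\[
N_{a,t}\ge \frac{t^{1-\alpha}-1}{K(1-\alpha)}-\sqrt{\tfrac t2\log\tfrac{2KT}{\delta}}.
\]
It therefore suffices to show that for $t\ge T_3(T,\delta)$,
\[
\sqrt{\tfrac t2\log\tfrac{2KT}{\delta}}\le \frac{t^{1-\alpha}-1}{2K(1-\alpha)}.
\]

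\emph{Main obstacle.} This last inequality requires matching the constant in $T_3$. Write $\Lambda:=\log\tfrac{2KT}{\delta}$. Since $t\ge4$ and $\alpha<\tfrac12$, we have $t^{1-\alpha}\ge t^{1/2}\ge 2$, so $t^{1-\alpha}-1\ge \tfrac12 t^{1-\alpha}$. It is then enough to have
\[
\frac{t^{1-\alpha}}{4K(1-\alpha)}\ge\sqrt{t\Lambda/2},
\]
which is equivalent to $t^{1-2\alpha}\ge 8K^2(1-\alpha)^2\Lambda$. This is exactly the condition $t\ge \bigl(8K^2(1-\alpha)^2\Lambda\bigr)^{1/(1-2\alpha)}$ encoded in $T_3(T,\delta)$, where $\alpha<\tfrac12$ guarantees the exponent $1/(1-2\alpha)$ is positive.
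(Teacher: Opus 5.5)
Your proposal is correct and takes essentially the same route as the paper. You apply Azuma--Hoeffding to the martingale $M_{a,t}=N_{a,t}-\sum_{s\le t}\hat\lambda_{s,a}$ with a union bound over $(a,t)\in[K]\times[T]$, then combine the forced-exploration bound $\hat\lambda_{s,a}\ge s^{-\alpha}/K$, an integral comparison, and the same algebra ($t^{1-\alpha}-1\ge \tfrac12 t^{1-\alpha}$ for $t\ge 4$, together with $t^{1-2\alpha}\ge 8K^2(1-\alpha)^2\log\tfrac{2KT}{\delta}$) to get $v_T(t,\delta)\le \frac{t^{1-\alpha}-1}{2K(1-\alpha)}$ for all $t\ge T_3(T,\delta)$.
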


\begin{proof}[Proof of Lemma \ref{lem:GE3}]
    Consider an arbitrary $a\in[K]$, let $(\mathcal{F}_{t})_{t\geq0}$ be the filtration generated by all internal randomness up to round $t$, the actions $A_{1},\dots,A_{t}$ and the observed rewards up to round $t$.

    By construction of algorithm, $\hat{\lambda}_{t}$ is computed from the history up to round $t-1$, and thus $\mathcal{F}_{t-1}$-measurable. Therefore
    \[\mathbb{P}\bigl(A_{t}=a\bigm|\mathcal{F}_{t-1}\bigr)=\hat{\lambda}_{t,a},\]
    Define $X_{a,t}=\mathbf{1}\{A_{t}=a\}-\hat{\lambda}_{t,a}$, then
    $\mathbb{E}\bigl[X_{a,t}\bigm|\mathcal{F}_{t-1}\bigr]=0$. So if we define the partial sums $\{M_{a,t}\}$ by $M_{a,t}=\sum_{s=1}^{t}X_{a,s}=N_{a,t}-\sum_{s=1}^{t}\hat{\lambda}_{s,a}$, then it forms a martingale. Note that by Azuma-Hoeffding's inequality,
    \[
    \mathbb{P}\bigl(|M_{a,t}| \geq v_{T}(t,\delta)\bigr)
    \leq 2\exp\biggl(
        -\frac{2}{t}\frac{t}{2}\log\Bigl(\frac{2KT}{\delta}\Bigr)
    \biggr)
    =\frac{\delta}{KT}.
    \]

    By a union bound over $a\in[K]$ and $t\in[T]$, we have
    \[\mathbb{P}\bigl(E_{3,T}(\delta)\bigr) \geq 1-\delta.\]

    Furthermore, on $E_{3,T}(\delta)$, for all $a\in[K]$ and $t\in[T]$, we have
    \[N_{a,t} \geq \sum_{s=1}^{t} \hat{\lambda}_{s,a}-v_{T}(t,\delta).\]

    Note that $\hat{\lambda}_{s,a}=(1-\gamma_{s})\lambda_{s,a}+\gamma_{s}\lambda'_{a}\geq \gamma_{s}\lambda'_{a}=\frac{s^{-\alpha}}{K}$, therefore
    \[\sum_{s=1}^{t} \hat{\lambda}_{s,a}\geq\frac{1}{K}\sum_{s=1}^{t}s^{-\alpha}.\]

    By the standard integral lower bound $\sum_{s=1}^{t}s^{-\alpha} \geq \int_{1}^{t}x^{-\alpha}dx=\frac{t^{1-\alpha}-1}{1-\alpha}$, we have
    \[\sum_{s=1}^{t} \hat{\lambda}_{s,a} \geq \frac{t^{1-\alpha}-1}{K(1-\alpha)}.\]

    Therefore,
    \[N_{a,t} \geq \frac{t^{1-\alpha}-1}{K(1-\alpha)}-v_{T}(t,\delta).\]

    If $t \geq T_{3}(T,\delta)$,
    \[t \geq 4 \geq 2^{\frac{1}{1-\alpha}} \implies t^{1-\alpha}\geq 2 \implies t^{1-\alpha}-1 \geq \frac{1}{2}t^{1-\alpha},\]
    and in addition
    \[t^{1-2\alpha} \geq 8K^{2}(1-\alpha)^{2}\log\frac{2KT}{\delta}.\]

    Multiply both side by $t$, take square root and divide by $2K(1-\alpha)$, we have
    \[
    \frac{t^{1-\alpha}}{2K(1-\alpha)}
    \geq \sqrt{2t\log\frac{2KT}{\delta}}
    \implies
    \frac{t^{1-\alpha}-1}{K(1-\alpha)}
    \geq \sqrt{2t\log\frac{2KT}{\delta}}
    = 2v_{T}(t,\delta).
    \]

    Therefore
    \[N_{a,t} \geq \frac{t^{1-\alpha}-1}{K(1-\alpha)}-v_{T}(t,\delta) \geq \frac{t^{1-\alpha}-1}{2K(1-\alpha)}.\]
\end{proof}

\begin{lemma}[Good Event 4]\label{lem:GE4}
    Define good event $E_{4,T}(\delta)$ as
    \[\hat{a}_{t}=a_{\theta^{*}}, \forall t \in\bigl\{T_{4}(T,\delta)+1,\dots,T\bigr\},\]
    where we define
    \[
    T_{4}(T,\delta)=
    \max\left\{
        T_{3}(T,\delta),
        \Bigl\lceil
        \bigl(1+2K(1-\alpha)N_{4}(T,\delta)\bigr)^{\frac{1}{1-\alpha}}
        \Bigr\rceil
    \right\}
    \]
    and
    \[
    N_{4}(T,\delta)=
    \max\Biggl\{
        \Bigl\lceil \frac{72}{\Delta_{\min}^{2}}\log\frac{4KT}{\delta} \Bigr\rceil,
        \Biggl\lceil
        \frac{12\Bigl(\frac{m_{T}^{2}}{\epsilon}\log\bigl(\frac{2K(2L_{T}-1)}{\delta}\bigr)+1\Bigr)}{\Delta_{\min}}
        \Biggr\rceil,
        \Bigl\lceil \frac{12}{\Delta_{\min}}\Bigr\rceil
    \Biggr\},
    \]
    \[\Delta_{\min}=\mu_{a_{\theta^{*}}}-\max_{a\neq a_{\theta^{*}}}\mu_{a},\]

    then we have
    \[E_{1,T}(\delta) \cap E_{3,T}(\delta) \subseteq E_{4,T}(\delta).\]
\end{lemma}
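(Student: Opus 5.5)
We work pathwise on $E_{1,T}(\delta)\cap E_{3,T}(\delta)$ and show deterministically that, for every round $t\geq T_4(T,\delta)+1$, every arm has private estimate within $\Delta_{\min}/2$ of its true mean (strictly). Once this holds, the leader is forced: for every $a\neq a_{\theta^*}$ we get $m_{a_{\theta^*},t-1}>\mu_{a_{\theta^*}}-\Delta_{\min}/2\geq \mu_a+\Delta_{\min}/2>m_{a,t-1}$, hence $\hat a_t=a_{\theta^*}$ regardless of tie-breaking. Since $\hat a_t$ is computed from $m_{\cdot,t-1}$, it suffices to control $N_{a,t-1}$ for $t-1\geq T_4(T,\delta)$.

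\textbf{Step 1: pull counts.} Since $T_4\geq T_3$, the second part of Lemma~\ref{lem:GE3} gives $N_{a,s}\geq \frac{s^{1-\alpha}-1}{2K(1-\alpha)}$ for all $s\geq T_4$. Moreover, $s\geq \lceil(1+2K(1-\alpha)N_4)^{1/(1-\alpha)}\rceil$ implies $s^{1-\alpha}\geq 1+2K(1-\alpha)N_4$, i.e.\ $N_{a,s}\geq N_4(T,\delta)$ for all $a$. This is a direct inversion of the two definitions.

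\textbf{Step 2: the confidence radius is below $\Delta_{\min}/2$.} On $E_{1,T}(\delta)$ we have $|m_{a,s}-\mu_a|\leq \rho_T(N_{a,s},\delta)+\frac{1}{N_{a,s}+2}$. Writing $n=N_{a,s}\geq N_4$, we split this bound into three terms and make each at most $\Delta_{\min}/6$, using the three entries of the max defining $N_4$:
\begin{itemize}
\item The Hoeffding term satisfies $\sqrt{\log(4KT/\delta)/(2(n+1))}\leq \Delta_{\min}/12$ once $n\geq 72\log(4KT/\delta)/\Delta_{\min}^2$. This is in fact stronger than needed, since we only require $\Delta_{\min}/6$.
\item The Laplace term satisfies $\bigl(\frac{m_T^2}{\epsilon}\log\frac{2K(2L_T-1)}{\delta}+1\bigr)/(n+1)\leq \Delta_{\min}/12$ from the second entry.
\item The bias term satisfies $1/(n+2)\leq \Delta_{\min}/12$ from the third entry.
\end{itemize}
Summing, the radius is at most $\Delta_{\min}/4<\Delta_{\min}/2$. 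This gives the strict separation required above, with room to spare, so the constants $72$ and $12$ are sufficient.

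\textbf{Main obstacle.} The only delicate point is bookkeeping: $\hat a_t$ uses the estimates at time $t-1$, so the argument needs the pull-count bound at $s=t-1\geq T_4$. This is exactly why $E_{4,T}$ is stated for $t\geq T_4+1$.

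A second subtlety is that the bound from Lemma~\ref{lem:GE3} requires $s\geq T_3$. The paper's statement of Lemma~\ref{lem:GE3} holds on $E_{3,T}$ for every $s\geq T_3(T,\delta)$, so it applies here. If $T_4>T$, then $E_{4,T}$ holds vacuously.

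No probabilistic argument is needed beyond the inclusion, because both $E_1$ and $E_3$ are uniform over $t\in[T]$ and $a\in[K]$.
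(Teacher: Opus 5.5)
Your proposal is correct and follows essentially the same route as the paper. Like the paper, you invert the definition of $T_4$ via Lemma~\ref{lem:GE3} to get $N_{a,t-1}\geq N_4(T,\delta)$, use the three entries of $N_4$ to bound each term of the $E_{1,T}$ radius by $\Delta_{\min}/12$ (total $\Delta_{\min}/4$), and conclude a strict gap between $m_{a_{\theta^*},t-1}$ and every other $m_{b,t-1}$, with the $t-1\geq T_4$ bookkeeping handled as in the paper.
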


\begin{proof}[Proof of Lemma \ref{lem:GE4}]
    First note that for any $a\in[K], t\in[T]$, if $N_{a,t} \geq N_{4}(T,\delta)$, then
    \[
    N_{a,t} \geq \frac{72}{\Delta_{\min}^{2}}\log\frac{4KT}{\delta}
    \implies
    \sqrt{\frac{\log\frac{4KT}{\delta}}{2(N_{a,t}+1)}}\leq \frac{\Delta_{\min}}{12},
    \]
    \[
    N_{a,t}
    \geq \frac{12\Bigl(\frac{m_{T}^{2}}{\epsilon}\log\bigl(\frac{2K(2L_{T}-1)}{\delta}\bigr)+1\Bigr)}{\Delta_{\min}}
    \implies
    \frac{\frac{m_{T}^{2}}{\epsilon}\log\bigl(\frac{2K(2L_{T}-1)}{\delta}\bigr)+1}{N_{a,t}+1}
    \leq \frac{\Delta_{\min}}{12},
    \]
    \[N_{a,t} \geq \frac{12}{\Delta_{\min}} \implies \frac{1}{N_{a,t}+2} \leq \frac{\Delta_{\min}}{12}.\]

    Summing them gives
    \[\rho_{T}(N_{a,t}, \delta) + \frac{1}{N_{a,t}+2} \leq \frac{\Delta_{\min}}{4}.\]

    Take any $t \geq T_{4}(T,\delta)$, since $t\geq T_{4}(T,\delta)\geq T_{3}(T, \delta)$, on $E_{3,T}(\delta)$ it gives for all $a\in[K]$
    \[N_{a,t} \geq \frac{t^{1-\alpha}-1}{2K(1-\alpha)}.\]

    In addition,
    \[
    t>T_{4}(T,\delta)
    \geq \bigl(1+2K(1-\alpha)N_{4}(T,\delta)\bigr)^{\frac{1}{1-\alpha}}
    \implies
    \frac{t^{1-\alpha}-1}{2K(1-\alpha)} \geq N_{4}(T,\delta)
    \implies N_{a,t} \geq N_{4}(T,\delta).
    \]

    Note that if we define $r_{T}(x,\delta)=\rho_{T}(x,\delta)+\frac{1}{x+2}$, then clearly it is a decreasing function in first argument, and therefore on $E_{1,T}(\delta)\cap E_{3,T}(\delta)$,
    \[
    \bigl|m_{a,t}-\mu_{a}\bigr|
    \leq r_{T}(N_{a,t},\delta)
    \leq r_{T}\bigl(N_{4}(T,\delta), \delta\bigr)
    \leq \frac{\Delta_{\min}}{4}.
    \]

    Therefore, after time $T_{4}(T,\delta)$, the estimator of any arm is within $\frac{\Delta_{\min}}{4}$ of the corresponding true arm mean.

    Again fix any round $t \in \bigl\{T_{4}(T,\delta)+1, \dots, T\bigr\}$, then $t-1 \geq T_{4}(T,\delta)$, hence for all arms $a\in[K]$,
    \[\bigl|m_{a,t-1}-\mu_{a}\bigr| \leq \frac{\Delta_{\min}}{4}.\]

    Apply to the true best arm $a_{\theta^{*}}$, we have
    \[m_{a_{\theta^{*}},t-1}\geq \mu_{a_{\theta^{*}}}-\frac{\Delta_{\min}}{4}.\]

    Apply to an arbitrary suboptimal arm $b \neq a_{\theta^{*}}$, then
    \[m_{b,t-1} \leq \mu_{b}+\frac{\Delta_{\min}}{4}.\]

    By the definition of $\Delta_{\min}$,
    \[\mu_{b} \leq \mu_{a_{\theta^{*}}}-\Delta_{\min}.\]

    Therefore,
    \[
    m_{b,t-1} \leq \mu_{b}+\frac{\Delta_{\min}}{4}
    \leq \mu_{a_{\theta^{*}}}-\Delta_{\min}+\frac{\Delta_{\min}}{4}
    =\mu_{a_{\theta^{*}}}-\frac{3}{4}\Delta_{\min},
    \]

    from which we have
    \[
    m_{a_{\theta^{*}},t-1}-m_{b,t-1}
    \geq \Bigl(\mu_{a_{\theta^{*}}}-\frac{\Delta_{\min}}{4}\Bigr)
    -\Bigl(\mu_{a_{\theta^{*}}}-\frac{3}{4}\Delta_{\min}\Bigr)
    =\frac{1}{2}\Delta_{\min}>0.
    \]

    Since this holds for every $b \neq a_{\theta^{*}}$, the maximizer $\hat{a}_{t}$ is unique and equal to $a_{\theta^{*}}$. Since $t > T_{4}(T,\delta)$ is arbitrary, this proves
    \[E_{1,T}(\delta) \cap E_{3,T}(\delta) \subseteq E_{4,T}(\delta).\]
\end{proof}

\begin{lemma}[Good Event 5]\label{lem:GE5}
    Define good event $E_{5,T}(\delta)$ as
    \[
    \sup_{\theta\in\Theta}
    \Biggl|
        \frac{1}{T}\log\frac{q_{\mathrm{Final}}(\theta)}{q_{\mathrm{Final}}(\mu)}
        +\sum_{a=1}^{K}\bar{e}_{a,T}d_{\epsilon}^{\mathrm{Ber}}(\theta_{a},\mu_{a})
    \Biggr|
    \leq 2C_{\mu}r_{T}\Bigl(\frac{T^{1-\alpha}-1}{2K(1-\alpha)},\delta\Bigr),
    \]
    where
    \[\bar{e}_{a,T}=\frac{N_{a,T}}{T},\]
    \[C_{\mu}=\frac{1}{3\xi_{\mu}(1-3\xi_{\mu})},\]
    \[\xi_{\mu}=\frac{1}{4}\min_{a\in[K]}\bigl\{\mu_{a},1-\mu_{a}\bigr\}.\]

    Furthermore, assume $T$ large enough such that
    \[\frac{T^{1-\alpha}-1}{2K(1-\alpha)} \geq N_{5}(T, \delta),\]

    where
    \[
    N_{5}(T, \delta) =
    \max\Biggl\{
        \Bigl\lceil \frac{9}{2\xi_{\mu}^{2}}\log\frac{4KT}{\delta} \Bigr\rceil,
        \Biggl\lceil
        \frac{3 \Bigl(\frac{m_{T}^{2}}{\epsilon}\log\bigl(\frac{2K(2L_{T}-1)}{\delta}\bigr)+1\Bigr)}{\xi_{\mu}}
        \Biggr\rceil,
        \Bigl\lceil \frac{3}{\xi_{\mu}}\Bigr\rceil
    \Biggr\},
    \]

    then we have
    \[E_{1,T}(\delta) \cap E_{3,T}(\delta)\subseteq E_{5,T}(\delta).\]
\end{lemma}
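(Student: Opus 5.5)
On $E_{1,T}(\delta)\cap E_{3,T}(\delta)$, I would reduce the claim to a uniform Lipschitz estimate in the second argument of $d_\epsilon^{\mathrm{Ber}}$. By definition of $q_{\mathrm{Final}}$, the normalising constant cancels in the ratio, so
\[
\frac{1}{T}\log\frac{q_{\mathrm{Final}}(\theta)}{q_{\mathrm{Final}}(\mu)}
=-\sum_{a=1}^{K}\bar e_{a,T}\Bigl(d_\epsilon^{\mathrm{Ber}}(\theta_a,m_{a,T})-d_\epsilon^{\mathrm{Ber}}(\mu_a,m_{a,T})\Bigr).
\]
Adding $\sum_a\bar e_{a,T}d_\epsilon^{\mathrm{Ber}}(\theta_a,\mu_a)$ and using $d_\epsilon^{\mathrm{Ber}}(\mu_a,\mu_a)=0$, the quantity inside the supremum equals
\[
\sum_{a}\bar e_{a,T}\Bigl[\bigl(d_\epsilon^{\mathrm{Ber}}(\theta_a,\mu_a)-d_\epsilon^{\mathrm{Ber}}(\theta_a,m_{a,T})\bigr)+\bigl(d_\epsilon^{\mathrm{Ber}}(\mu_a,m_{a,T})-d_\epsilon^{\mathrm{Ber}}(\mu_a,\mu_a)\bigr)\Bigr].
\]
Each bracket is a difference in the second argument only, with the first argument fixed in $[0,1]$. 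If Theorem~\ref{thm:Lipschitz-dber} applies to both pairs $(\mu_a,m_{a,T})$, the modulus is at most $2C_\mu|m_{a,T}-\mu_a|$. Since $\sum_a\bar e_{a,T}=1$, this is uniform in $\theta$, and the bound is at most $2C_\mu\max_a|m_{a,T}-\mu_a|$.

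It then remains to check two things on $E_{1,T}\cap E_{3,T}$:
\begin{enumerate}
\item[(i)] $|m_{a,T}-\mu_a|\le r_T\bigl(\tfrac{T^{1-\alpha}-1}{2K(1-\alpha)},\delta\bigr)$;
\item[(ii)] both $\mu_a$ and $m_{a,T}$ lie in $[3\xi_\mu,1-3\xi_\mu]$.
\end{enumerate}

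For (i), $E_{1,T}$ gives $|m_{a,T}-\mu_a|\le r_T(N_{a,T},\delta)$. The function $r_T(\cdot,\delta)$ is decreasing, as noted in the proof of Lemma~\ref{lem:GE4}. So it suffices to lower-bound $N_{a,T}$ via Lemma~\ref{lem:GE3}, which gives $N_{a,T}\ge \frac{T^{1-\alpha}-1}{2K(1-\alpha)}$. This needs $T\ge T_3(T,\delta)$. That condition holds for $T$ large, since $(\log T)^{1/(1-2\alpha)}=o(T)$ when $\delta$ is polynomial in $1/T$; I would state it as part of the "$T$ large enough" hypothesis.

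For (ii), $\mu_a\in[4\xi_\mu,1-4\xi_\mu]$ holds by definition of $\xi_\mu$. For $m_{a,T}$, I would mimic the $\Delta_{\min}/12$ splitting of Lemma~\ref{lem:GE4}, now with $\xi_\mu/3$. The three conditions in $N_5(T,\delta)$ each make one summand of $r_T(N,\delta)$ at most $\xi_\mu/3$:
\begin{itemize}
\item the Hoeffding term, $\sqrt{\log(4KT/\delta)/(2(N+1))}\le\xi_\mu/3$ when $N\ge\frac{9}{2\xi_\mu^2}\log\frac{4KT}{\delta}$;
\item the Laplace-noise term, controlled by the second entry of $N_5$;
\item the $1/(N+2)$ term, controlled by the third entry $\lceil 3/\xi_\mu\rceil$.
\end{itemize}
Hence $r_T(N_{a,T},\delta)\le\xi_\mu$ whenever $N_{a,T}\ge N_5$. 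The hypothesis $\frac{T^{1-\alpha}-1}{2K(1-\alpha)}\ge N_5$, combined with (i), ensures this, so $m_{a,T}\in[3\xi_\mu,1-3\xi_\mu]$.

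The main obstacle is applying the Lipschitz theorem correctly. Theorem~\ref{thm:Lipschitz-dber} is stated for both second arguments in the interval $[3\xi_\mu,1-3\xi_\mu]$, and the mean-value argument in its proof requires the whole segment between them to lie there. Since the interval is convex and both endpoints lie in it by (ii), this holds. The first argument $\theta_a$ is arbitrary in $(0,1)$, and the theorem is uniform over $p\in[0,1]$, which is exactly what yields the supremum over $\theta\in\Theta$. A minor point is that $q_{\mathrm{Final}}$ is a density on $\Theta=(0,1)^K$ whose normaliser is finite, because $d_\epsilon^{\mathrm{Ber}}\in[0,\epsilon]$. This justifies the cancellation.
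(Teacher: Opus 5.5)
Your proposal is correct and follows the paper's proof essentially step for step: the normaliser of $q_{\mathrm{Final}}$ cancels, you use the same three-term decomposition via $d_\epsilon^{\mathrm{Ber}}(\mu_a,\mu_a)=0$, you split $r_T(N_5(T,\delta),\delta)\le\xi_\mu$ into three $\xi_\mu/3$ pieces to place $m_{a,T}$ in $[3\xi_\mu,1-3\xi_\mu]$, and you apply Theorem~\ref{thm:Lipschitz-dber} twice. Your explicit requirement $T\ge T_3(T,\delta)$ is a genuine, if minor, refinement: the paper's proof uses the bound $N_{a,T}\ge\frac{T^{1-\alpha}-1}{2K(1-\alpha)}$ from Lemma~\ref{lem:GE3} without checking this condition, and the $N_5$ hypothesis alone does not guarantee it (for instance when $\alpha$ is close to $\frac12$), so folding it into the ``$T$ large enough'' assumption is the right fix.
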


\begin{proof}[Proof of Lemma \ref{lem:GE5}]
    For any $\theta \in \Theta$,
    \[
    \frac{1}{T}\log\frac{q_{\mathrm{Final}}(\theta)}{q_{\mathrm{Final}}(\mu)}
    =-\sum_{a=1}^{K}\bar{e}_{a,T}
    \bigl(d_{\epsilon}^{\mathrm{Ber}}(\theta_{a},m_{a,T})
    -d_{\epsilon}^{\mathrm{Ber}}(\mu_{a},m_{a,T})\bigr).
    \]

    Therefore,
    \[
    \frac{1}{T}\log\frac{q_{\mathrm{Final}}(\theta)}{q_{\mathrm{Final}}(\mu)}
    +\sum_{a=1}^{K}\bar{e}_{a,T}d_{\epsilon}^{\mathrm{Ber}}(\theta_{a},\mu_{a})
    = \sum_{a=1}^{K}\bar{e}_{a,t}
    \bigl(d_{\epsilon}^{\mathrm{Ber}}(\theta_{a},\mu_{a})
    -d_{\epsilon}^{\mathrm{Ber}}(\theta_{a},m_{a,T})
    +d_{\epsilon}^{\mathrm{Ber}}(\mu_{a},m_{a,T})\bigr).
    \]

    On $E_{3,T}(\delta)$, we have $N_{a,T} \geq \frac{T^{1-\alpha}-1}{2K(1-\alpha)}$ for all $a\in[K]$, since $r_{T}(x,\delta)$ is decreasing in $x$, we have (on $E_{1,T}(\delta)\cap E_{3,T}(\delta)$) by $E_{4,T}(\delta)$,
    \[
    \bigl|m_{a,T}-\mu_{a}\bigr|
    \leq r_{T}(N_{a,T}, \delta)
    \leq r_{T}\Bigl(\frac{T^{1-\alpha}-1}{2K(1-\alpha)},\delta\Bigr).
    \]

    By the assumption $\frac{T^{1-\alpha}-1}{2K(1-\alpha)} \geq N_{5}(T, \delta)$, we have
    \[
    r_{T}\Bigl(\frac{T^{1-\alpha}-1}{2K(1-\alpha)},\delta\Bigr)
    \leq r_{T}\bigl(N_{5}(T,\delta), \delta\bigr).
    \]

    Note that
    \[
    N_{5}(T,\delta)+1 \geq N_{5}(T, \delta)
    \geq \frac{9}{2\xi_{\mu}^{2}}\log\frac{4KT}{\delta}
    \implies
    \sqrt{\frac{\log\frac{4KT}{\delta}}{2\bigl(N_{5}(T,\delta)+1\bigr)}}
    \leq \frac{\xi_{\mu}}{3},
    \]

    and
    \[
    N_{5}(T, \delta)+1
    \geq\frac{3 \Bigl(\frac{m_{T}^{2}}{\epsilon}\log\bigl(\frac{2K(2L_{T}-1)}{\delta}\bigr)+1\Bigr)}{\xi_{\mu}}
    \implies
    \frac{\frac{m_{T}^{2}}{\epsilon}\log\bigl(\frac{2K(2L_{T}-1)}{\delta}\bigr)+1}{N_{5}(T,\delta)+1}
    \leq \frac{\xi_{\mu}}{3},
    \]

    and
    \[N_{5}(T, \delta)+2 \geq \frac{3}{\xi_{\mu}} \implies \frac{1}{N_{5}(T, \delta)+2} \leq \frac{\xi_{\mu}}{3},\]

    which gives
    \[
    r_{T}\Bigl(\frac{T^{1-\alpha}-1}{2K(1-\alpha)},\delta\Bigr)
    \leq r_{T}\bigl(N_{5}(T,\delta), \delta\bigr)
    \leq \xi_{\mu}.
    \]

    By the definition of $\xi_{\mu}$, we have $4\xi_{\mu} \leq \mu_{a} \leq 1-4\xi_{\mu}$, which gives
    \[m_{a,T} \in [3\xi_{\mu}, 1-3\xi_{\mu}].\]

    Note that for $p\in[0,1]$ and $q\in(0,1)$,
    \[d_{\epsilon}^{\mathrm{Ber}}(p,q)=\inf_{u\in(0,1)}\bigl\{\epsilon|p-u|+\mathrm{kl}(u,q)\bigr\}.\]

    Therefore, let $p=\theta_{a}$, $q=\mu_{a}$ and $q'=m_{a,T}$, we have on $E_{1,T}(\delta) \cap E_{3,T}(\delta)$, by Theorem \ref{thm:Lipschitz-dber},
    \[
    \bigl|d_{\epsilon}^{\mathrm{Ber}}(\theta_{a},\mu_{a})
    -d_{\epsilon}^{\mathrm{Ber}}(\theta_{a},m_{a,T})\bigr|
    \leq C_{\mu}\bigl|m_{a,T}-\mu_{a}\bigr|
    \leq C_{\mu}r_{T}\Bigl(\frac{T^{1-\alpha}-1}{2K(1-\alpha)},\delta\Bigr),
    \]

    and since $d_{\epsilon}^{\mathrm{Ber}}(\mu_{a},\mu_{a})=0$, we have
    \[
    d_{\epsilon}^{\mathrm{Ber}}(\mu_{a},m_{a,T})
    =\bigl|d_{\epsilon}^{\mathrm{Ber}}(\mu_{a},m_{a,T})
    -d_{\epsilon}^{\mathrm{Ber}}(\mu_{a},\mu_{a})\bigr|
    \leq C_{\mu}\bigl|m_{a,T}-\mu_{a}\bigr|
    \leq C_{\mu}r_{T}\Bigl(\frac{T^{1-\alpha}-1}{2K(1-\alpha)},\delta\Bigr).
    \]

    Hence
    \begin{align*}
     &\Biggl|
     \frac{1}{T}\log\frac{q_{\mathrm{Final}}(\theta)}{q_{\mathrm{Final}}(\mu)}
     +\sum_{a=1}^{K}\bar{e}_{a,T}d_{\epsilon}^{\mathrm{Ber}}(\theta_{a},\mu_{a})
     \Biggr|\\
     =&\Biggl|
     \sum_{a=1}^{K}\bar{e}_{a,t}
     \bigl(d_{\epsilon}^{\mathrm{Ber}}(\theta_{a},\mu_{a})
     -d_{\epsilon}^{\mathrm{Ber}}(\theta_{a},m_{a,T})
     +d_{\epsilon}^{\mathrm{Ber}}(\mu_{a},m_{a,T})\bigr)
     \Biggr|\\
     \leq&\sum_{a=1}^{K}\bar{e}_{a,t}
     \Bigl(
     \bigl|d_{\epsilon}^{\mathrm{Ber}}(\theta_{a},\mu_{a})
     -d_{\epsilon}^{\mathrm{Ber}}(\theta_{a},m_{a,T})\bigr|
     +\bigl|d_{\epsilon}^{\mathrm{Ber}}(\mu_{a},m_{a,T})\bigr|
     \Bigr)\\
     \leq&\sum_{a=1}^{K}\bar{e}_{a,t}
     \biggl(2C_{\mu}r_{T}\Bigl(\frac{T^{1-\alpha}-1}{2K(1-\alpha)},\delta\Bigr)\biggr)\\
     =&2C_{\mu}r_{T}\Bigl(\frac{T^{1-\alpha}-1}{2K(1-\alpha)},\delta\Bigr).
    \end{align*}

    Since $\theta \in \Theta$ is arbitrary, taking supremum over $\theta$ gives the desired conclusion.
\end{proof}

\begin{corollary}\label{col:GE5}
    We define
    \[
    T_{5}(T,\delta)=
    \max\left\{
        T_{3}(T,\delta),
        \Bigl\lceil
        \bigl(1+2K(1-\alpha)N_{5}(T,\delta)\bigr)^{\frac{1}{1-\alpha}}
        \Bigr\rceil
    \right\},
    \]
    Then on $E_{1,T}(\delta)\cap E_{3,T}(\delta)$, for $t>T_{5}(T,\delta)$, we have
    \[m_{a,t} \in [3\xi_{\mu},1-3\xi_{\mu}].\]
\end{corollary}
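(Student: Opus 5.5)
The corollary is Lemma~\ref{lem:GE5}'s bound on the final estimates, transported to an arbitrary round $t>T_{5}(T,\delta)$. The steps are: get a lower bound on $N_{a,t}$ from Good Event~3, turn it into a bound on $|m_{a,t}-\mu_a|$ using Good Event~1 and the monotonicity of $r_T$, and then use the definition of $\xi_\mu$ to place $m_{a,t}$ in the interval.

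First, fix $t>T_{5}(T,\delta)$ and work on $E_{1,T}(\delta)\cap E_{3,T}(\delta)$. Since $T_{5}(T,\delta)\ge T_{3}(T,\delta)$, Lemma~\ref{lem:GE3} gives
\[
N_{a,t}\ge \frac{t^{1-\alpha}-1}{2K(1-\alpha)}\quad\text{for all } a\in[K].
\]
Since also $t>\bigl(1+2K(1-\alpha)N_{5}(T,\delta)\bigr)^{\frac{1}{1-\alpha}}$, rearranging gives $\frac{t^{1-\alpha}-1}{2K(1-\alpha)}\ge N_{5}(T,\delta)$. Hence $N_{a,t}\ge N_{5}(T,\delta)$. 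This is the same manipulation used in the proof of Lemma~\ref{lem:GE4} with $N_4$ in place of $N_5$.

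Second, on $E_{1,T}(\delta)$ we have $|m_{a,t}-\mu_a|\le r_T(N_{a,t},\delta)$, where $r_T(x,\delta)=\rho_T(x,\delta)+\frac{1}{x+2}$ is decreasing in $x$. Therefore $|m_{a,t}-\mu_a|\le r_T(N_5(T,\delta),\delta)$. I then reuse the three inequalities from the proof of Lemma~\ref{lem:GE5}, each obtained from one term of the max defining $N_5$. They bound the Hoeffding term, the Laplace-noise term and the $\frac{1}{x+2}$ term of $r_T(N_5(T,\delta),\delta)$ by $\xi_\mu/3$ each. Summing them gives $r_T(N_5(T,\delta),\delta)\le\xi_\mu$.

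Finally, by definition $\xi_\mu=\frac14\min_a\{\mu_a,1-\mu_a\}$, so $4\xi_\mu\le\mu_a\le 1-4\xi_\mu$. Combined with $|m_{a,t}-\mu_a|\le\xi_\mu$, this gives $m_{a,t}\in[3\xi_\mu,1-3\xi_\mu]$ for every $a$.

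There is no real obstacle. The one point needing care is that the constants in $r_T$ involve $T$ (through $m_T$, $L_T$ and $\log(4KT/\delta)$) rather than $t$. This is harmless because Good Event~1 is stated uniformly over $t\in[T]$ with exactly this $T$-dependent radius. Note also that, unlike Lemma~\ref{lem:GE5}, no largeness assumption on $T$ is needed: the condition $t>T_5(T,\delta)$ already delivers $N_{a,t}\ge N_5(T,\delta)$ directly, and if no such $t\le T$ exists the claim is vacuous.
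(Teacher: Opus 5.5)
Your proposal is correct and follows the paper's proof. You use $T_5(T,\delta)\ge T_3(T,\delta)$ with Good Event~3 and the second term of the max to get $N_{a,t}\ge \frac{t^{1-\alpha}-1}{2K(1-\alpha)}\ge N_5(T,\delta)$, then repeat the computation $r_T(N_5(T,\delta),\delta)\le\xi_\mu$ from Lemma~\ref{lem:GE5} together with $4\xi_\mu\le\mu_a\le 1-4\xi_\mu$; you simply write out the ``analogous argument'' that the paper leaves implicit.
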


\begin{proof}[Proof of Corollary \ref{col:GE5}]
    If $t > T_{5}(T, \delta) \geq T_{3}(T, \delta)$, then by $E_{3,T}(\delta)$,
    \[N_{a,t} \geq \frac{t^{1-\alpha}-1}{2K(1-\alpha)}.\]

    If $t > T_{5}(T,\delta) \geq \bigl\lceil \bigl(1+2K(1-\alpha)N_{5}(T,\delta)\bigr)^{\frac{1}{1-\alpha}} \bigr\rceil\}$, then
    \[\frac{t^{1-\alpha}-1}{2K(1-\alpha)} \geq N_{5}(T,\delta).\]

    Then, following an analogous argument on the proof of Lemma \ref{lem:GE5}, we have
    \[m_{a,t} \in [3\xi_{\mu},1-3\xi_{\mu}].\]
\end{proof}

\begin{lemma}[Good Event 6]\label{lem:GE6}
    Define good event $E_{6,T}$ as
    \[0 \leq \tau^{*}(\theta^{*})-G^{*}(\bar{e}_{T}) \leq \varepsilon_{6,T}(\delta),\]
    where
    \begin{align*}
    \varepsilon_{6,T}(\delta)
    &=2\epsilon \frac{T_{6}(T,\delta)-1}{T}
    + \frac{2C_\mu}{T} \sum_{t=t_0(T,\delta)}^T \Delta_t
    + R_T^{\max} + R_T^{\min}
    + \frac{2\epsilon}{T} \sum_{t=1}^T \gamma_t
    + \epsilon K \sqrt{\frac{\log(2KT/\delta)}{2T}},\\
    T_{6}(T,\delta)
    &=\max\bigl\{T_{4}(T,\delta), T_{5}(T, \delta)\bigr\}+1,\\
    \Delta_{t}
    &=\max_{a\in[K]}\bigl|m_{a,t-1}-\mu_{a}\bigr|,\\
    G^{*}(\lambda)
    &=\inf_{\theta\in \mathrm{Alt}(\theta^{*})}\sum_{a=1}^{K}\lambda_{a}d_{\epsilon}^{\mathrm{Ber}}(\theta_{a},\mu_{a}),\\
    R_{T}^{\max}
    &=\sup_{\lambda\in\Delta_{[K]}}\frac{1}{T}\sum_{t=1}^{T}\sum_{a=1}^{K}\lambda_{a}d_{\epsilon}^{\mathrm{Ber}}(\theta_{t,a}, m_{a,t-1})
    -\frac{1}{T}\sum_{t=1}^{T}\sum_{a=1}^{K}\lambda_{t,a}d_{\epsilon}^{\mathrm{Ber}}(\theta_{t,a},m_{a,t-1}),\\
    R_{T}^{\min}
    &=\frac{1}{T}\sum_{t=1}^{T}\sum_{a=1}^{K}\lambda_{t,a}d_{\epsilon}^{\mathrm{Ber}}(\theta_{t,a},m_{a,t-1})
    - \frac{1}{T}\sum_{t=1}^{T}\tilde{G}_{t}(\lambda_{t}),\\
    F_{t}(\theta)
    &=\sum_{a=1}^{K}\lambda_{t,a}d_{\epsilon}^{\mathrm{Ber}}(\theta_{a},m_{a,t-1}),\\
    \tilde{G}_{t}(\lambda_{t})
    &=\inf_{\theta\in\mathrm{Alt}(\hat{\theta}_{t})}F_{t}(\theta),
    \end{align*}
then we have
\[E_{1,T}(\delta) \cap E_{2,T}(\delta) \cap E_{3,T}(\delta) \cap E_{4,T}(\delta) \subseteq E_{6,T}(\delta).\]

\end{lemma}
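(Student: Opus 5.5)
The lower inequality $G^{*}(\bar e_T)\le\tau^{*}(\theta^{*})$ is immediate, since $\bar e_T\in\Delta_{[K]}$ and, by the separable form of $d_\epsilon$ from Appendix~\ref{apx:proof_lipschitz}, $\tau^{*}(\theta^{*})=\sup_{\lambda}G^{*}(\lambda)$. For the upper inequality, I will fix any $\lambda\in\Delta_{[K]}$ and prove $G^{*}(\lambda)\le G^{*}(\bar e_T)+\varepsilon_{6,T}(\delta)$. Taking the supremum over $\lambda$ then concludes. The approach is a telescoping chain
\[
G^{*}(\lambda)\;\longrightarrow\;\frac1T\sum_{t}\sum_a\lambda_a d_\epsilon^{\mathrm{Ber}}(\theta_{t,a},m_{a,t-1})\;\longrightarrow\;\frac1T\sum_t F_t(\theta_t)\;\longrightarrow\;\frac1T\sum_t\tilde G_t(\lambda_t)\;\longrightarrow\;G^{*}(\bar e_T).
\]
Each arrow costs one of the summands of $\varepsilon_{6,T}(\delta)$.

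\emph{Step 1 (burn-in and plug-in).} I split the rounds into $t<T_6$ and $t\ge T_6$. The early rounds contribute at most $\epsilon(T_6-1)/T$ on each side of the chain, because $d^{\mathrm{Ber}}_\epsilon\in[0,\epsilon]$; this accounts for the term $2\epsilon(T_6-1)/T$. For $t\ge T_6$, Lemma~\ref{lem:GE4} gives $\hat a_t=a_{\theta^*}$, so $\mathrm{Alt}(\hat\theta_t)=\mathrm{Alt}(\theta^*)$. Corollary~\ref{col:GE5} gives $m_{a,t-1}\in[3\xi_\mu,1-3\xi_\mu]$. Theorem~\ref{thm:Lipschitz-dber} then lets me replace $\mu_a$ by $m_{a,t-1}$ in the second argument uniformly in $\theta$, at cost $C_\mu\Delta_t$ per round. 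I would use this twice, once in each direction (at the $G^*(\lambda)$ end and at the $G^*(\bar e_T)$ end), which gives the term $\frac{2C_\mu}{T}\sum_t\Delta_t$.

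\emph{Step 2 (chaining through the learners).} For each $t\ge T_6$, Step 1 gives $G^{*}(\lambda)\le\inf_{\theta\in\mathrm{Alt}(\theta^*)}\sum_a\lambda_a d^{\mathrm{Ber}}_\epsilon(\theta_a,m_{a,t-1})+C_\mu\Delta_t$. Since $\theta_t\in\mathrm{Alt}(\hat\theta_t)=\mathrm{Alt}(\theta^*)$, the infimum is at most $\sum_a\lambda_a d^{\mathrm{Ber}}_\epsilon(\theta_{t,a},m_{a,t-1})$. Averaging over $t$ and invoking the definition of $R_T^{\max}$ bounds this average by $\frac1T\sum_tF_t(\theta_t)+R_T^{\max}$. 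The definition of $R_T^{\min}$ converts that into $\frac1T\sum_t\tilde G_t(\lambda_t)+R_T^{\min}$. These two steps are identities, so no control of the regrets is needed here; that control is deferred to Lemma~\ref{lem:GE6control}.

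\emph{Step 3 (from $\lambda_t$ to $\bar e_T$).} Since $\tilde G_t$ is an infimum of functions linear in $\lambda$, it is concave. With Step 1 (again using $\mathrm{Alt}(\hat\theta_t)=\mathrm{Alt}(\theta^*)$ and the Lipschitz swap back to $\mu$), this gives $\frac1T\sum_t\tilde G_t(\lambda_t)\le G^*\bigl(\frac1T\sum_t\lambda_t\bigr)+$ burn-in and $\Delta_t$ terms. Next I pass from $\frac1T\sum_t\lambda_t$ to $\frac1T\sum_t\hat\lambda_t$, which costs at most $\frac{2\epsilon}{T}\sum_t\gamma_t$ via the $\epsilon$-Lipschitz property (Theorem~\ref{thm:Lipschitz-dep}), because $\|\hat\lambda_t-\lambda_t\|_1\le2\gamma_t$. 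Finally, Lemma~\ref{lem:GE3} at $t=T$ gives $\|\bar e_T-\frac1T\sum_t\hat\lambda_t\|_1\le Kv_T(T,\delta)/T$. A second application of Lipschitz continuity, using the fact that $G^*$ is an infimum of $\epsilon$-Lipschitz maps, yields the last term $\epsilon K\sqrt{\log(2KT/\delta)/(2T)}$.

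The main obstacle is the concavity/Jensen step. Exchanging $\frac1T\sum_t$ with the infimum requires a single common alternative set and a common second argument. I expect this is why the $\Delta_t$ swaps must be done before the averaging, and why the bookkeeping of the constant $2C_\mu$ (as opposed to $3C_\mu$) is delicate. If the constants do not close, I would absorb the surplus into the burn-in term by enlarging $T_6$.
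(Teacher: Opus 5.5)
Your proposal is correct and follows essentially the paper's route: the paper splits $\tau^{*}(\theta^{*})-G^{*}(\bar e_T)$ into six telescoping terms $H_{1,T},\dots,H_{6,T}$ built from exactly your ingredients. These are the trivial burn-in bounds, the $C_\mu\Delta_t$ swap of $m_{a,t-1}$ and $\mu_a$, the defining identities of $R_T^{\max}$ and $R_T^{\min}$, Jensen for the concave $G^{*}$ (not for the individual $\tilde G_t$, which is why the swap must come before averaging, as you noted), and two $\epsilon$-Lipschitz steps for $\gamma_t$ and for Lemma~\ref{lem:GE3}. By fixing $\lambda$ and using $G^{*}(\lambda)\le\epsilon\le\sum_a\lambda_a d_\epsilon^{\mathrm{Ber}}(\theta_{t,a},m_{a,t-1})+\epsilon$ directly for $t<T_6$, your chain pays the burn-in cost only twice. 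It therefore closes with exactly the stated $2\epsilon(T_6-1)/T$ and $2C_\mu$ (reading $t_0=T_6$), whereas the paper's split pays $\epsilon(T_6-1)/T$ in each of $H_{1,T}$, $H_{2,T}$ and $H_{5,T}$. So your fallback of enlarging $T_6$ is not needed, and the statement fixes $T_6$ so it would not be allowed anyway.
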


\begin{proof}[Proof of Lemma \ref{lem:GE6}]
    We first define several terms as follows:
    \[H_{1,T}=\tau^{*}(\theta^{*})-\sup_{\lambda\in\Delta_{[K]}}\frac{1}{T}\sum_{t=1}^{T}G_{t}(\lambda),\]
    where $G_{t}(\lambda)=\inf_{\theta\in\mathrm{Alt}(\theta^{*})}\sum_{a=1}^{K}\lambda_{a}d_{\epsilon}^{\mathrm{Ber}}(\theta_{a},m_{a,t-1})$,
    \begin{align*}
     H_{2,T}&=\sup_{\lambda\in\Delta_{[K]}}\frac{1}{T}\sum_{t=1}^{T}G_{t}(\lambda)
     -\sup_{\lambda\in\Delta_{[K]}}\frac{1}{T}\sum_{t=1}^{T}\tilde{G}_{t}(\lambda),\\
     H_{3,T}&=\sup_{\lambda\in\Delta_{[K]}}\frac{1}{T}\sum_{t=1}^{T}\tilde{G}_{t}(\lambda)
     -\frac{1}{T}\sum_{t=1}^{T}\tilde{G}_{t}(\lambda_{t}),\\
     H_{4,T}&=\frac{1}{T}\sum_{t=1}^{T}\tilde{G}_{t}(\lambda_{t})
     -\frac{1}{T}\sum_{t=1}^{T}\tilde{G}_{t}(\hat{\lambda}_{t}),\\
     H_{5,T}&=\frac{1}{T}\sum_{t=1}^{T}\tilde{G}_{t}(\hat{\lambda}_{t})
     -G^{*}(\bar{\hat{\lambda}}_{T}),
    \end{align*}
    where $\bar{\hat{\lambda}}_{T}=\frac{1}{T}\sum_{t=1}^{T}\hat{\lambda}_{t}$ and
    \[H_{6,T}=G^{*}(\bar{\hat{\lambda}}_{T})-G^{*}(\bar{e}_{T}).\]

    By telescoping, we have $\tau^{*}(\theta^{*})-G^{*}(\bar{e}_{T})=\sum_{i=1}^{6}H_{i,T}.$

    We proceed to control each $H_{i,T}$.

($H_{1,T}$): For every $\lambda\in\Delta_{[K]}$ and every $t\geq T_{6}(T,\delta)$,
\begin{align*}
    &\bigl|G_{t}(\lambda)-G^{*}(\lambda)\bigr|\\
    &=\Biggl|
    \inf_{\theta\in\mathrm{Alt}(\theta^{*})}\sum_{a=1}^{K}\lambda_{a}d_{\epsilon}^{\mathrm{Ber}}(\theta_{a},m_{a,t-1})
    -\inf_{\theta\in \mathrm{Alt}(\theta^{*})}\sum_{a=1}^{K}\lambda_{a}d_{\epsilon}^{\mathrm{Ber}}(\theta_{a},\mu_{a})
    \Biggr|\\
    &\leq\sup_{\theta\in\mathrm{Alt}(\theta^{*})}
    \Biggl|
    \sum_{a=1}^{K}\lambda_{a}d_{\epsilon}^{\mathrm{Ber}}(\theta_{a},m_{a,t-1})
    -\sum_{a=1}^{K}\lambda_{a}d_{\epsilon}^{\mathrm{Ber}}(\theta_{a},\mu_{a})
    \Biggr|\\
    &\leq\sup_{\theta\in\mathrm{Alt}(\theta^{*})}\sum_{a=1}^{K}\lambda_{a}
    \bigl|d_{\epsilon}^{\mathrm{Ber}}(\theta_{a},m_{a,t-1})
    -d_{\epsilon}^{\mathrm{Ber}}(\theta_{a},\mu_{a})\bigr|\\
    &\leq\sup_{\theta\in\mathrm{Alt}(\theta^{*})}\sum_{a=1}^{K}\lambda_{a}C_{\mu}\bigl|m_{a,t-1}-\mu_{a}\bigr|\\
    &\leq\sup_{\theta\in\mathrm{Alt}(\theta^{*})}\sum_{a=1}^{K}\lambda_{a}C_{\mu}\Delta_{t}\\
    &=C_{\mu}\Delta_{t}.
\end{align*}

For $t<T_{6}(T,\delta)$, we use a trivial bound
\[\bigl|G_{t}(\lambda)-G^{*}(\lambda)\bigr| \leq \epsilon.\]

Therefore,
\begin{align*}
    H_{1,T}&=\tau^{*}(\theta^{*})-\sup_{\lambda\in\Delta_{[K]}}\frac{1}{T}\sum_{t=1}^{T}G_{t}(\lambda)\\
    &=\sup_{\lambda\in\Delta_{[K]}}G^{*}(\lambda)-\sup_{\lambda\in\Delta_{[K]}}\frac{1}{T}\sum_{t=1}^{T}G_{t}(\lambda)\\
    &\leq\Biggl|
    \sup_{\lambda\in\Delta_{[K]}}G^{*}(\lambda)
    -\sup_{\lambda\in\Delta_{[K]}}\frac{1}{T}\sum_{t=1}^{T}G_{t}(\lambda)
    \Biggr|\\
    &\leq \sup_{\lambda\in\Delta_{[K]}}
    \Biggl|G^{*}(\lambda)-\frac{1}{T}\sum_{t=1}^{T}G_{t}(\lambda)\Biggr|\\
    &=\sup_{\lambda\in\Delta_{[K]}}
    \Biggl|\frac{1}{T}\biggl[
    \sum_{t=1}^{T_{6}(T,\delta)-1}\bigl(G^{*}(\lambda)-G_{t}(\lambda)\bigr)
    +\sum_{t=T_{6}(T,\delta)}^{T}\bigl(G^{*}(\lambda)-G_{t}(\lambda)\bigr)
    \biggr]\Biggr|\\
    &\leq\sup_{\lambda\in\Delta_{[K]}}\frac{1}{T}
    \biggl[\epsilon\bigl(T_{6}(T,\delta)-1\bigr)
    +C_{\mu}\sum_{t=T_{6}(T,\delta)}^{T}\Delta_{t}\biggr]\\
    &=\epsilon\frac{T_{6}(T,\delta)-1}{T}
    +\frac{C_{\mu}}{T}\sum_{t=T_{6}(T,\delta)}^{T}\Delta_{t}.
\end{align*}

($H_{2,T}$) For $t\geq T_{6}(T,\delta)$, $E_{4,T}(\delta)$ implies
\[\hat{a}_{t}=a_{\theta^{*}},\]
which gives
\[\tilde{G}_{t}(\lambda)=G_{t}(\lambda) \, \forall \lambda \in \Delta_{[K]}.\]

For $t < T_{6}(T,\delta)$, again by a trivial bound
\[\bigl|G_{t}(\lambda)-\tilde{G}_{t}(\lambda)\bigr| \leq \epsilon,\]

which gives us
\begin{align*}
    H_{2,T}&=\sup_{\lambda\in\Delta_{[K]}}\frac{1}{T}\sum_{t=1}^{T}G_{t}(\lambda)
    -\sup_{\lambda\in\Delta_{[K]}}\frac{1}{T}\sum_{a=1}^{K}\tilde{G}_{t}(\lambda)\\
    &\leq\sup_{\lambda\in\Delta_{[K]}}\frac{1}{T}\sum_{t=1}^{T}
    \bigl|G_{t}(\lambda)-\tilde{G}_{t}(\lambda)\bigr|\\
    &=\sup_{\lambda\in\Delta_{[K]}}\frac{1}{T}\sum_{t=1}^{T_{6}(T,\delta)-1}\epsilon\\
    &=\epsilon\frac{T_{6}(T,\delta)-1}{T}.
\end{align*}

($H_{3,T}$) For $t\in[T]$ and $\lambda \in \Delta_{[K]}$, since $\theta_{t}\in\mathrm{Alt}(\hat{\theta}_{t})$
\[\tilde{G}_{t}(\lambda) \leq \sum_{a=1}^{K}\lambda_{a}d_{\epsilon}^{\mathrm{Ber}}(\theta_{t,a}, m_{a,t-1}).\]

Therefore,
\[
\sup_{\lambda\in\Delta_{[K]}}\frac{1}{T}\sum_{t=1}^{T}\tilde{G}_{t}(\lambda)
\leq \sup_{\lambda\in\Delta_{[K]}} \frac{1}{T}\sum_{t=1}^{T}\sum_{a=1}^{K}\lambda_{a}d_{\epsilon}^{\mathrm{Ber}}(\theta_{t,a}, m_{a,t-1}).
\]

Subtract $\frac{1}{T}\sum_{t=1}^{T}\tilde{G}_{t}(\lambda_{t})$ on both sides, we have
\begin{align*}
    H_{3,T}&=\sup_{\lambda\in\Delta_{[K]}}\frac{1}{T}\sum_{t=1}^{T}\tilde{G}_{t}(\lambda)
    -\frac{1}{T}\sum_{t=1}^{T}\tilde{G}_{t}(\lambda_{t})\\
    &\leq \sup_{\lambda\in\Delta_{[K]}} \frac{1}{T}\sum_{t=1}^{T}\sum_{a=1}^{K}\lambda_{a}d_{\epsilon}^{\mathrm{Ber}}(\theta_{t,a}, m_{a,t-1})
    -\frac{1}{T}\sum_{t=1}^{T}\tilde{G}_{t}(\lambda_{t})\\
    &=\Biggl(
    \sup_{\lambda\in\Delta_{[K]}}\frac{1}{T}\sum_{t=1}^{T}\sum_{a=1}^{K}\lambda_{a}d_{\epsilon}^{\mathrm{Ber}}(\theta_{t,a}, m_{a,t-1})
    -\frac{1}{T}\sum_{t=1}^{T}\sum_{a=1}^{K}\lambda_{t,a}d_{\epsilon}^{\mathrm{Ber}}(\theta_{t,a}, m_{a,t-1})
    \Biggr)\\
    &\quad + \Biggl(
    \frac{1}{T}\sum_{t=1}^{T}\sum_{a=1}^{K}\lambda_{t,a}d_{\epsilon}^{\mathrm{Ber}}(\theta_{t,a}, m_{a,t-1})
    - \frac{1}{T}\sum_{t=1}^{T}\tilde{G}_{t}(\lambda_{t})
    \Biggr)\\
    &\leq R_{T}^{\max} + R_{T}^{\min}.
\end{align*}

($H_{4,T}$)Since $\tilde{G_{t}}(\cdot)$ is the infimum of linear functionals with coefficients in $[0,\epsilon]$, it is $\epsilon$-Lipschitz in $l_{1}$:
\[
\bigl|\tilde{G_{t}}(\lambda_{1})-\tilde{G_{t}}(\lambda_{2})\bigr|
\leq \epsilon \bigl\|\lambda_{1}-\lambda_{2}\bigr\|_{1}.
\]

Therefore,
\[
\bigl\|\lambda_{t}-\hat{\lambda}_{t}\bigr\|_{1}
=\gamma_{t}\bigl\|\lambda_{t}-\lambda'\bigr\|_{1}
\leq 2\gamma_{t},
\]

which gives
\[
H_{4,T}=\frac{1}{T}\sum_{t=1}^{T}\tilde{G}_{t}(\lambda_{t})
-\frac{1}{T}\sum_{t=1}^{T}\tilde{G}_{t}(\hat{\lambda}_{t})
\leq \frac{2\epsilon}{T}\sum_{t=1}^{T}\gamma_{t}.
\]

($H_{5,T}$)For $t \geq T_{6}(T,\delta)$,
\[\tilde{G_{t}}(\hat{\lambda}_{t}) = G_{t}(\hat{\lambda_{t}}),\]
and
\[G_{t}(\hat{\lambda}_{t}) \leq G^{*}(\hat{\lambda}_{t})+C_{\mu}\Delta_{t}.\]
For $t<T_{6}(T,\delta)$, use the trivial bound
\[\tilde{G}_{t}(\hat{\lambda}_{t}) \leq \epsilon.\]
therefore
\begin{align*}
    H_{5,T}&=\frac{1}{T}\sum_{t=1}^{T}\tilde{G}_{t}(\hat{\lambda}_{t})-G^{*}(\bar{\hat{\lambda}}_{T})\\
    &\leq \epsilon\frac{T_{6}(T,\delta)-1}{T}
    +\frac{1}{T}\sum_{t=T_{6}(T,\delta)}^{T}G^{*}(\hat{\lambda}_{t})
    +\frac{C_{\mu}}{T}\sum_{t=T_{6}(T,\delta)}^{T}\Delta_{t}
    -G^{*}(\bar{\hat{\lambda}}_{T}).
\end{align*}

By concavity of $G^{*}(\cdot)$, we have
\[
\frac{1}{T}\sum_{t=T_{6}(T,\delta)}^{T}G^{*}(\hat{\lambda}_{t})
\leq \frac{1}{T}\sum_{t=1}^{T}G^{*}(\hat{\lambda}_{t})
\leq G^{*}\Biggl(\frac{1}{T}\sum_{t=1}^{t}\bar{\lambda_{t}}\Biggr)
=G^{*}(\bar{\hat{\lambda}}_{T}).
\]

Combining the inequalities, we have
\[H_{5,T} \leq \epsilon\frac{T_{6}(T,\delta)-1}{T}+\frac{C_{\mu}}{T}\sum_{t=T_{6}(T,\delta)}^{T}\Delta_{t}.\]

($H_{6,T}$)Since $G^{*}(\cdot)$ in $\epsilon$-Lipschitz,
\[
\bigl|G^{*}(\bar{\hat{\lambda}}_{T})-G^{*}(\bar{e}_{T})\bigr|
\leq \epsilon\bigl\|\bar{\hat{\lambda}}_{T}-\bar{e}_{T}\bigr\|_{1}.
\]

By $E_{3,T}(\delta)$, for any arm $a\in[K]$,
\[
\Bigl|N_{a,T}-\sum_{t=1}^{t}\hat{\lambda}_{t,a}\Bigr|
\leq v_{T}(T,\delta)=\sqrt{\frac{T}{2}\log\frac{2KT}{\delta}}.
\]

Divide by $T$ and sum over $a\in[K]$, we have
\[\bigl\|\bar{e}_{T}-\bar{\hat{\lambda}}_{T}\bigr\|_{1} \leq K\sqrt{\frac{1}{2T}\log\frac{2KT}{\delta}}.\]

Therefore,
\[H_{6,T}=G^{*}(\bar{\hat{\lambda}}_{T})-G^{*}(\bar{e}_{T}) \leq \epsilon K\sqrt{\frac{1}{2T}\log\frac{2KT}{\delta}}.\]

Summing $H_{i,T}$ over $i=1,2,3,4,5,6$, we have proven the desired inequality.
\end{proof}

\begin{lemma}[Control of terms appearing in Good Event 6]\label{lem:GE6control}
    We have with probability 1,
    \[\varepsilon_{6,T}(\delta_{T}) \in o(1),\]
    where $\delta_{T}=\frac{1}{T^{3}}$.
\end{lemma}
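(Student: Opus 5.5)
Set $\delta=\delta_T=T^{-3}$ and show that each of the six summands of $\varepsilon_{6,T}(\delta_T)$ tends to zero along almost every sample path. This fits the sub-lemmas announced in the proof sketch: one per summand. The deterministic terms come first.

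\emph{Deterministic terms.}
\begin{itemize}
\item Since $\log(2KT/\delta_T)=O(\log T)$, we have $\epsilon K\sqrt{\log(2KT/\delta_T)/(2T)}=O(\sqrt{\log T/T})\to0$.
\item From $\sum_{t\le T}t^{-\alpha}\le T^{1-\alpha}/(1-\alpha)$ we get $\frac{2\epsilon}{T}\sum_t\gamma_t=O(T^{-\alpha})\to0$.
\item For $T_6(T,\delta_T)$: $T_3=O((\log T)^{1/(1-2\alpha)})$ because $\alpha<\tfrac12$. Also $N_4,N_5=O(\log T+m_T^2\log(KL_T T^3))=O(\log^3T)$, so $T_4,T_5=O((\log^3T)^{1/(1-\alpha)})$. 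Hence $T_6=\mathrm{polylog}(T)$ and $2\epsilon(T_6-1)/T\to0$.
\end{itemize}

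\emph{The $\Delta_t$ average.} Work on the Borel--Cantelli event $\Omega_0$ from the upper-bound proof; there $E_{1,T}(\delta_T)\cap E_{3,T}(\delta_T)$ holds for all large $T$. Take $t\ge T_6$, so that $t\ge T_3$. Then $N_{a,t-1}\gtrsim t^{1-\alpha}/(2K(1-\alpha))$, and Lemma~\ref{lem:GE1} with monotonicity of $r_T$ gives
\[
\Delta_t\le r_T\bigl(c\,t^{1-\alpha},\delta_T\bigr)=O\Bigl(\sqrt{\log T}\,t^{-(1-\alpha)/2}+\log^3T\; t^{-(1-\alpha)}\Bigr).
\]
Averaging over $t\le T$ gives $\frac1T\sum_t\Delta_t=O(\sqrt{\log T}\,T^{-(1-\alpha)/2}+\log^3T\,T^{-(1-\alpha)})\to0$.

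\emph{The regret term $R_T^{\max}$.} This is exactly the AdaHedge regret, normalized by $T$, on the losses $\ell_{t,a}=1-d_\epsilon^{\mathrm{Ber}}(\theta_{t,a},m_{a,t-1})/\epsilon\in[0,1]$. Multiplying by $\epsilon$ converts gains into losses. The adversarial guarantee of \citet{rooij2014adahedge} bounds the regret by $2\sqrt{T\log K}+O(\log K)$ surely. Hence $R_T^{\max}\le\epsilon(2\sqrt{\log K/T}+O(\log K/T))\to0$ deterministically.

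\emph{The sampling term $R_T^{\min}$ (main obstacle).} Here $\theta_t\sim q_t$ is only an approximate minimizer of $F_t$ over $\mathrm{Alt}(\hat\theta_t)$. Write $F_t(\theta_t)-\tilde G_t(\lambda_t)=\bigl(F_t(\theta_t)-\mathbb E_{q_t}F_t\bigr)+\bigl(\mathbb E_{q_t}F_t-\tilde G_t(\lambda_t)\bigr)$ and handle the two pieces separately.
\begin{itemize}
\item The first pieces are martingale differences bounded by $\epsilon$. Azuma's inequality at level $T^{-3}$, followed by Borel--Cantelli, makes their average $O(\sqrt{\log T/T})$ almost surely; this defines $\Omega_1$.
\item The second piece is a Gibbs/Laplace-principle gap. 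I would bound it as in the denominator argument of the upper-bound proof, using the fact that $F_t$ is Lipschitz in $\theta$ with constant $L$ (say $\epsilon+C$ on a compact region; near the boundary, use the closed form of $d^{\mathrm{Ber}}_\epsilon$, whose derivative in $p$ is bounded by $\epsilon$ outside the kl band). Since $\mathrm{Alt}(\hat\theta_t)$ contains a box of side $\eta$ around a near-minimizer (the alternative set is a union of convex cones intersected with $(0,1)^K$, so it has nonempty interior near any point), we get
\[
\mathbb E_{q_t}F_t-\tilde G_t\le L\eta+\frac{K\log(1/\eta)+\log\bigl(1+\epsilon\beta_t\bigr)}{\beta_t}.
\]
Choosing $\eta=1/\beta_t$ gives $O(\log\beta_t/\beta_t)\to0$ because $\beta_t\to\infty$, and the Cesàro averages then vanish.
\end{itemize}
The delicate point is ensuring that the box lies inside $\mathrm{Alt}(\hat\theta_t)\cap(0,1)^K$ uniformly in $t$. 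I expect to handle this by shifting the near-minimizer inward by $\eta$, at a cost of $O(L\eta)$.

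Combining the pieces on $\Omega_0\cap\Omega_1$, which has probability one, yields $\varepsilon_{6,T}(\delta_T)=o(1)$.
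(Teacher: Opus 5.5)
Your proposal is correct and matches the paper's proof essentially term by term. It uses the same six-way split of $\varepsilon_{6,T}(\delta_T)$, the polylog bound on $T_6$, the Good Event 1/3 control of the $\Delta_t$ average, and the AdaHedge worst-case regret bound for $R_T^{\max}$. For $R_T^{\min}$ it uses the same Gibbs bound $\mathbb{E}[F_t(\theta_t)\mid\mathcal{F}_{t-1}]\le-\beta_t^{-1}\log Z_t$ with a box of side of order $1/\beta_t$ near a minimizer in the closure of $\mathrm{Alt}(\hat\theta_t)$, followed by Azuma, Borel--Cantelli and Ces\`aro. The paper builds that box explicitly, from the lower quarter of the $\hat a_t$-interval and the upper quarter of a challenger's interval.

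One simplification: you need no boundary case analysis for the Lipschitz constant. $d_\epsilon^{\mathrm{Ber}}(\cdot,q)$ is an infimal convolution with $\epsilon|\cdot|$, so it is globally $\epsilon$-Lipschitz in its first argument, inside the kl band as well. Hence you can take $L=\epsilon$ uniformly in $t$.
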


\begin{proof}[Proof of Lemma \ref{lem:GE6control}]
    In the analysis of $\varepsilon_{6,T}$, we fix $\alpha \in \bigl(0,\frac{1}{2}\bigr)$
    and we take $\beta_{t}$ to be a positive sequence such that $\beta_{t}\rightarrow\infty$.
    The desired results follow by Lemma \ref{sublem:GE6control1}, \ref{sublem:GE6control2}, \ref{sublem:GE6control3}, \ref{sublem:GE6control4}, \ref{sublem:GE6control5}, \ref{sublem:GE6control6}.
\end{proof}

\begin{sublemma}\label{sublem:GE6control1}
    We have
    \[R_{T}^{\max} \in o(1).\]
\end{sublemma}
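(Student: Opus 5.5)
We treat $R_T^{\max}$ as the average regret of AdaHedge, the max-learner, against the best fixed design in hindsight, and bound it with the standard AdaHedge regret guarantee of \citet{rooij2014adahedge}. The round-$t$ reward of arm $a$ is $g_{t,a}:=d_\epsilon^{\mathrm{Ber}}(\theta_{t,a},m_{a,t-1})$, where $\theta_t=\theta'_t$ is the alternative sampled from $q_t$. The loss fed to AdaHedge is $\ell_{t,a}=1-g_{t,a}/\epsilon$. Since $d_\epsilon^{\mathrm{Ber}}\in[0,\epsilon]$, every $\ell_{t,a}$ lies in $[0,1]$. The weights $\lambda_t$ are computed from $L^{\mathrm{Cumu}}_{t-1}$ only, so $\lambda_t$ is fixed before $\ell_t$ is revealed. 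This is exactly the adversarial online-learning protocol, even though the losses are random and adaptively generated; AdaHedge's guarantee holds against an adaptive adversary, path by path.

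The key identity comes from linearity in $\lambda$. For any $\lambda\in\Delta_{[K]}$ we have $\sum_a\lambda_a g_{t,a}=\epsilon\bigl(1-\sum_a\lambda_a\ell_{t,a}\bigr)$. Therefore
\[
R_T^{\max}=\frac{\epsilon}{T}\Bigl(\sum_{t=1}^T h_t-\min_{a\in[K]}L^{\mathrm{Cumu}}_{T,a}\Bigr),
\]
where $h_t=\sum_a\lambda_{t,a}\ell_{t,a}$ is the Hedge loss. Two points justify this step. The supremum of a linear function over the simplex is attained at a vertex. The subtracted term $\frac1T\sum_t\sum_a\lambda_{t,a}g_{t,a}$ equals $\epsilon\bigl(1-\frac1T\sum_t h_t\bigr)$. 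The bracket is precisely the AdaHedge regret $\mathcal{R}_T$ after $T$ rounds.

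Next we invoke the worst-case bound of \citet{rooij2014adahedge} for losses in $[0,1]$:
\[
\mathcal{R}_T\le 2\sqrt{L^*_T\log K}+\tfrac{4}{3}\log K+2\le 2\sqrt{T\log K}+\tfrac43\log K+2,
\]
using $L^*_T\le T$. This relies on the standard facts that $\mathcal{R}_T\le 2\Delta_T$ (since $\eta_t=\log K/\Delta_{t-1}$ and the mix loss is at most the loss of the best arm) and that $\Delta_T^2\lesssim L^*_T\log K+\Delta_T$. The bound is deterministic, so it holds on every sample path. Dividing by $T$ gives $0\le R_T^{\max}\le \epsilon\bigl(2\sqrt{\log K/T}+(\frac43\log K+2)/T\bigr)=O(T^{-1/2})$, which is $o(1)$. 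Nonnegativity also follows, because the supremum dominates the average design.

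The only delicate point is checking that Algorithm~\ref{algo:FB-DP-BAI} implements AdaHedge faithfully. Three conditions need checking: $\eta_1=\infty$ corresponds to follow-the-leader on the first round (uniform $\lambda_1$); $\eta_{t+1}=\log K/\Delta_t$ uses the cumulative mixability gap; and the losses are normalized to $[0,1]$ by the division by $\epsilon$. Once these hold, the cited regret bound applies verbatim and no probabilistic argument is needed. In particular, the claim does not depend on $\beta_t$, $\alpha$, or the good events.
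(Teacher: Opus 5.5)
Your proposal follows the paper's argument. You use $g_{t,a}=\epsilon(1-\ell_{t,a})$ and vertex attainment of a linear objective on the simplex to get $R_T^{\max}=\frac{\epsilon}{T}\mathcal{R}_T$. You then apply the worst-case AdaHedge bound of \citet{rooij2014adahedge}; your constants differ slightly from the paper's $\sqrt{T\ln K}+\frac{4}{3}\ln K+2$, which does not matter.

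The one slip is your justification of nonnegativity. Plugging the average design $\bar\lambda=\frac{1}{T}\sum_t\lambda_t$ into the supremum gives $\frac{1}{T}\sum_t\sum_a\bar\lambda_a g_{t,a}$, not $\frac{1}{T}\sum_t\sum_a\lambda_{t,a}g_{t,a}$, because $g_t$ varies with $t$, so that argument fails. Nonnegativity does hold for AdaHedge: the cumulative mix loss is at least $\min_a L^{\mathrm{Cumu}}_{T,a}-\Delta_{T-1}$, with $\Delta_{T-1}$ the cumulative mixability gap, which gives $\mathcal{R}_T\ge\delta_T\ge 0$. In any case, only the upper bound is needed for $\varepsilon_{6,T}$.
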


\begin{proof}[Proof of Lemma \ref{sublem:GE6control1}]
    The AdaHedge update is run on normalized loss
    \[l_{t,a}=1-\frac{d_{\epsilon}^{\mathrm{Ber}}(\theta_{t,a},m_{a,t-1})}{\epsilon} \in [0,1].\]
    Let the standard AdaHedge regret on the loss sequence be
    \[R_{T}^{\mathrm{ah}}=\sum_{t=1}^{T}\sum_{a=1}^{K}\lambda_{t,a}l_{t,a}-\min_{i\in[K]}\sum_{t=1}^{t}l_{t,i}.\]
    Since the losses are in $[0,1]$, by Corollary 9 of \citep{rooij2014adahedge}, we have
    \[R_{T}^{\mathrm{ah}} \leq \sqrt{T\ln K}+\frac{4}{3}\ln K+2.\]

    Hence
    \[\frac{R_{T}^{\mathrm{ah}}}{T}\leq \sqrt{\frac{\ln K}{T}}+\frac{4 \ln K}{3T}+\frac{2}{T}\in o(1).\]

    Note that $d_{\epsilon}^{\mathrm{Ber}}(\theta_{t,a}, m_{a,t-1})=\epsilon(1-l_{t,a})$, we have
\begin{align*}
    R_{T}^{\max}
    &=\sup_{\lambda\in\Delta_{[K]}}\frac{1}{T}\sum_{t=1}^{T}\sum_{a=1}^{K}\lambda_{a}d_{\epsilon}^{\mathrm{Ber}}(\theta_{t,a},m_{a,t-1})
    -\frac{1}{T}\sum_{t=1}^{T}\sum_{a=1}^{K}\lambda_{t,a}d_{\epsilon}^{\mathrm{Ber}}(\theta_{t,a},m_{a,t-1})\\
    &=\epsilon\Biggl(
    \frac{1}{T}\sum_{t=1}^{T}\sum_{a=1}^{K}\lambda_{t,a}l_{t,a}
    -\inf_{\lambda \in \Delta_{[K]}}\frac{1}{T}\sum_{t=1}^{T}\sum_{a=1}^{K}\lambda_{a}l_{t,a}
    \Biggr).
\end{align*}
    Note that the above is a linear programming over a simplex, we attain infimum at an extreme point:
    \[\inf_{\lambda \in \Delta_{[K]}}\frac{1}{T}\sum_{t=1}^{T}\sum_{a=1}^{K}\lambda_{a}l_{t,a}=\min_{a\in[K]}\frac{1}{T}\sum_{t=1}^{T}l_{t,a}.\]

    Therefore,
    \[R_{T}^{\max}=\frac{\epsilon}{T}R_{T}^{\mathrm{ah}} \in o(1).\]
\end{proof}

\begin{sublemma}\label{sublem:GE6control2}
    We have with probability 1,
    \[R_T^{\min} \in o(1).\]
\end{sublemma}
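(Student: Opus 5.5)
We must show that
\[
R_T^{\min}=\frac1T\sum_{t=1}^T\bigl(F_t(\theta_t)-\tilde G_t(\lambda_t)\bigr)\to 0 \quad\text{a.s.},
\]
where $\theta_t\sim q_t$ and $q_t(\theta)\propto \mathbf 1\{\theta\in\mathrm{Alt}(\hat\theta_t)\}\exp(-\beta_t F_t(\theta))$. Each summand is nonnegative and bounded by $\epsilon$. The plan is to split $R_T^{\min}$ into a conditional-mean part and a martingale part.

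\textbf{Decomposition.} Let $\mathcal F_{t-1}$ be the history before sampling $\theta_t$, so that $q_t$, $\lambda_t$ and $m_{\cdot,t-1}$ are $\mathcal F_{t-1}$-measurable. Write
\[
F_t(\theta_t)-\tilde G_t(\lambda_t)=\bigl(F_t(\theta_t)-\mathbb E_{q_t}[F_t]\bigr)+\bigl(\mathbb E_{q_t}[F_t]-\tilde G_t(\lambda_t)\bigr).
\]
The first bracket is a martingale difference bounded by $\epsilon$. By Azuma--Hoeffding with confidence $T^{-2}$, its average is $O(\epsilon\sqrt{\log T/T})$, and Borel--Cantelli upgrades this to almost-sure convergence to $0$. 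It therefore suffices to show that the Gibbs gap $D_t:=\mathbb E_{q_t}[F_t]-\inf_{\mathrm{Alt}(\hat\theta_t)}F_t$ tends to $0$ along almost every path; Ces\`aro averaging then finishes the proof.

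\textbf{Gibbs gap via a volume argument.} Fix $\eta>0$ and set $A_\eta=\{\theta\in\mathrm{Alt}(\hat\theta_t):F_t(\theta)\le \tilde G_t+\eta\}$. The standard Laplace-type bound gives
\[
q_t(F_t>\tilde G_t+2\eta)\le \frac{e^{-\beta_t(\tilde G_t+2\eta)}}{\int_{A_\eta}e^{-\beta_tF_t}}\le \frac{e^{-\beta_t\eta}}{|A_\eta|},
\]
where we used that $\mathrm{Alt}$ has volume at most $1$. Hence $D_t\le 2\eta+\epsilon e^{-\beta_t\eta}/|A_\eta|$. Since $\beta_t\to\infty$, the claim follows once $|A_\eta|$ is bounded below by some $c(\eta)>0$, uniformly in $t$ large.

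\textbf{Lower bound on $|A_\eta|$.} This is the main obstacle. The plan uses two regularity facts.

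First, $F_t$ is uniformly Lipschitz in $\theta$. The closed form of $d_\epsilon^{\mathrm{Ber}}(p,q)$ shows it is $\epsilon$-Lipschitz in $p$: directly, $|p-u|$ is $1$-Lipschitz in $p$, and an infimum of $\epsilon$-Lipschitz functions is $\epsilon$-Lipschitz. Consequently $F_t$ is $\epsilon$-Lipschitz in $\|\cdot\|_\infty$, since $\sum_a\lambda_{t,a}=1$.

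Second, the infimum defining $\tilde G_t$ is approached at some $\theta^\circ\in\mathrm{Alt}(\hat\theta_t)$, and $\mathrm{Alt}(\hat\theta_t)$ contains an $\ell_\infty$-box of side comparable to $\eta/\epsilon$ inside the $\eta/\epsilon$-ball around $\theta^\circ$, staying within $\Theta$. To see this, note that $\mathrm{Alt}(\hat\theta_t)=\bigcup_{b\neq\hat a_t}\{\theta_b\ge\theta_{\hat a_t}\}$. Moving from $\theta^\circ$ by raising $\theta_b$ and lowering $\theta_{\hat a_t}$ by $\eta/(2\epsilon)$ enters the interior by a margin. Clipping away from the boundary of $(0,1)^K$ is handled by first replacing $\theta^\circ$ with a point at distance $\eta/(4\epsilon)$ from $\partial\Theta$, which costs at most $\eta/4$ in $F_t$ by the Lipschitz property.

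These two facts give $|A_\eta|\ge (c\,\eta/\epsilon)^K$ for a constant $c$ independent of $t$ and of the path. The bound even holds deterministically, so neither Good Event 4 nor Corollary \ref{col:GE5} is needed here. Then $D_t\le 2\eta+\epsilon (\epsilon/(c\eta))^K e^{-\beta_t\eta}\to 2\eta$, and letting $\eta\downarrow 0$ shows $\frac1T\sum_t D_t\to0$. Combined with the martingale part, this gives $R_T^{\min}\in o(1)$ with probability $1$.
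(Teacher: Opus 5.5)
Your proof is correct and follows the same route as the paper. You split $R_T^{\min}$ into a martingale part, handled by Azuma--Hoeffding and Borel--Cantelli, and a conditional Gibbs gap, which you control with a Lipschitz-based volume lower bound on a near-optimal box inside $\mathrm{Alt}(\hat\theta_t)$ together with $\beta_t\to\infty$. The only difference is minor: the paper converts the volume bound through the Gibbs variational inequality $\mathbb{E}_{q_t}[F_t]\le -\beta_t^{-1}\log Z_t$ with radius $r=1/\beta_t$, which gives the explicit rate $O(\log\beta_t/\beta_t)$, whereas your fixed-$\eta$ tail bound followed by $\eta\downarrow 0$ gives only qualitative convergence, which suffices here.
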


\begin{proof}[Proof of Lemma \ref{sublem:GE6control2}]
    First note that $F_{t}(\cdot)$ is $\epsilon$-Lipschitz in $l_{1}$.
    We let $\overline{\mathrm{Alt}(\hat{\theta}_{t})}=\bigl\{\theta\in[0,1]^{K}: \exists b \neq \hat{a}_{t}, \theta_{b}\geq\theta_{\hat{a}_{t}}\bigr\}$ denote the closure of $\mathrm{Alt}(\hat{\theta}_{t})$. Because $\overline{\mathrm{Alt}(\hat{\theta}_{t})}$ is compact and $F_{t}$ is continuous, there exists a minimizer $\theta^{*}_{t}$ such that
    \[F_{t}(\theta^{*}_{t})=\tilde{G}_{t}(\lambda_{t})=\inf_{\theta\in\mathrm{Alt}(\hat{\theta}_{t})}F_{t}(\theta).\]

    Let $h=\hat{a}_{t}$, since $\theta^{*}_{t}\in\overline{\mathrm{Alt}(\hat{\theta}_{t})}$, there exists $b\in[K]$ such that $b\neq h$ and $\theta^{*}_{t,b} \geq \theta^{*}_{t,h}$.

    Fix $r\in\bigl(0,\frac{1}{2}\bigr]$. For each coordinate $j$, define
    $J_{j}=\Bigl[\max\bigl\{0,\theta^{*}_{t,j}-r\bigr\}, \min\bigl\{1, \theta^{*}_{t,j}+r\bigr\}\Bigr]$
    then each $J_{j}$ has length at least $r$.

    For $j\not\in\{b,h\}$, choose any subinterval $I_{j}\subset J_{j}$ of length $\frac{r}{2}$.

    For $j\in\{b,h\}$, we first write
    \[J_{b}=[p_{b},q_{b}], J_{h}=[p_{h},q_{h}]\]

    Since $\theta^{*}_{t,b} \geq \theta^{*}_{t,h}$, we have $p_{b} \geq p_{h}$ and $q_{b} \geq q_{h}$. We choose $I_{h}$ to be the lower quarter of $J_{h}$, so that $|I_{h}| \geq \frac{r}{4}$. We choose $I_{b}$ to be the upper quarter of $J_{b}$, so that $|I_{b}| \geq \frac{r}{4}$.

    Then, we have
    \[\max I_{h}=\frac{3p_{h}+q_{h}}{4}, \, \min I_{b}=\frac{p_{b}+3q_{b}}{4},\]
    which gives
    \[
    \min I_{b} -\max I_{h}
    = \frac{p_{b}+3q_{b}-(3p_{h}+q_{h})}{4}
    =\frac{(p_{b}-p_{h})+3(q_{b}-q_{h})+2(q_{h}-p_{h})}{4}
    \geq 0.
    \]

    Therefore every $\theta$ in the rectangle $B_{t}(r)=\prod_{j=1}^{K}I_{j}$ satisfies $\theta_{b} \geq \theta_{h}$, which implies
    \[B_{t}(r) \subset \overline{\mathrm{Alt}(\hat{\theta}_{t})}.\]

    Also, $B_{t}(r) \subset \Bigl\{\theta\in[0,1]^{K}: \bigl\|\theta-\theta^{*}_{t}\bigr\|_{\infty} \leq r\Bigr\}$, and its Lebesgue volume is lower bounded by
    \[
    \bigl|B_{t}(r)\bigr|
    \geq \Bigl(\frac{r}{4}\Bigr)^{2}\Bigl(\frac{r}{2}\Bigr)^{K-2}
    =\frac{r^{K}}{2^{K+2}}.
    \]

    Now define $\mu_{t}$ as the uniform probability measure on $\overline{\mathrm{Alt}(\hat{\theta}_{t})}$. Since $\overline{\mathrm{Alt}(\hat{\theta}_{t})} \subset [0,1]^{K}$, its volume is at most $1$, hence
    \[\mu_{t}\bigl(B_{t}(r)\bigr) \geq \bigl|B_{t}(r)\bigr|\geq c_{K}r^{K}, c_{K}=2^{-(K+2)}.\]

    For $\theta\in B_{t}(r)$, we have $\bigl\|\theta-\theta^{*}_{t}\bigr\|_{1} \leq Kr$, so by $\epsilon$-Lipschitz,
    \[F_{t}(\theta) \leq F_{t}(\theta^{*}_{t}) + \epsilon Kr = \tilde{G}_{t}(\lambda_{t})+\epsilon Kr.\]

    Thus the set $A_{t}(r)=\bigl\{\theta\in\overline{\mathrm{Alt}(\hat{\theta}_{t})}:F_{t}(\theta) \leq \tilde{G}_{t}(\lambda_{t})+\epsilon Kr\bigr\}$ satisfies
    \[\mu_{t}\bigl(A_{t}(r)\bigr) \geq \mu_{t}\bigl(B_{t}(r)\bigr) \geq c_{K}r^{K}.\]

    Let $Z_{t}$ be the normalization constant of $q_{t}$, therefore for $\theta_{t}$,
    \[
    q_{t}(\theta)=\frac{\exp\bigl(-\beta_{t}F_{t}(\theta)\bigr)}{Z_{t}}, \,
    Z_{t}=\int_{\overline{\mathrm{Alt}(\hat{\theta}_{t})}}
    \exp\bigl(-\beta_{t}F_{t}(\theta)\bigr)d\mu_{t}(\theta).
    \]

    Because $A_{t}(r) \subseteq \overline{\mathrm{Alt}(\hat{\theta}_{t})}$ and on $A_{t}(r)$,
    \[F_{t}(\theta) \leq \tilde{G}_{t}(\lambda_{t})+\epsilon Kr,\]
    we have
\begin{align*}
    Z_{t}&\geq\int_{A_{t}(r)}\exp\bigl(-\beta_{t}F_{t}(\theta)\bigr)d\mu_{t}(\theta)\\
    &\geq \int_{A_{t}(r)}
    \exp\Bigl(-\beta_{t}\bigl(\tilde{G}_{t}(\lambda_{t})+\epsilon Kr\bigr)\Bigr)d\mu_{t}(\theta)\\
    &=\mu_{t}\bigl(A_{t}(r)\bigr)
    \exp\Bigl(-\beta_{t}\bigl(\tilde{G}_{t}(\lambda_{t})+\epsilon Kr\bigr)\Bigr)\\
    &\geq c_{K}r^{K}
    \exp\Bigl(-\beta_{t}\bigl(\tilde{G}_{t}(\lambda_{t})+\epsilon Kr\bigr)\Bigr).
\end{align*}

Note that
\[\log q_{t}(\theta)=-\beta_{t}F_{t}(\theta)-\log Z_{t}.\]

Take expectation over $q_{t}$:
\[
\int q_{t}(\theta)\log q_{t}(\theta) d\mu_{t}(\theta)
=-\beta_{t}\mathbb{E}\bigl[F_{t}(\theta_{t})\bigm|\mathcal{F}_{t-1}\bigr]-\log Z_{t}.
\]

LHS is $KL\bigl(q_{t}\bigm\|\mu_{t}\bigr) \geq 0$, therefore
\[
\mathbb{E}\bigl[F_{t}(\theta_{t})\bigm|\mathcal{F}_{t-1}\bigr]
\leq -\frac{1}{\beta_{t}}\log Z_{t}
\leq \tilde{G}_{t}(\lambda_{t})+\epsilon Kr
+\frac{K\log \frac{1}{r}-\log c_{K}}{\beta_{t}}.
\]

Thus
\[
\mathbb{E}\bigl[F_{t}(\theta_{t})\bigm|\mathcal{F}_{t-1}\bigr]-\tilde{G}_{t}(\lambda_{t})
\leq\epsilon Kr+\frac{K\log \frac{1}{r}-\log c_{K}}{\beta_{t}}.
\]

By our specification that $\beta_{t} \rightarrow\infty$, we fix $\tilde{t}$ such that $\beta_{t} > 2$ for $t\geq \tilde{t}$. For $t\geq \tilde{t}$, choose $r=\frac{1}{\beta_{t}}$, then
\[
\mathbb{E}\bigl[F_{t}(\theta_{t})\bigm|\mathcal{F}_{t-1}\bigr]-\tilde{G}_{t}(\lambda_{t})
\leq\frac{\epsilon K}{\beta_{t}}+\frac{K\log \beta_{t}-\log c_{K}}{\beta_{t}}.
\]

Since $c_{K}=2^{-(K+2)}$, we have
\[-\log c_{K}=(K+2)\log 2,\]

thus,
\[
\mathbb{E}\bigl[F_{t}(\theta_{t})\bigm|\mathcal{F}_{t-1}\bigr]-\tilde{G}_{t}(\lambda_{t})
\leq\frac{\epsilon K+K\log \beta_{t}+(K+2)\log 2}{\beta_{t}},
\]

and hence,
\[
\mathbb{E}\bigl[F_{t}(\theta_{t})\bigm|\mathcal{F}_{t-1}\bigr]-\tilde{G}_{t}(\lambda_{t})
\in O\Bigl(\frac{\log \beta_{t}}{\beta_{t}}\Bigr).
\]

Now, define the martingale difference
\[Y_{t}=F_{t}(\theta_{t})-\mathbb{E}\bigl[F_{t}(\theta_{t})\bigm|\mathcal{F}_{t-1}\bigr],\]

since $F_{t}(\theta_{t}) \in [0,\epsilon]$, we have $|Y_{t}| \leq \epsilon$.

Note that
\[R_{T}^{\min}=\frac{1}{T}\sum_{t=1}^{T}\bigl(F_{t}(\theta_{t})-\tilde{G}_{t}(\lambda_{t})\bigr).\]

When $t<\tilde{t}$, we apply the trivial bound
\[F_{t}(\theta_{t})-\tilde{G}_{t}(\lambda_{t}) \in [0,\epsilon],\]

thus
\begin{align*}
    R_{T}^{\min}&=\frac{1}{T}\sum_{t=1}^{T}\bigl(F_{t}(\theta_{t})-\tilde{G}_{t}(\lambda_{t})\bigr)\\
    &\leq\frac{\tilde{t}\epsilon}{T}
    +\frac{1}{T}\sum_{t=\tilde{t}}^{T}\bigl(F_{t}(\theta_{t})-\tilde{G}_{t}(\lambda_{t})\bigr)\\
    &=\frac{\tilde{t}\epsilon}{T}+\frac{1}{T}\sum_{t=\tilde{t}}^{T}Y_{t}
    +\frac{1}{T}\sum_{t=\tilde{t}}^{T}
    \Bigl(\mathbb{E}\bigl[F_{t}(\theta_{t})\bigm|\mathcal{F}_{t-1}\bigr]
    -\tilde{G}_{t}(\lambda_{t})\Bigr).
\end{align*}

By Azuma-Hoeffding, for $b>0$,
\[
\mathbb{P}\Biggl(\Bigl|\sum_{t=\tilde{t}}^{T}Y_{t}\Bigr| >b T\Biggr)
\leq 2\exp\Bigl(-\frac{b^{2}T^{2}}{2\epsilon^{2}(T-\tilde{t}+1)}\Bigr)
\leq 2\exp\Bigl(-\frac{b^{2}T}{2\epsilon^{2}}\Bigr), \, \forall \,T\geq\tilde{t}.
\]

Therefore,
\[
\sum_{T=\tilde{t}}^{\infty}
\mathbb{P}\Biggl(\biggl|\frac{\sum_{t=\tilde{t}}^{T}Y_{t}}{T}\biggr| >b\Biggr)
<\infty.
\]

By Borel-Cantelli,
\[
\mathbb{P}\Biggl(
\biggl|\frac{\sum_{t=\tilde{t}}^{T}Y_{t}}{T}\biggr|
>b \text{ for infinitely many }T
\Biggr)=0.
\]

We consider the countable sequence $b_{n}=\frac{1}{n}, n\in\mathbb{N}$, therefore we have for $n\in\mathbb{N}$,
\[
\mathbb{P}\Biggl(
\biggl|\frac{\sum_{t=\tilde{t}}^{T}Y_{t}}{T}\biggr|
>\frac{1}{n} \text{ for infinitely many }T
\Biggr)=0.
\]

Taking complement of this event and taking intersection over $n$, we have
\[
\mathbb{P}\Biggl(
\bigcap_{n\in\mathbb{N}}
\biggl\{\Bigl|\frac{\sum_{t=\tilde{t}}^{T}Y_{t}}{T}\Bigr|
<\frac{1}{n} \text{ for sufficiently large }T\biggr\}
\Biggr)=1,
\]

which implies
\[\frac{\sum_{t=\tilde{t}}^{T}Y_{t}}{T} \rightarrow 0 \text{ a.s.}.\]

Furthermore, $\beta_{t}\rightarrow \infty$ is a sufficient condition for $\frac{\log\beta_{t}}{\beta_{t}} \rightarrow 0$. If we let
\[
\frac{1}{T}\sum_{t=\tilde{t}}^{T}
\Bigl(\mathbb{E}\bigl[F_{t}(\theta_{t})\bigm|\mathcal{F}_{t-1}\bigr]
-\tilde{G}_{t}(\lambda_{t})\Bigr)
\leq\frac{1}{T}\sum_{t=\tilde{t}}^{T} s_{t},
\]
where
\[
s_{t}=\frac{\epsilon K+K\log\beta_{t}+(K+2)\log2}{\beta_{t}}
\in O\Bigl(\frac{\log \beta_{t}}{\beta_{t}}\Bigr)\in o(1),
\]

then, by Ces\`aro Mean Theorem,
\[
\frac{1}{T}\sum_{t=\tilde{t}}^{T}
\Bigl(\mathbb{E}\bigl[F_{t}(\theta_{t})\bigm|\mathcal{F}_{t-1}\bigr]
-\tilde{G}_{t}(\lambda_{t})\Bigr)
\in o(1).
\]

Combining above, we have
\[R_{T}^{\min} \in o(1) \text{ a.s.}.\]
\end{proof}

\begin{sublemma}\label{sublem:GE6control3}
    We have
    \[\frac{1}{T}\sum_{t=T_{6}(T,\delta_{T})}^{T}\Delta_{t} \in o(1).\]
\end{sublemma}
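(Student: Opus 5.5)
The plan is to work pathwise on the event $E_{1,T}(\delta_T)\cap E_{3,T}(\delta_T)$ with $\delta_T=T^{-3}$. By Lemmas~\ref{lem:GE1} and \ref{lem:GE3}, the complement of this event has probability at most $2T^{-3}$, which is summable. So, as in the proof of Theorem~\ref{thm:UpperBound}, Borel--Cantelli shows that almost surely this event holds for all sufficiently large $T$. It therefore suffices to show that, on this event, the Ces\`aro average $\frac1T\sum_{t=T_6(T,\delta_T)}^T\Delta_t$ is bounded by a deterministic sequence tending to $0$.

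First, convert $\Delta_t$ into a bound depending only on $t$ and $T$. For $t\ge T_6(T,\delta_T)$ we have $t-1\ge \max\{T_4,T_5\}\ge T_3(T,\delta_T)$. Hence Lemma~\ref{lem:GE3} gives, for every arm $a$,
\[
N_{a,t-1}\ge x_{t}:=\frac{(t-1)^{1-\alpha}-1}{2K(1-\alpha)} .
\]
Since $r_T(\cdot,\delta)$ is decreasing in its first argument, Lemma~\ref{lem:GE1} then yields
\[
\Delta_t=\max_a|m_{a,t-1}-\mu_a|\le r_T(x_t,\delta_T).
\]
Substituting $\delta_T=T^{-3}$, $m_T=O(\log T)$ and $L_T=O(T)$, this becomes
\[
\Delta_t\;\le\; c_1\sqrt{\frac{\log T}{x_t+1}}+c_2\,\frac{\log^3 T}{\epsilon\,(x_t+1)}+\frac{1}{x_t+2},
\]
with constants $c_1,c_2$ depending only on $K$. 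The key point is that the logarithmic factors depend on the horizon $T$ and not on $t$, so they can be pulled out of the sum.

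Second, sum the three terms over $t\le T$. The step $t^{1-\alpha}\gtrsim$ is valid only for $t\ge T_3$, which is exactly our range; also $T_3\ge 4$ makes $x_t\gtrsim t^{1-\alpha}/(4K(1-\alpha))$ for all but a bounded initial segment, where we simply use $\Delta_t\le1$. Using $\sum_{t\le T}t^{-(1-\alpha)/2}=O(T^{(1+\alpha)/2})$ and $\sum_{t\le T}t^{-(1-\alpha)}=O(T^{\alpha})$, we obtain
\[
\frac1T\sum_{t=T_6}^T\Delta_t=O\!\Big(T^{-(1-\alpha)/2}\sqrt{\log T}+T^{-(1-\alpha)}\log^3T+\tfrac{T_6(T,\delta_T)}{T}\Big).
\]
Every term tends to $0$ because $\alpha<\tfrac12<1$ and $T_6(T,\delta_T)=\mathrm{polylog}(T)$.

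The main obstacle is making sure that the bound from $E_{1,T}$, which is uniform over $t\in[T]$ but carries $\log T$ factors, together with the forced-exploration lower bound from $E_{3,T}$, can be applied at every $t-1$ in the range. This is handled by the inequality $T_6>T_3$ and by absorbing all $T$-dependence into polylogarithmic factors. These factors are dominated by the polynomial decay $T^{-(1-\alpha)/2}$.
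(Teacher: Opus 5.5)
Your proposal is correct and follows essentially the same route as the paper: on $E_{1,T}(\delta_T)\cap E_{3,T}(\delta_T)$ you bound $\Delta_t$ by $r_T$ evaluated at the forced-exploration lower bound on the counts, pull the horizon-dependent polylogarithmic factors out of the sum, and then sum the polynomial tails $t^{-(1-\alpha)/2}$ and $t^{-(1-\alpha)}$. Your explicit handling of the index shift (using $t-1\ge T_3(T,\delta_T)$ to control $N_{a,t-1}$) and of the almost-sure qualifier via Borel--Cantelli makes precise what the paper's proof leaves implicit.
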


\begin{proof}[Proof of Lemma \ref{sublem:GE6control3}]
    On $E_{1,T}(\delta_{T})$, we have
    \[\bigl|m_{a,t}-\mu_{a}\bigr| \leq \rho_{T}(N_{a,t}, \delta_{T})+\frac{1}{N_{a,t}+2}.\]

    On $E_{3,T}(\delta_{T})$, we have
    \[N_{a,t} \in \Omega\bigl(t^{1-\alpha}\bigr).\]

    Since $\delta_{T}=\frac{1}{T^{3}}$, we have
    \[\log\frac{4KT}{\delta_{T}} \in O(\log T),\]

    and since $m_{T}\in O(\log T)$ and $L_{T} \in O(T)$, we have
    \[
    \frac{m_{T}^{2}}{\epsilon}\log\Bigl(\frac{2K(2L_{T}-1)}{\delta_{T}}\Bigr)+1
    \in O\bigl((\log T)^{3}\bigr).
    \]

    Therefore, for all $a\in[K]$,
    \[
    \bigl|m_{a,t}-\mu_{a}\bigr|
    \lesssim \sqrt{\frac{\log T}{t^{1-\alpha}}}
    +\frac{(\log T)^{3}}{t^{1-\alpha}} + \frac{1}{t^{1-\alpha}}.
    \]

    Note that
    \[
    \frac{1}{T}\sum_{t=1}^{T}\sqrt{\frac{\log T}{t^{1-\alpha}}}
    = \sqrt{\log T}\frac{1}{T}\sum_{t=1}^{T}t^{-\frac{1-\alpha}{2}}
    \in O\Bigl(T^{-\frac{1-\alpha}{2}}\sqrt{\log T}\Bigr)
    \in o(1),
    \]

    \[
    (\log T)^{3}\frac{1}{T}\sum_{t=1}^{T}\frac{1}{t^{1-\alpha}}
    \in O\bigl((\log T)^{3}T^{-(1-\alpha)}\bigr)
    \in o(1),
    \]

    \[
    \frac{1}{T}\sum_{t=1}^{T}\frac{1}{t^{1-\alpha}}
    \in O\bigl(T^{-(1-\alpha)}\bigr)
    \in o(1).
    \]

    Combination of above proves the desired conclusion.
\end{proof}

\begin{sublemma}\label{sublem:GE6control4}
    We have
    \[\frac{1}{T}\sum_{t=1}^{T}\gamma_{t} \in o(1).\]
\end{sublemma}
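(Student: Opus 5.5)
Since $\gamma_t=t^{-\alpha}$ with $\alpha\in(0,\tfrac12)$ is a deterministic sequence, the plan is to bound the Ces\`aro average $\frac1T\sum_{t=1}^T t^{-\alpha}$ directly by an integral comparison; no good event or probabilistic argument is needed. The map $x\mapsto x^{-\alpha}$ is decreasing on $(0,\infty)$, so for every $t\ge 2$ we have $t^{-\alpha}\le\int_{t-1}^{t}x^{-\alpha}\,dx$. Keeping the $t=1$ term separately and summing over $t=2,\dots,T$ gives
\[
\sum_{t=1}^{T}\gamma_t\le 1+\int_{1}^{T}x^{-\alpha}\,dx = 1+\frac{T^{1-\alpha}-1}{1-\alpha}\le \frac{T^{1-\alpha}}{1-\alpha}+1 .
\]

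Dividing by $T$ then yields
\[
\frac1T\sum_{t=1}^{T}\gamma_t\le \frac{T^{-\alpha}}{1-\alpha}+\frac1T\in O\bigl(T^{-\alpha}\bigr),
\]
and this tends to $0$ because $\alpha>0$, which proves the claim. The bound is the upper-bound counterpart of the integral lower bound on $\sum_s s^{-\alpha}$ already used in the proof of Lemma~\ref{lem:GE3}.

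There is no genuine obstacle here. The only point worth noting is that this argument uses only $\alpha>0$, not the restriction $\alpha<\tfrac12$. Equivalently, one could invoke the Ces\`aro mean theorem directly, since $\gamma_t\to0$, exactly as in the proof of Lemma~\ref{sublem:GE6control2}. The explicit $O(T^{-\alpha})$ rate is nevertheless more informative, because it shows the term $\frac{2\epsilon}{T}\sum_{t=1}^T\gamma_t$ in $\varepsilon_{6,T}(\delta)$ vanishes polynomially.
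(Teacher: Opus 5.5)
Your proposal is correct and follows essentially the same route as the paper: the paper bounds $\sum_{t=1}^{T} t^{-\alpha}\le 1+\int_1^T t^{-\alpha}\,dt$ and divides by $T$ to get $\frac{1}{T}+\frac{T^{1-\alpha}-1}{T(1-\alpha)}\in o(1)$. Your justification of the integral comparison via monotonicity of $x\mapsto x^{-\alpha}$, and your remark that only $\alpha>0$ is needed, are both accurate.
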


\begin{proof}[Proof of Lemma \ref{sublem:GE6control4}]
\[
\frac{1}{T}\sum_{t=1}^{T}\gamma_{t}
=\frac{1}{T}\sum_{t=1}^{T}t^{-\alpha}
\leq \frac{1}{T}\Bigl(1+\int_{1}^{T}t^{-\alpha}dt\Bigr)
=\frac{1}{T}+\frac{T^{1-\alpha}-1}{T(1-\alpha)}
=\frac{1}{T}+\frac{1}{T^{\alpha}(1-\alpha)}-\frac{1}{T(1-\alpha)}
\in o(1).
\]
\end{proof}

\begin{sublemma}\label{sublem:GE6control5}
    We have
    \[\frac{T_{6}(T,\delta_{T})}{T} \in o(1).\]
\end{sublemma}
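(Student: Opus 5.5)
The plan is to unfold the definition of $T_{6}(T,\delta_{T})$ and show that each constituent grows only polylogarithmically in $T$, which is $o(T)$. By Lemma~\ref{lem:GE6}, $T_{6}(T,\delta)=\max\{T_{4}(T,\delta),T_{5}(T,\delta)\}+1$. Lemma~\ref{lem:GE4} and Corollary~\ref{col:GE5} give $T_{4}$ and $T_{5}$ as maxima of $T_{3}(T,\delta)$ and a term of the form $\lceil(1+2K(1-\alpha)N_{i}(T,\delta))^{1/(1-\alpha)}\rceil$ with $i\in\{4,5\}$. It therefore suffices to bound $T_{3}(T,\delta_{T})$, $N_{4}(T,\delta_{T})$ and $N_{5}(T,\delta_{T})$ by powers of $\log T$.

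\emph{Step 1 ($T_{3}$).} With $\delta_{T}=T^{-3}$ we have $\log\frac{2KT}{\delta_{T}}=\log(2K)+4\log T=O(\log T)$. Hence
\[
T_{3}(T,\delta_{T})\le 4+1+\bigl(8K^{2}(1-\alpha)^{2}\,O(\log T)\bigr)^{\frac{1}{1-2\alpha}}=O\bigl((\log T)^{\frac{1}{1-2\alpha}}\bigr),
\]
and the exponent $\frac{1}{1-2\alpha}$ is finite because $\alpha<\frac12$.

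\emph{Step 2 ($N_{4}$ and $N_{5}$).} Use $\log\frac{4KT}{\delta_{T}}=O(\log T)$, $m_{T}=O(\log T)$, $L_{T}\le 2T$, and $\log\frac{2K(2L_{T}-1)}{\delta_{T}}=O(\log T)$. Each entry of the maxima defining $N_{4}$ and $N_{5}$ is then $O((\log T)^{3})$. The constants hidden here depend only on $K,\epsilon,\Delta_{\min},\xi_{\mu}$, which are fixed instance quantities, and $\Delta_{\min}>0$ by uniqueness of the best arm. It follows that
\[
\bigl(1+2K(1-\alpha)N_{i}(T,\delta_{T})\bigr)^{\frac{1}{1-\alpha}}=O\bigl((\log T)^{\frac{3}{1-\alpha}}\bigr),\qquad i\in\{4,5\}.
\]

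\emph{Step 3 (conclusion).} Combining Steps 1 and 2 gives $T_{6}(T,\delta_{T})=O\bigl((\log T)^{c}\bigr)$ with $c=\max\{\frac{1}{1-2\alpha},\frac{3}{1-\alpha}\}$. Since $(\log T)^{c}/T\to0$, the claim $T_{6}(T,\delta_{T})/T=o(1)$ follows.

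There is no real obstacle; this is bookkeeping. The only points needing care are that $\alpha<\frac12$ keeps the exponent in $T_{3}$ finite, and that ceilings and the additive $+1$ contribute only $O(1)$.
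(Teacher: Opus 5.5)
Your proposal is correct and follows essentially the same route as the paper. Both unfold $T_{6}=\max\{T_{4},T_{5}\}+1$ and use $\delta_{T}=T^{-3}$, $m_{T}=O(\log T)$ and $L_{T}=O(T)$ to get $T_{3}(T,\delta_{T})=O\bigl((\log T)^{\frac{1}{1-2\alpha}}\bigr)$ and $N_{4},N_{5}=O\bigl((\log T)^{3}\bigr)$, then conclude polylogarithmic growth. The only difference is that you track the explicit exponent $c=\max\{\frac{1}{1-2\alpha},\frac{3}{1-\alpha}\}$, which the paper leaves implicit as ``polylog$(T)$''.
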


\begin{proof}[Proof of Lemma \ref{sublem:GE6control5}]
    By definition of $T_{6}(T,\delta)$, it is sufficient to prove both $T_{4}(T,\delta_{T})$ and $T_{5}(T,\delta_{T})$ are $o(T)$.

    Recall definition of $T_{4}(T,\delta)$:
    \[
    T_{4}(T,\delta)=
    \max\Biggl\{
        T_{3}(T,\delta),
        \Biggl\lceil
        \bigl(1+2K(1-\alpha)N_{4}(T,\delta)\bigr)^{\frac{1}{1-\alpha}}
        \Biggr\rceil
    \Biggr\}.
    \]

    Recall definition of $T_{3}(T,\delta)$:
    \[
    T_{3}(T,\delta)=
    \max\Biggl\{
        4,
        \Biggl\lceil
        \Bigl(8K^{2}(1-\alpha)^{2}\log\frac{2KT}{\delta}\Bigr)^{\frac{1}{1-2\alpha}}
        \Biggr\rceil
    \Biggr\}.
    \]
    $T_{3}(T,\delta_{T})$ is the maximum over a term in $O(1)$ and another term in $O\bigl((\log T)^{\frac{1}{1-2\alpha}}\bigr)$, which implies that
    \[T_{3}(T,\delta_{T}) \in O\Bigl((\log T)^{\frac{1}{1-2\alpha}}\Bigr).\]

    Recall definition of $N_{4}(T, \delta)$:
    \[
    N_{4}(T,\delta)=
    \max\Biggl\{
        \Bigl\lceil \frac{72}{\Delta_{\min}^{2}}\log\frac{4KT}{\delta} \Bigr\rceil,
        \Biggl\lceil
        \frac{12\Bigl(\frac{m_{T}^{2}}{\epsilon}\log\bigl(\frac{2K(2L_{T}-1)}{\delta}\bigr)+1 \Bigr)}{\Delta_{\min}}
        \Biggr\rceil,
        \Bigl\lceil \frac{12}{\Delta_{\min}}\Bigr\rceil
    \Biggr\}.
    \]

    $N_{4}(T,\delta_{T})$ is the maximum over a term in $O(\log T)$, a term in $O\bigl((\log T)^{3}\bigr)$, and another term in $O(1)$. Therefore,
    \[T_{4}(T,\delta_{T})\in \text{{polylog}}(T) \in o(T).\]

    On the other hand, recall the definition of $T_{5}(T,\delta)$:
    \[
    T_{5}(T,\delta)=
    \max\Biggl\{
        T_{3}(T,\delta),
        \Biggl\lceil
        \bigl(1+2K(1-\alpha)N_{5}(T,\delta)\bigr)^{\frac{1}{1-\alpha}}
        \Biggr\rceil
    \Biggr\},
    \]
    where
    \[
    N_{5}(T, \delta) =
    \max\Biggl\{
        \Bigl\lceil \frac{9}{2\xi_{\mu}^{2}}\log\frac{4KT}{\delta} \Bigr\rceil,
        \Biggl\lceil
        \frac{3  \biggl(\frac{m_{T}^{2}}{\epsilon}\log\Bigl(\frac{2K(2L_{T}-1)}{\delta}\Bigr)+1\biggr) }{\xi_{\mu}}
        \Biggr\rceil,
        \Bigl\lceil \frac{3}{\xi_{\mu}}\Bigr\rceil
    \Biggr\}.
    \]

    By an analogous argument, we have
    \[T_{5}(T,\delta) \in \text{polylog}(T) \in o(T).\]
\end{proof}

\begin{sublemma}\label{sublem:GE6control6}
    We have
    \[\epsilon K\sqrt{\frac{1}{2T}\log\frac{2KT}{\delta_{T}}} \in o(1).\]
\end{sublemma}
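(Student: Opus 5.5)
The plan is to substitute $\delta_T=T^{-3}$ into the logarithm and show that the argument of the square root tends to zero. This is the same routine order-of-magnitude bookkeeping as in Lemma~\ref{sublem:GE6control3} and Lemma~\ref{sublem:GE6control5}.

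First, with $\delta_T=T^{-3}$ we have
\[
\log\frac{2KT}{\delta_T}=\log(2KT^{4})=\log(2K)+4\log T,
\]
which is $O(\log T)$ because $K$ is fixed. Hence
\[
\epsilon K\sqrt{\frac{1}{2T}\log\frac{2KT}{\delta_T}}=\epsilon K\sqrt{\frac{\log(2K)+4\log T}{2T}}.
\]
Since $\epsilon$ and $K$ are constants, it remains to show that $(\log(2K)+4\log T)/(2T)\to 0$. This follows from $\log T/T\to 0$, for instance by L'H\^opital's rule or the elementary bound $\log T\le \sqrt{T}$ for $T\ge 1$. The latter gives the explicit rate $O(\sqrt{\log T/T})$.

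Finally, the square root is continuous at $0$, so the whole expression is $o(1)$. This bound is deterministic, so no probabilistic argument is needed, and there is no real obstacle. The only point to state carefully is that the $\delta$ inside the logarithm is the summable schedule $\delta_T$, rather than a fixed $\delta$, because that is what makes the logarithmic term grow.
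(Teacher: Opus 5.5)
Your proposal is correct and takes the same route as the paper, whose proof simply substitutes $\delta_T=T^{-3}$ and notes that the expression is then of order $\sqrt{\log T/T}\to 0$. Your write-up just spells out the bookkeeping that the paper leaves implicit.
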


\begin{proof}[Proof of Lemma \ref{sublem:GE6control6}]
    This is straightforward after substitution of $\delta_{T}=\frac{1}{T^{3}}$.
\end{proof}

%% file: Appendix/discussion_sampling.tex
\section{Discussion of sampling oracle}

There are two stages of the algorithm that require sampling from a probability distribution. The min-learner samples from
\[
q_t(\theta)
\propto
\mathbf{1}_{\theta \in \mathrm{Alt}(\hat{\theta}_t)}
\exp\!\left(
-\beta_t \sum_{a=1}^{K} \lambda_{t,a}
d_{\epsilon}^{\mathrm{Ber}}(\theta_a,m_{a,t-1})
\right),
\]
while the final recommendation samples from
\[
q_{\mathrm{Final}}(\theta)
\propto
\exp\!\left(
-\sum_{a=1}^{K} N_{a,T}
d_{\epsilon}^{\mathrm{Ber}}(\theta_a,m_{a,T})
\right).
\]

To study the computational tractability of these sampling procedures, consider a general probability distribution $p$ supported on $(0,1)^K$ with density
\[
p(\theta)
\propto
\exp\!\left(
-\sum_{a=1}^{K} C_a
d_{\epsilon}^{\mathrm{Ber}}(\theta_a,s_a)
\right),
\]
where each $C_a>0$ and $s_a\in(0,1)$ is fixed. Both $q_t$ and $q_{\mathrm{Final}}$ are of this form, up to the additional truncation to $\mathrm{Alt}(\hat{\theta}_t)$ in the first stage. Hence, it suffices to analyze the efficiency of sampling from $p$.

Observe that
\begin{align*}
p(\theta)
&\propto
\exp\!\left(
-\sum_{a=1}^{K} C_a
d_{\epsilon}^{\mathrm{Ber}}(\theta_a,s_a)
\right) =
\prod_{a=1}^{K}
\exp\!\left(
-C_a d_{\epsilon}^{\mathrm{Ber}}(\theta_a,s_a)
\right),
\end{align*}
which shows that $p$ factorizes into $K$ one-dimensional distributions.

Define
\[
p_a(\theta_a)
=
\frac{
\exp\!\left(
-C_a d_{\epsilon}^{\mathrm{Ber}}(\theta_a,s_a)
\right)
}{Z_a},
\]
where $Z_a$ denotes the normalizing constant. Then
\[
p(\theta)
=
\prod_{a=1}^{K} p_a(\theta_a).
\]
Therefore, it is sufficient to analyze a single one-dimensional density $p_a$.

Using the closed-form expression of $d_{\epsilon}^{\mathrm{Ber}}$ derived in Appendix~\ref{apx:proof_lipschitz}, we may write
\[
p_a(\theta_a)
=
\frac{1}{Z_a} w_a(\theta_a),
\]
where
\[
w_a(\theta_a)
=
\begin{cases}
(1-s_a+s_a e^{-\epsilon})^{C_a}
e^{C_a\epsilon \theta_a},
& \theta_a \in (0,g_{\epsilon}^{+}(s_a)),
\\[0.5em]
e^{-C_a kl(\theta_a,s_a)},
& \theta_a \in [g_{\epsilon}^{+}(s_a),g_{\epsilon}^{-}(s_a)],
\\[0.5em]
(1-s_a+s_a e^{\epsilon})^{C_a}
e^{-C_a\epsilon \theta_a},
& \theta_a \in (g_{\epsilon}^{-}(s_a),1).
\end{cases}
\]

The normalizing constant is given by
\begin{align*}
Z_a
&=
\int_{0}^{g_{\epsilon}^{+}(s_a)}
(1-s_a+s_a e^{-\epsilon})^{C_a}
e^{C_a\epsilon \theta_a}
\, d\theta_a
\\
&\quad+
\int_{g_{\epsilon}^{+}(s_a)}^{g_{\epsilon}^{-}(s_a)}
e^{-C_a \kl(\theta_a,s_a)}
\, d\theta_a
\\
&\quad+
\int_{g_{\epsilon}^{-}(s_a)}^{1}
(1-s_a+s_a e^{\epsilon})^{C_a}
e^{-C_a\epsilon \theta_a}
\, d\theta_a.
\end{align*}

In general, the second integral cannot be expressed in elementary closed form, and hence neither $Z_a$ nor the density $p_a$ admits a closed-form expression. Nevertheless, the key observation is that $\log w_a$ is linear in $\theta_a$ on the first and third intervals, while on the middle interval,
\[
\log w_a(\theta_a)
=
-C_a \kl(\theta_a,s_a),
\]
which is concave in $\theta_a$. Consequently, $\log w_a$ is concave over the entire domain, implying that both $w_a$ and $p_a$ are log-concave.

This log-concavity property enables the use of efficient exact sampling methods such as Adaptive Rejection Sampling (ARS) \citep{arspy_docs,gilks1992ars,unuran_manual} and Transformed Density Rejection (TDR) \citep{unuran_manual}. These methods sample exactly from log-concave distributions without requiring knowledge of the normalizing constant, are readily available in standard Python packages, and have demonstrated strong empirical efficiency in practice.

%% file: Appendix/additional_experiments.tex
\section{Additional Experiments and Implementation Details}\label{apx:more_experiments}
We implement all algorithms in Python version $3.10.11$ on a $12$-core $64$-bit Apple M3 Pro CPU. The code for implementation and experiment execution is included as supplementary material.

We include three more instances in this section: The first instance has $20$ arms. The best arm has a mean reward of $0.85$. There are $2$ arms with a mean reward of $0.825$, and another $17$ arms with a mean reward of $0.25$. The second instance has $100$ arms. The best arm has a mean reward of $0.90$. There are $4$ arms with a mean reward of $0.875$. There are $49$ arms with a mean reward of $0.30$, and another $46$ arms with a mean reward of $0.20$. The experimental results of these two additional instances are reported in Figure \ref{fig:20arms} and \ref{fig:100arms} respectively. They have confirmed our experimental finding from Section \ref{sec:experiments}.

\begin{figure}[H]
    \centering
    \includegraphics[width=0.75\linewidth]{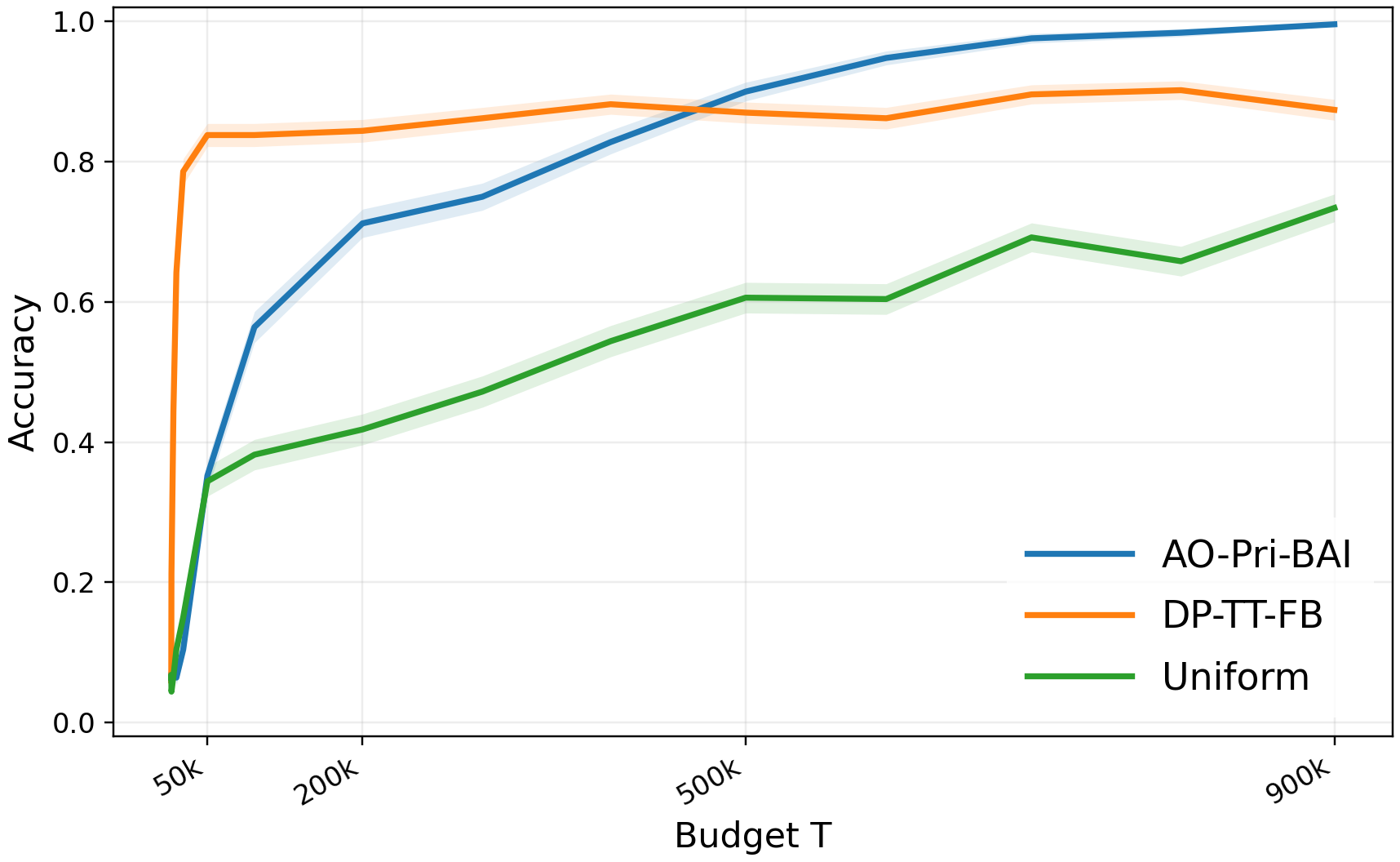}
    \caption{Comparison of \textsc{AO-Pri-BAI} with \textsc{Uniform} and \textsc{DP-TT-FB} on the 20-arm instance.}
    \label{fig:20arms}
\end{figure}

\begin{figure}[H]
    \centering
    \includegraphics[width=0.75\linewidth]{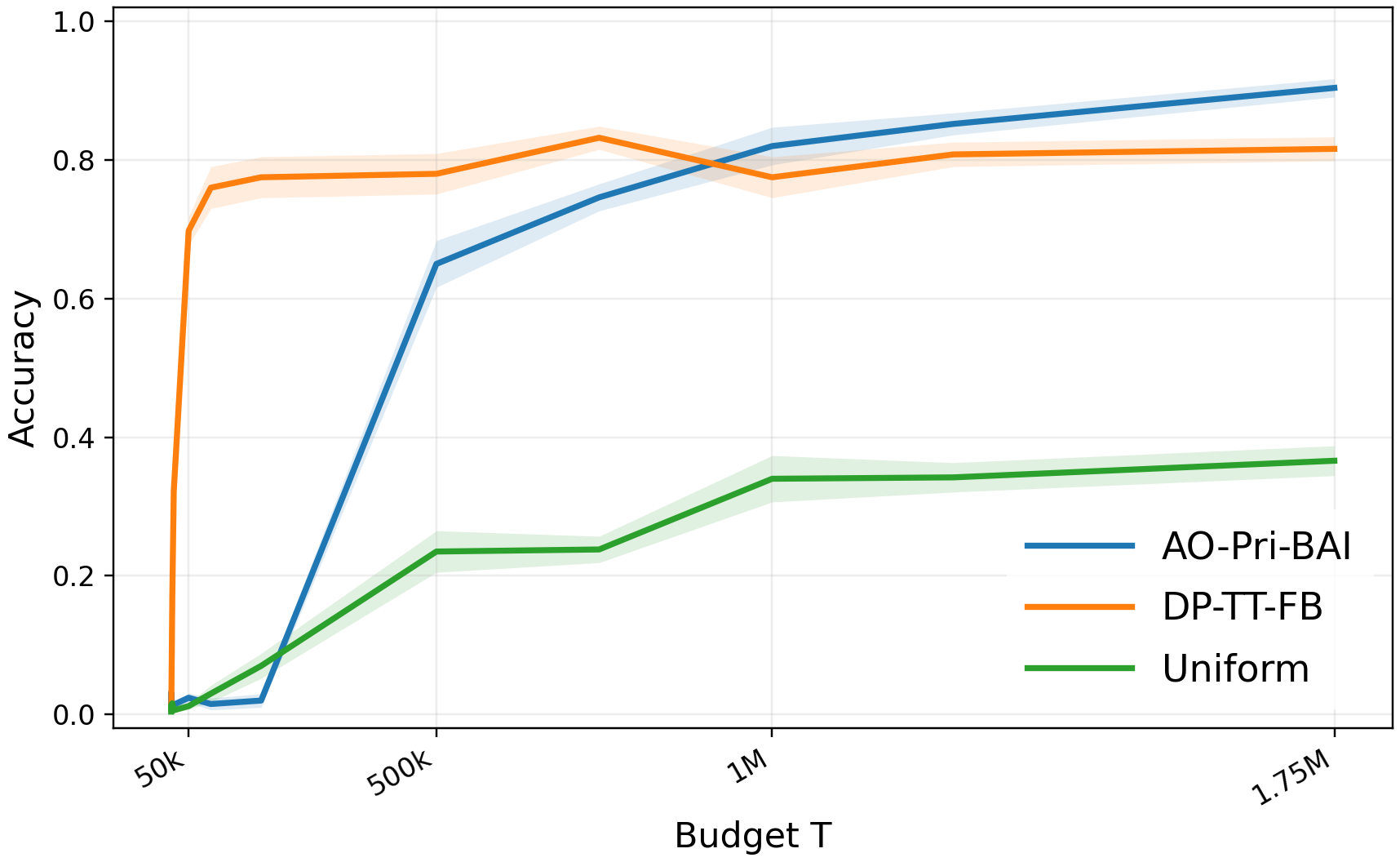}
    \caption{Comparison of \textsc{AO-Pri-BAI} with \textsc{Uniform} and \textsc{DP-TT-FB} on the 100-arm instance.}
    \label{fig:100arms}
\end{figure}

\begin{figure}[H]
    \centering
    \includegraphics[width=0.95\linewidth]{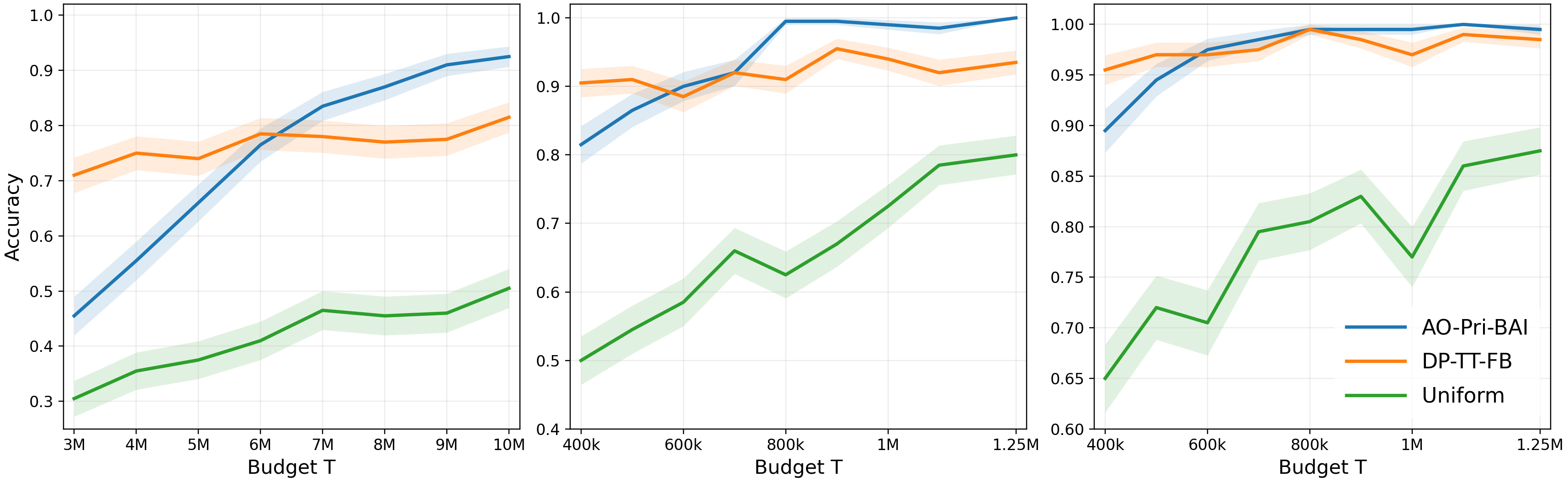}
    \caption{Comparison of \textsc{AO-Pri-BAI} with \textsc{Uniform} and \textsc{DP-TT-FB} on different values of $\epsilon$. From left to right: $\epsilon=0.05, \epsilon=1, \epsilon=5$.}
    \label{fig:vary}
\end{figure}

The third experiment is a series of experiments run on the same instance with varying $\epsilon$ values: The instance has $K=20$ arms. The unique best arm has arm mean $0.85$. There are 4 sub-optimal arms with arm mean $0.845$, and another $15$ sub-optimal arms with arm mean $0.25$. We test the same instance on $\epsilon=0.05, 1, 5$. The problem is getting easier with increasing values of $\epsilon$, because increasing $\epsilon$ means relaxing the privacy constraint. Figure \ref{fig:vary} shows the performance of all algorithms on that instance across different values of $\epsilon$. The plots show that, across various privacy levels, AO-Pri-BAI outperforms DP-TT-FB and Uniform in the long run (when $T$ is large), which is consistent with our asymptotic theoretical analyses. AO-Pri-BAI may be outperformed by its competitiors at the beginning when $T$ is small. This is justified by the fact that our theoretical guarantee is asymptotic, and the good events which allow us to control the error probability take a "burn-in period" (See the proof sketch in Section \ref{sec:upper_bound} for details). However, when $T$ gets larger, AO-Pri-BAI starts to show advantage over DP-TT-FB and Uniform across all $\epsilon$ values tested, empirically supporting our theory of asymptotic optimality.

%% file: Appendix/future_work.tex
\section{Future Work}\label{apx:future_work}

An interesting future work is to generalize our analysis beyond Bernoulli bandits. Here, we provide an insightful discussion that we could generalize to any distribution, possibly non-parametric, supported on $[0,1]$ and subsequently to any distribution with bounded support on $[a,b]$ by scaling our noise term by $(b-a)$. The practical obstacle is the computation of the privacy-aware transportation cost $d_{\epsilon}(\theta_{1},\theta_{2},\lambda)$. We focus on Bernoulli bandits for this work as it admits a closed form solution (Appendix \ref{apx:proof_lipschitz}), which is essential for keeping both the theoretical analysis and the implementation tractable. We lose this benefit beyond Bernoulli bandits. To overcome this obstacle, one possible approach is inspired by \citet{pmlr-v117-riou20a}: We approximate each reward distribution by a categorical distribution supported on $\{0,\frac{1}{M},\frac{2}{M},\dots,1\}$ using the randomized quantization technique provided in Lemma 6 of \citet{pmlr-v117-riou20a}. Specifically, if an observed reward satisfies $r\in[\frac{m}{M},\frac{m+1}{M})$, we replace it by the randomized quantized reward $\tilde{r}=\frac{m+B}{M}$, where $B\sim\mathrm{Ber}(Mr-m)$. By this construction, $\mathbb{E}[\tilde{r}|r]=r$, so the quantization preserves the conditional mean of every observation. The resulting discretized problem can be computed explicitly for both the total variation distance and the $\KL$ divergence between categorical distributions, allowing us to approximate $d_{\epsilon}(\theta_{1},\theta_{2},\lambda)$. As $M$ increases, the quantized model provides an increasingly accurate approximation to the original $[0,1]$-bounded reward model. Hence, it suggests replacing the exact computation of transportation cost in Algorithm \ref{algo:FB-DP-BAI} by its quantized approximation as a natural extension of \textsc{AO-Pri-BAI} to bounded nonparametric reward distributions. As for our lower bound analysis, it is not fundamentally tied to Bernoulli bandits. The proof of Theorem \ref{thm:LowerBound} is built upon a privacy-aware change-of-measure argument expressed directly in terms of total variation and KL divergence between reward distributions. Consequently, we expect it to extend, with only modest technical modifications, to arbitrary reward distributions supported on $[0,1]$.

Several others directions also remain open. First, the distributions used by \textsc{AO-Pri-BAI} for sampling instances are Gibbs distributions defined through the privacy-aware divergence, rather than the exact Bayesian posterior induced by the private observation process. Deriving a closed-form or tractable exact posterior for the composition of Bernoulli rewards, Laplace perturbations, clipping, and adaptive sampling appears challenging, and we leave a more fully Bayesian treatment for future work. Second, our current analysis focuses on the asymptotic error exponent. A sharper non-asymptotic finite-budget theory would be useful for understanding the practical thresholds at which the asymptotic behavior becomes visible. 

%% file: checklist.tex
\section*{NeurIPS Paper Checklist}

\begin{enumerate}

\item {\bf Claims}
    \item[] Question: Do the main claims made in the abstract and introduction accurately reflect the paper's contributions and scope?
    \item[] Answer: \answerYes{} 
    \item[] Justification: The claims made in the abstract and introduction accurately reflect the paper's contributions and scope. They are elaborate in detail in Sections \ref{sec:lower_bound}, \ref{sec:methodology}, \ref{sec:upper_bound}, \ref{sec:experiments}. For all theoretical results stated, they are proved formally in Appendix \ref{apx:proof_lipschitz}, \ref{apx:lowerbound}, \ref{apx:DP}, \ref{apx:upperbound}, \ref{apx:GE}.
    \item[] Guidelines:
    \begin{itemize}
        \item The answer \answerNA{} means that the abstract and introduction do not include the claims made in the paper.
        \item The abstract and/or introduction should clearly state the claims made, including the contributions made in the paper and important assumptions and limitations. A \answerNo{} or \answerNA{} answer to this question will not be perceived well by the reviewers. 
        \item The claims made should match theoretical and experimental results, and reflect how much the results can be expected to generalize to other settings. 
        \item It is fine to include aspirational goals as motivation as long as it is clear that these goals are not attained by the paper. 
    \end{itemize}

\item {\bf Limitations}
    \item[] Question: Does the paper discuss the limitations of the work performed by the authors?
    \item[] Answer: \answerYes{} 
    \item[] Justification: The current limitations and directions that are worth further exploration are stated and discussed in Appendix \ref{apx:future_work}.
    \item[] Guidelines:
    \begin{itemize}
        \item The answer \answerNA{} means that the paper has no limitation while the answer \answerNo{} means that the paper has limitations, but those are not discussed in the paper. 
        \item The authors are encouraged to create a separate ``Limitations'' section in their paper.
        \item The paper should point out any strong assumptions and how robust the results are to violations of these assumptions (e.g., independence assumptions, noiseless settings, model well-specification, asymptotic approximations only holding locally). The authors should reflect on how these assumptions might be violated in practice and what the implications would be.
        \item The authors should reflect on the scope of the claims made, e.g., if the approach was only tested on a few datasets or with a few runs. In general, empirical results often depend on implicit assumptions, which should be articulated.
        \item The authors should reflect on the factors that influence the performance of the approach. For example, a facial recognition algorithm may perform poorly when image resolution is low or images are taken in low lighting. Or a speech-to-text system might not be used reliably to provide closed captions for online lectures because it fails to handle technical jargon.
        \item The authors should discuss the computational efficiency of the proposed algorithms and how they scale with dataset size.
        \item If applicable, the authors should discuss possible limitations of their approach to address problems of privacy and fairness.
        \item While the authors might fear that complete honesty about limitations might be used by reviewers as grounds for rejection, a worse outcome might be that reviewers discover limitations that aren't acknowledged in the paper. The authors should use their best judgment and recognize that individual actions in favor of transparency play an important role in developing norms that preserve the integrity of the community. Reviewers will be specifically instructed to not penalize honesty concerning limitations.
    \end{itemize}

\item {\bf Theory assumptions and proofs}
    \item[] Question: For each theoretical result, does the paper provide the full set of assumptions and a complete (and correct) proof?
    \item[] Answer: \answerYes{} 
    \item[] Justification: Theorem \ref{thm:Lipschitz-dber} and \ref{thm:Lipschitz-dep} are proven in Appendix \ref{apx:proof_lipschitz}. Theorem \ref{thm:LowerBound} is proven in Appendix \ref{apx:lowerbound}. Theorem \ref{thm:DP} is proven in Appendix \ref{apx:DP}. Theorem \ref{thm:UpperBound} is proven in Appendix \ref{apx:upperbound}. The multiple Good Events that Theorem \ref{thm:UpperBound} relies on are all proven in Appendix \ref{apx:GE}.
    \item[] Guidelines:
    \begin{itemize}
        \item The answer \answerNA{} means that the paper does not include theoretical results. 
        \item All the theorems, formulas, and proofs in the paper should be numbered and cross-referenced.
        \item All assumptions should be clearly stated or referenced in the statement of any theorems.
        \item The proofs can either appear in the main paper or the supplemental material, but if they appear in the supplemental material, the authors are encouraged to provide a short proof sketch to provide intuition. 
        \item Inversely, any informal proof provided in the core of the paper should be complemented by formal proofs provided in appendix or supplemental material.
        \item Theorems and Lemmas that the proof relies upon should be properly referenced. 
    \end{itemize}

    \item {\bf Experimental result reproducibility}
    \item[] Question: Does the paper fully disclose all the information needed to reproduce the main experimental results of the paper to the extent that it affects the main claims and/or conclusions of the paper (regardless of whether the code and data are provided or not)?
    \item[] Answer: \answerYes{} 
    \item[] Justification: \textsc{AO-Pri-BAI} has pseudocode stated in Algorithm \ref{algo:FB-DP-BAI}. The implementation details, the experiment settings and the choice of hyperparameters are all stated explicitly in Section \ref{sec:experiments} and Appendix \ref{apx:more_experiments}. We provide our implementation code as supplementary material. The information available is sufficient for one to reproduce the experiments.
    \item[] Guidelines:
    \begin{itemize}
        \item The answer \answerNA{} means that the paper does not include experiments.
        \item If the paper includes experiments, a \answerNo{} answer to this question will not be perceived well by the reviewers: Making the paper reproducible is important, regardless of whether the code and data are provided or not.
        \item If the contribution is a dataset and\slash or model, the authors should describe the steps taken to make their results reproducible or verifiable. 
        \item Depending on the contribution, reproducibility can be accomplished in various ways. For example, if the contribution is a novel architecture, describing the architecture fully might suffice, or if the contribution is a specific model and empirical evaluation, it may be necessary to either make it possible for others to replicate the model with the same dataset, or provide access to the model. In general. releasing code and data is often one good way to accomplish this, but reproducibility can also be provided via detailed instructions for how to replicate the results, access to a hosted model (e.g., in the case of a large language model), releasing of a model checkpoint, or other means that are appropriate to the research performed.
        \item While NeurIPS does not require releasing code, the conference does require all submissions to provide some reasonable avenue for reproducibility, which may depend on the nature of the contribution. For example
        \begin{enumerate}
            \item If the contribution is primarily a new algorithm, the paper should make it clear how to reproduce that algorithm.
            \item If the contribution is primarily a new model architecture, the paper should describe the architecture clearly and fully.
            \item If the contribution is a new model (e.g., a large language model), then there should either be a way to access this model for reproducing the results or a way to reproduce the model (e.g., with an open-source dataset or instructions for how to construct the dataset).
            \item We recognize that reproducibility may be tricky in some cases, in which case authors are welcome to describe the particular way they provide for reproducibility. In the case of closed-source models, it may be that access to the model is limited in some way (e.g., to registered users), but it should be possible for other researchers to have some path to reproducing or verifying the results.
        \end{enumerate}
    \end{itemize}

\item {\bf Open access to data and code}
    \item[] Question: Does the paper provide open access to the data and code, with sufficient instructions to faithfully reproduce the main experimental results, as described in supplemental material?
    \item[] Answer: \answerYes{} 
    \item[] Justification: We provide our implementation code as supplementary material. The reproducibility of our experimental results is as discussed as above. We rely on synthetic experiments whose setting is explicitly described in Section \ref{sec:experiments} and Appendix \ref{apx:more_experiments}, so there is no dataset to be referenced. 
    \item[] Guidelines:
    \begin{itemize}
        \item The answer \answerNA{} means that paper does not include experiments requiring code.
        \item Please see the NeurIPS code and data submission guidelines (\url{https://neurips.cc/public/guides/CodeSubmissionPolicy}) for more details.
        \item While we encourage the release of code and data, we understand that this might not be possible, so \answerNo{} is an acceptable answer. Papers cannot be rejected simply for not including code, unless this is central to the contribution (e.g., for a new open-source benchmark).
        \item The instructions should contain the exact command and environment needed to run to reproduce the results. See the NeurIPS code and data submission guidelines (\url{https://neurips.cc/public/guides/CodeSubmissionPolicy}) for more details.
        \item The authors should provide instructions on data access and preparation, including how to access the raw data, preprocessed data, intermediate data, and generated data, etc.
        \item The authors should provide scripts to reproduce all experimental results for the new proposed method and baselines. If only a subset of experiments are reproducible, they should state which ones are omitted from the script and why.
        \item At submission time, to preserve anonymity, the authors should release anonymized versions (if applicable).
        \item Providing as much information as possible in supplemental material (appended to the paper) is recommended, but including URLs to data and code is permitted.
    \end{itemize}

\item {\bf Experimental setting/details}
    \item[] Question: Does the paper specify all the training and test details (e.g., data splits, hyperparameters, how they were chosen, type of optimizer) necessary to understand the results?
    \item[] Answer: \answerYes{} 
    \item[] Justification: We provide our implementation code as supplementary material. We explain all experiment settings, including default choice of hyperparameters, in Section \ref{sec:experiments} and Appendix \ref{apx:more_experiments}.
    \item[] Guidelines:
    \begin{itemize}
        \item The answer \answerNA{} means that the paper does not include experiments.
        \item The experimental setting should be presented in the core of the paper to a level of detail that is necessary to appreciate the results and make sense of them.
        \item The full details can be provided either with the code, in appendix, or as supplemental material.
    \end{itemize}

\item {\bf Experiment statistical significance}
    \item[] Question: Does the paper report error bars suitably and correctly defined or other appropriate information about the statistical significance of the experiments?
    \item[] Answer: \answerYes{} 
    \item[] Justification: The figures of Section \ref{sec:experiments} and Appendix \ref{apx:more_experiments} plot empirical accuracy of 500 independent trials for each algorithm, with 1 standard error bars based on the standard deviations on empirical accuracy over those trials.
    \item[] Guidelines:
    \begin{itemize}
        \item The answer \answerNA{} means that the paper does not include experiments.
        \item The authors should answer \answerYes{} if the results are accompanied by error bars, confidence intervals, or statistical significance tests, at least for the experiments that support the main claims of the paper.
        \item The factors of variability that the error bars are capturing should be clearly stated (for example, train/test split, initialization, random drawing of some parameter, or overall run with given experimental conditions).
        \item The method for calculating the error bars should be explained (closed form formula, call to a library function, bootstrap, etc.)
        \item The assumptions made should be given (e.g., Normally distributed errors).
        \item It should be clear whether the error bar is the standard deviation or the standard error of the mean.
        \item It is OK to report 1-sigma error bars, but one should state it. The authors should preferably report a 2-sigma error bar than state that they have a 96\% CI, if the hypothesis of Normality of errors is not verified.
        \item For asymmetric distributions, the authors should be careful not to show in tables or figures symmetric error bars that would yield results that are out of range (e.g., negative error rates).
        \item If error bars are reported in tables or plots, the authors should explain in the text how they were calculated and reference the corresponding figures or tables in the text.
    \end{itemize}

\item {\bf Experiments compute resources}
    \item[] Question: For each experiment, does the paper provide sufficient information on the computer resources (type of compute workers, memory, time of execution) needed to reproduce the experiments?
    \item[] Answer: \answerYes{} 
    \item[] Justification: The device used for all implementation and experiment is a personal laptop, described in Appendix \ref{apx:more_experiments}.
    \item[] Guidelines:
    \begin{itemize}
        \item The answer \answerNA{} means that the paper does not include experiments.
        \item The paper should indicate the type of compute workers CPU or GPU, internal cluster, or cloud provider, including relevant memory and storage.
        \item The paper should provide the amount of compute required for each of the individual experimental runs as well as estimate the total compute. 
        \item The paper should disclose whether the full research project required more compute than the experiments reported in the paper (e.g., preliminary or failed experiments that didn't make it into the paper). 
    \end{itemize}
    
\item {\bf Code of ethics}
    \item[] Question: Does the research conducted in the paper conform, in every respect, with the NeurIPS Code of Ethics \url{https://neurips.cc/public/EthicsGuidelines}?
    \item[] Answer: \answerYes{} 
    \item[] Justification: We check and ensure that the research conform in every respect with the NeurIPS Code of Ethics.
    \item[] Guidelines:
    \begin{itemize}
        \item The answer \answerNA{} means that the authors have not reviewed the NeurIPS Code of Ethics.
        \item If the authors answer \answerNo, they should explain the special circumstances that require a deviation from the Code of Ethics.
        \item The authors should make sure to preserve anonymity (e.g., if there is a special consideration due to laws or regulations in their jurisdiction).
    \end{itemize}

\item {\bf Broader impacts}
    \item[] Question: Does the paper discuss both potential positive societal impacts and negative societal impacts of the work performed?
    \item[] Answer: \answerYes{} 
    \item[] Justification: Our work discusses $\epsilon$-differentially private fixed-budget best-arm identification. This work naturally fits into domains where data sensitivity is of paramount importance, such as clinical trials or user studies. Hence, our work can carry a significant positive societal impact.
    \item[] Guidelines:
    \begin{itemize}
        \item The answer \answerNA{} means that there is no societal impact of the work performed.
        \item If the authors answer \answerNA{} or \answerNo, they should explain why their work has no societal impact or why the paper does not address societal impact.
        \item Examples of negative societal impacts include potential malicious or unintended uses (e.g., disinformation, generating fake profiles, surveillance), fairness considerations (e.g., deployment of technologies that could make decisions that unfairly impact specific groups), privacy considerations, and security considerations.
        \item The conference expects that many papers will be foundational research and not tied to particular applications, let alone deployments. However, if there is a direct path to any negative applications, the authors should point it out. For example, it is legitimate to point out that an improvement in the quality of generative models could be used to generate Deepfakes for disinformation. On the other hand, it is not needed to point out that a generic algorithm for optimizing neural networks could enable people to train models that generate Deepfakes faster.
        \item The authors should consider possible harms that could arise when the technology is being used as intended and functioning correctly, harms that could arise when the technology is being used as intended but gives incorrect results, and harms following from (intentional or unintentional) misuse of the technology.
        \item If there are negative societal impacts, the authors could also discuss possible mitigation strategies (e.g., gated release of models, providing defenses in addition to attacks, mechanisms for monitoring misuse, mechanisms to monitor how a system learns from feedback over time, improving the efficiency and accessibility of ML).
    \end{itemize}
    
\item {\bf Safeguards}
    \item[] Question: Does the paper describe safeguards that have been put in place for responsible release of data or models that have a high risk for misuse (e.g., pre-trained language models, image generators, or scraped datasets)?
    \item[] Answer: \answerNA{} 
    \item[] Justification: We only rely on synthetic data. To the best of our knowledge, we believe no such safeguard is necessary.
    \item[] Guidelines:
    \begin{itemize}
        \item The answer \answerNA{} means that the paper poses no such risks.
        \item Released models that have a high risk for misuse or dual-use should be released with necessary safeguards to allow for controlled use of the model, for example by requiring that users adhere to usage guidelines or restrictions to access the model or implementing safety filters. 
        \item Datasets that have been scraped from the Internet could pose safety risks. The authors should describe how they avoided releasing unsafe images.
        \item We recognize that providing effective safeguards is challenging, and many papers do not require this, but we encourage authors to take this into account and make a best faith effort.
    \end{itemize}

\item {\bf Licenses for existing assets}
    \item[] Question: Are the creators or original owners of assets (e.g., code, data, models), used in the paper, properly credited and are the license and terms of use explicitly mentioned and properly respected?
    \item[] Answer: \answerNA{} 
    \item[] Justification: Our work does not use existing assets. Our data is synthetic data and our implementation is done from scratch.
    \item[] Guidelines:
    \begin{itemize}
        \item The answer \answerNA{} means that the paper does not use existing assets.
        \item The authors should cite the original paper that produced the code package or dataset.
        \item The authors should state which version of the asset is used and, if possible, include a URL.
        \item The name of the license (e.g., CC-BY 4.0) should be included for each asset.
        \item For scraped data from a particular source (e.g., website), the copyright and terms of service of that source should be provided.
        \item If assets are released, the license, copyright information, and terms of use in the package should be provided. For popular datasets, \url{paperswithcode.com/datasets} has curated licenses for some datasets. Their licensing guide can help determine the license of a dataset.
        \item For existing datasets that are re-packaged, both the original license and the license of the derived asset (if it has changed) should be provided.
        \item If this information is not available online, the authors are encouraged to reach out to the asset's creators.
    \end{itemize}

\item {\bf New assets}
    \item[] Question: Are new assets introduced in the paper well documented and is the documentation provided alongside the assets?
    \item[] Answer: \answerNA{} 
    \item[] Justification: Our work does not release new assets. The implementation code is provided as a supplementary material.
    \item[] Guidelines:
    \begin{itemize}
        \item The answer \answerNA{} means that the paper does not release new assets.
        \item Researchers should communicate the details of the dataset\slash code\slash model as part of their submissions via structured templates. This includes details about training, license, limitations, etc. 
        \item The paper should discuss whether and how consent was obtained from people whose asset is used.
        \item At submission time, remember to anonymize your assets (if applicable). You can either create an anonymized URL or include an anonymized zip file.
    \end{itemize}

\item {\bf Crowdsourcing and research with human subjects}
    \item[] Question: For crowdsourcing experiments and research with human subjects, does the paper include the full text of instructions given to participants and screenshots, if applicable, as well as details about compensation (if any)? 
    \item[] Answer: \answerNA{} 
    \item[] Justification: Our work does not involve crowdsourcing nor research with human subjects.
    \item[] Guidelines:
    \begin{itemize}
        \item The answer \answerNA{} means that the paper does not involve crowdsourcing nor research with human subjects.
        \item Including this information in the supplemental material is fine, but if the main contribution of the paper involves human subjects, then as much detail as possible should be included in the main paper. 
        \item According to the NeurIPS Code of Ethics, workers involved in data collection, curation, or other labor should be paid at least the minimum wage in the country of the data collector. 
    \end{itemize}

\item {\bf Institutional review board (IRB) approvals or equivalent for research with human subjects}
    \item[] Question: Does the paper describe potential risks incurred by study participants, whether such risks were disclosed to the subjects, and whether Institutional Review Board (IRB) approvals (or an equivalent approval/review based on the requirements of your country or institution) were obtained?
    \item[] Answer: \answerNA{} 
    \item[] Justification: Our work does not involve crowdsourcing nor research with human subjects.
    \item[] Guidelines:
    \begin{itemize}
        \item The answer \answerNA{} means that the paper does not involve crowdsourcing nor research with human subjects.
        \item Depending on the country in which research is conducted, IRB approval (or equivalent) may be required for any human subjects research. If you obtained IRB approval, you should clearly state this in the paper. 
        \item We recognize that the procedures for this may vary significantly between institutions and locations, and we expect authors to adhere to the NeurIPS Code of Ethics and the guidelines for their institution. 
        \item For initial submissions, do not include any information that would break anonymity (if applicable), such as the institution conducting the review.
    \end{itemize}

\item {\bf Declaration of LLM usage}
    \item[] Question: Does the paper describe the usage of LLMs if it is an important, original, or non-standard component of the core methods in this research? Note that if the LLM is used only for writing, editing, or formatting purposes and does \emph{not} impact the core methodology, scientific rigor, or originality of the research, declaration is not required.
    \item[] Answer: \answerYes{}{} 
    \item[] Justification: The authors used OpenAI Codex (GPT-6 Astra, GPT-5.6 Sol) to formulate and refine intermediate lemmas, check algebraic and probabilistic calculations, and improve the exposition. The authors independently verified all mathematic statements, proofs, and references in the final manuscript and take full responsibility for its contents.
    \item[] Guidelines:
    \begin{itemize}
        \item The answer \answerNA{} means that the core method development in this research does not involve LLMs as any important, original, or non-standard components.
        \item Please refer to our LLM policy in the NeurIPS handbook for what should or should not be described.
    \end{itemize}

\end{enumerate}